%% file: 01_colt_main.tex
\documentclass[final,12pt]{colt2026} 

\title{Convergence of Continual Learning in Homogeneous Deep Networks}
\usepackage{times}

\coltauthor{%
 \Name{Matan Schliserman} \Email{schliserman@mail.tau.ac.il}\\
 \addr Blavatnik School of Computer Science and AI, Tel Aviv University
 \AND
 \Name{Gon Buzaglo} \Email{gon.buzaglo@princeton.edu}\\
 \addr Princeton University
 \AND
 \Name{Itay Evron} \Email{itay@evron.me}
 \AND
 \Name{Daniel Soudry} \Email{daniel.soudry@gmail.com}\\
 \addr Department of Electrical and Computing Engineering, Technion%
}

\input{00_custom}

\begin{document}

\maketitle

\begin{abstract} 
We characterize weakly regularized continual classification in homogeneous models as sequential projections onto task margin sets.
This result generalizes prior analyses restricted to either stationary (single-task) deep models or continual linear models. 
We show that global convergence generally fails, even for simple models linear in data but nonlinear in parameters. 
Nevertheless, by leveraging results from nonconvex projection theory, we identify regularity properties of homogeneous deep networks that guarantee local linear convergence under random and cyclic task sequences.
Finally, we extend our analysis to continual regression, unifying the framework for homogeneous models.
\end{abstract}

\begin{keywords}%
  Continual learning, Lifelong learning, Algorithmic bias, Projection algorithms.
\end{keywords}

\input{02_intro_and_setting}

\input{04_convergence}

\input{05_regression}

\section{Discussion}

Our results frame continual learning in homogeneous deep networks as a geometric process governed by sequential projections onto task-induced margin sets, unifying several empirical and theoretical observations. Below, we discuss additional connections to prior work and highlight promising directions for further inquiry.


\paragraph{Geometry as the source of catastrophic forgetting.} 
Our results suggest that \emph{catastrophic} forgetting is a fundamental geometric consequence of nonconvexity. 
Unlike linear models with convex feasible sets \citep{evron2023classification}, homogeneous DNNs induce nonconvex sets where even exact sequential projections may fail to reach a point in the joint feasible set. 
Our lower bound in \cref{sec:squared} demonstrates that this occurs even in low-dimensional models under benign task orderings, proving that nonconvex geometry alone is sufficient to preclude global convergence.

\paragraph{The effect of depth.}
The convergence rates in \Cref{thm:local_convergence,thm:local_convergence_reg} are governed by the ratio $r/G$, where $r$ is the homogeneity degree and $G$ is the Lipschitz constant. 
While depth increases $r$ (see \cref{eq:fc_homogeneity}), the Lipschitz constant $G$ typically grows exponentially with the number of layers \citep[e.g.,][]{ szegedy2014intriguing,virmaux2018lipschitz}.
Consequently, increased depth tends to decrease the $r/G$ ratio, thereby \emph{degrading} the local convergence rate.
This provides a theoretical mechanism for the findings of \citet{guha2024diminishing}, who observe that deeper networks suffer from exacerbated forgetting \citep[see also][]{mirzadeh2021wide, ramasesh2020anatomy}. 
A compelling avenue for future work is to characterize how specific architectures influence the geometry of margin sets (\cref{def:set_class}) and their corresponding regularity moduli.

\paragraph{Characterizing convergence in deep continual models.} 
Empirical studies have shown that task repetition can mitigate forgetting even without explicit algorithmic intervention \citep{lesort2022scaling, hemati2025continual}.
While recent analytical work has explained this in linear models through cyclic or random task orderings \citep{evron2022catastrophic,evron2023classification,evron2025better,swartworth2023nearly,levinstein2025optimal,attia2025fast}, such analyses often rely on the NTK regime to approximate deep models.
Our framework more directly reflects the intrinsic nonconvex dynamics of deep architectures.
We hope the analytical foundation laid here facilitates a deeper understanding of failure modes, the benefits of pretraining, and the conditions for local stability in deep continual learning.







\acks{The research of MS was Funded by the European Research Council (ERC) under the European Union’s Horizon 2020 research and innovation program (grant agreement No.\ 101078075).
Views and opinions expressed are however those of the author(s) only and do not necessarily reflect those of the European Union or the European Research Council. Neither the European Union nor the granting authority can be held responsible for them.
This work received additional support from the Israel Science Foundation (ISF, grant numbers 2549/19 and 3174/23), a grant from the Tel Aviv University Center for AI and Data Science (TAD) and
from the Len Blavatnik and the Blavatnik Family foundation.

GB thanks Elad Hazan and Princeton University for the financial support.

The research of DS was Funded by the European Union (ERC, A-B-C-Deep, 101039436). Views and opinions expressed are however those of the author only and do not necessarily reflect those of the European Union or the European Research Council Executive Agency (ERCEA). Neither the European Union nor the granting authority can be held responsible for them. 
}

\newpage
\bibliography{99_bibliography}

\newpage

\input{90_appendices}

\end{document}

%% file: 00_custom.tex
\usepackage[utf8]{inputenc} 
\usepackage[T1]{fontenc}    
\usepackage{hyperref}       
\usepackage{url}            
\usepackage{booktabs}       
\usepackage{amsfonts}       
\usepackage{nicefrac}       
\usepackage{microtype} 
\usepackage{natbib}
\usepackage{xcolor}         
\usepackage{amsfonts}
\usepackage{mathtools}
\usepackage[capitalize,noabbrev]{cleveref}
\usepackage{dblfloatfix}    
\usepackage[normalem]{ulem} 

\usepackage{algorithm,algorithmic}

\usepackage{amsmath}
\usepackage{amssymb}
\usepackage{bm}

\usepackage{verbatim}
\usepackage{tabto}
\usepackage{hyperref}
\usepackage{enumitem}
\usepackage{xspace}
\usepackage{xcolor}
\usepackage[font=small]{caption}

\usepackage{booktabs}
\usepackage{array}
\usepackage{arydshln}

\makeatletter

\let\c@subfigure\undefined
\makeatother
\usepackage{subcaption} 
\usepackage{graphicx}

\usepackage{aliascnt}

\definecolor{itay}{RGB}{210,30,220}
\definecolor{gon}{RGB}{40,180,220}
\definecolor{daniel}{RGB}{30,200,30}
\definecolor{todo}{RGB}{50,50,200}
\definecolor{red2}{RGB}{220,25,25}

\newcommand{\deleted}[1]{TODO}%

\newcommand{\unnotice}[1]{TODO}

\let\emptyset\varnothing

\newcommand{\ie}{\textit{i.e., }}
\newcommand{\eg}{\textit{e.g., }}

\newcommand{\vect}[1]{\mathbf{#1}}

\newcommand{\doubleN}{\mathbb{N}}

\newcommand{\naturals}{\doubleN^{+}}
\DeclareMathOperator*{\argmin}{argmin}
\DeclareMathOperator*{\argmax}{argmax}  
\newcommand{\vv}{\vect{v}}
           
\newcommand{\vz}{\vect{z}}
\newcommand{\vx}{\vect{x}}

\newcommand{\0}{\mat{0}}

\newcommand{\mat}[1]{\mathbf{#1}}
\newcommand{\X}{\mat{X}}
\newcommand{\W}{\mat{W}}

\newcommand{\I}{\mat{I}}

\newcommand{\B}{\mat{B}}

\newcommand{\x}{\vect{x}}
\newcommand{\vp}{\vect{p}}
\newcommand{\w}{\vect{w}}

\newcommand{\vtheta}{\bm{\theta}}
\newcommand{\vy}{\vect{y}}
\newcommand{\vu}{\vect{u}}
\newcommand{\y}{\vect{y}}
\newcommand{\vd}{\vect{d}}

\def\reals{\mathbb{R}}
\def\R{\mathbb{R}}

\newcommand{\tparagraph}[1]{\vspace{-5pt}\paragraph{#1}}

\newcommand{\hfrac}[2]{{{#1}/{#2}}}
\newcommand{\norm}[1]{\left\Vert{#1}\right\Vert}

\newcommand{\cnt}[1]{\left[{#1}\right]}

\newcommand{\expectation}{\mathop{\mathbb{E}}}

\newcommand{\iprod}[2]{\left\langle{#1},{#2}\right\rangle}
\newcommand{\prn}[1]{\left({#1}\right)}

\newcommand{\Bigprn}[1]{\Big({#1}\Big)}

\newcommand{\tprn}[1]{({#1})}

\newcommand{\smallnorm}[1]{\Vert{#1}\Vert}
\newcommand{\tnorm}[1]{\smallnorm{#1}}
\newcommand{\bignorm}[1]{\big\Vert{#1}\big\Vert}

\newcommand{\dist}{{d}}

\newcommand{\proj}{\bm{\Pi}}

\newcommand{\homogendeg}{r}
\newcommand{\paramdim}{p}
\newcommand{\ordcyc}{\tau_{\text{cyc}}}
\newcommand{\ordrnd}{\tau_{\text{iid}}}

\newcommand{\N}{\mathbb{N}}

\renewcommand{\epsilon}{\varepsilon}

\newcommand{\vecind}[2]{{#1}[{#2}]}

\newcommand{\opt}{\Theta^\star}

\usepackage{bm}

\newcommand{\ball}{\mathcal{B}}

\crefname{equation}{Eq.}{Eq.}

\newaliascnt{lemma}{theorem}
\newtheorem{lemma}[lemma]{Lemma}
\aliascntresetthe{lemma}

\newaliascnt{corollary}{theorem}
\newtheorem{corollary}[corollary]{Corollary}
\aliascntresetthe{corollary}

\newaliascnt{definition}{theorem}
\newtheorem{definition}[definition]{Definition}
\aliascntresetthe{definition}

\newaliascnt{proposition}{theorem}
\newtheorem{proposition}[proposition]{Proposition}
\aliascntresetthe{proposition}

\newaliascnt{claim}{theorem}
\newtheorem{claim}[theorem]{Claim}
\aliascntresetthe{claim}

\newcommand{\fset}{\mathcal{C}}

\newcommand{\interset}{\fset^\star}
\newcommand{\riterate}[1]{\Theta_{#1}^{(\lambda)}}
\newcommand{\rjiterate}[1]{\Theta_{#1}^{(\lambda_j)}}
\newcommand{\piterate}[1]{\bar{\Theta}_{#1}}

\RequirePackage{thmtools}

\RequirePackage{crossreftools}

\newtheorem{assumption}{Assumption}

\crefname{claim}{Claim}{Claims}
\crefname{assumption}{Assumption}{Assumptions}

\usepackage{titletoc} 
\newcommand{\appendixtableofcontents}{%
  \printcontents[app]{l}{1}{\section*{Appendix Contents}}
}

\makeatletter
\renewenvironment{proof}[1][]{%
  \par
  \normalfont \topsep6\p@\@plus6\p@\relax
  \trivlist
  \item[\hskip\labelsep\bfseries
    \proofname
    \if\relax\detokenize{#1}\relax\else\ #1\fi 
    \@addpunct{.}]\ignorespaces
}{%
  \jmlrQED\endtrivlist
}
\makeatother



%% file: 02_intro_and_setting.tex
\section{Introduction}

Continual learning focuses on training models on a sequence of tasks to incrementally accumulate expertise. 
This paradigm has gained traction with large-scale foundation models, where full retraining is often prohibitive due to computational costs, privacy regulations, or data retention constraints. 
In these settings, models must be finetuned sequentially as data arrives, while avoiding catastrophic forgetting \citep{robins1995rehearsal,van2025continual} and maintaining plasticity for future learning \citep{dohare2024plasticity}. 
Alongside the development of practical algorithms (\eg \citealt{qiao2024prompt,behrouz2025nested}; see \citealt{yang2025recent} for a survey), 
deepening our theoretical understanding of the underlying dynamics remains essential. 
We address this by developing analytical tools specifically for continual learning in homogeneous deep neural networks (DNNs).

Theoretical literature has largely centered on continual linear regression due to its tractability, shedding light on critical aspects of the field: 
convergence under various task orderings \citep{evron2022catastrophic,evron2025better,swartworth2023nearly,attia2025fast}, 
task similarity \citep[e.g.,][]{Asanuma_2021,lin2023theory,li2025optimal,tsipory2025greedy}, 
regularization \citep[e.g.,][]{zhao2024statistical,levinstein2025optimal,karpel2026regularization}, 
overparameterization \citep{goldfarb2023analysis,goldfarb2024theJointEffect}, 
and algorithmic effects \citep{doan2021NTKoverlap,peng2022featureExtract,peng2023ideal}.
Other notable works study continual linear classification, primarily deriving convergence guarantees under explicit or implicit regularization
\citep{evron2023classification,jung2025convergence}.
While illuminating, linear analysis cannot fully account for the complex, nonlinear dynamics of modern DNNs.


\pagebreak

Some recent works analyze continual learning in \emph{nonlinear} models, largely focusing on simplified settings, such as two-layer models that enable precise analyses \citep{lee2021taskSimilarity,lee2022maslow,li2025twolayer,taheri2025theory}. 
Others leverage perturbative, mean-field, or scaling-limit techniques, including regimes that interpolate between lazy and feature-learning dynamics, but are typically restricted to shallow architectures or infinite-width limits \citep{shan2024order,graldi2025lazy}.
As a result, depth influences forgetting primarily through static feature statistics or last-layer adaptation. In contrast, our work analyzes continual learning in homogeneous deep networks of arbitrary depth, with all layers trained and without relying on lazy-training or infinite-width assumptions, thereby capturing fully nonlinear feature evolution across layers.


A popular approach to stabilizing continual models and preventing forgetting is the use of explicit regularization in the parameter space \citep[e.g.,][]{kirkpatrick2017ewc}.
These methods aim to bias the optimization toward previous iterates via:
\begin{align}
\label{eq:regularized_cl}
\riterate{t}
\leftarrow
{\argmin}_{\Theta}
\big\{
\mathcal{L}_{t} (\Theta)
+
\lambda \bignorm{\Theta - \riterate{t-1}}^2_{\B_t}
\big\}
\,,
\end{align}
where $\B_{t} \succeq 0$ skews the regularization toward directions deemed ``important'' for previous tasks---\eg based on their Fisher information \citep[see][]{benzing2021unifying_regularization}. 
Interestingly, even isotropic regularization ($\B_{t}=\I$) has proven beneficial both practically \citep[e.g.,][]{lubana2021regularization,smith2022closer} and theoretically \citep{li2023fixed,levinstein2025optimal,karpel2026regularization}. 
Our work adopts a standard analytical framework with \emph{weak} isotropic regularization in the limit as $\lambda \downarrow 0$, as detailed next.

The limit of \emph{weak} regularization is an influential regime for analyzing algorithmic biases---extending beyond continual learning---due to its tractability and connections to minimum-norm and max-margin solutions \citep[e.g.,][]{hastie2022surprises, aubin2020generalization}.
Specifically, in ``traditional'' stationary classification, \citet{rosset2004margin} proved that weakly regularized \emph{linear} models converge\footnote{
In this introduction, we say that 
$\vtheta^{(\lambda)}$  converges in direction to 
$\bar\vtheta$
if
$\lim_{\lambda\downarrow 0}
\frac{\vtheta^{(\lambda)}}{\tnorm{\vtheta^{(\lambda)}}}
=\frac{\bar\vtheta}{\tnorm{\bar\vtheta}}
$.
} to their max-margin counterparts. 
That is, linear models trained with margin-based losses---\eg the logistic loss---on separable data recover the Hard-Margin SVM solution as regularization vanishes:
\begin{align}
\vtheta^{(\lambda)}
=
\argmin_{\vtheta}
\big\{
\mathcal{L}(\vtheta)
+
\lambda \norm{\vtheta}^2
\big\}
~
\xrightarrow[\quad\lambda\downarrow 0\quad]{\text{\,in direction\,}}
~
\argmax_{\tnorm{\bar\vtheta}\le 1}
\min_{i}
y_i {\bar\vtheta}^{\top}\x_i
=
\!\!\!
\argmin_{y_i {\bar\vtheta}^{\top}\x_i \ge 1,\; \forall i}
\!\!\!\!
{\tnorm{\bar\vtheta}}
\,
.
\end{align}
A later work by \citet{Wei2019} generalized this to DNNs, showing that 
the normalized margin of a weakly-regularized homogeneous model $f$ converges to the max margin:
\begin{align}
\min_{i} y_{i} 
f(\x_{i};\tfrac{\riterate{}}{\tnorm{\riterate{}}})
~
\xrightarrow[\quad\lambda\downarrow 0\quad]{}
~
\max_{\tnorm{\Theta}\le 1}
\min_{i} y_{i} 
f(\x_{i};\Theta)
\,.
\end{align}

In continual learning, \citet{evron2023classification} used weak regularization to study jointly-separable linear classification. 
They showed an equivalence between the iterates of weakly regularized continual linear classification and those of a sequential margin-separating projection algorithm.
That is, defining a convex ``margin set''
$\fset_{t}
\triangleq
\big\{
\bar\vtheta
~\big|~
y_i^{(t)} {{\bar\vtheta}}^{\top}\x_i^{(t)}
\ge 1,
\ \forall i\}$, they showed:
\begin{align}
\Bigprn{
\vtheta_{t}^{(\lambda)}
=
\argmin_{\vtheta}
\big\{
\mathcal{L}_{t}(\vtheta)
+
\lambda \tnorm{\vtheta-\vtheta_{t-1}^{(\lambda)}}^2
\big\}
}_{t}
~
\xrightarrow[\quad\lambda\downarrow 0\quad]{\text{\,in direction\,}}
~
\Bigprn{
\bar{\vtheta}_{t}
=
\argmin_{\bar{\vtheta}\in \fset_{t}}
\bignorm{\bar{\vtheta}-\bar{\vtheta}_{t-1}}^2
}_{t}
\,.
\end{align}

We complete the analysis landscape 
by proving that weakly-regularized continual learning with homogeneous DNNs projects onto \emph{nonconvex} margin sets, 
$\fset_{t}
\triangleq
\big\{
\bar\Theta
~\big|~
y_i^{(t)} 
f\tprn{\x_i^{(t)}; \bar\Theta}
\ge 1,
\ \forall i\}$:
\begin{align}
\Bigprn{
\riterate{t}
=
\argmin_{\Theta}
\big\{
\mathcal{L}_{t}(\Theta)
+
\lambda \tnorm{\Theta-\riterate{t-1}}^2
\big\}
}_{t}
\,
\xrightarrow[\quad\lambda\downarrow 0\quad]{\text{\,in direction\,}}
\,
\Bigprn{
\piterate{t}
\in
\argmin_{\piterate{}\in \fset_{t}}
\bignorm{\piterate{}-\piterate{t-1}}^2
}_{t}
\,.
\end{align}

\begin{table}[t]
\centering
\small 
\vspace{-1.05em}
\caption{Landscape of weak-regularization analysis across learning regimes and model classes.}
\vspace{-.25em}
\label{tab:comparison}
\begin{tabular}{lcc}
\toprule
& \textbf{Linear models} & \textbf{Homogeneous DNNs} \\
\midrule
\textbf{Stationary setting} 
& \citet{rosset2004margin}
& \citet{Wei2019} \\
\textbf{Continual learning} 
& \citet{evron2023classification} 
& {Our work} (2026) \\
\bottomrule
\end{tabular}
\vspace{-0.8em}
\end{table}


After establishing an equivalence between continual classification in homogeneous models and sequential projections, we first show that, unlike previous results for linear models, global convergence is generally not guaranteed in homogeneous models.
Encouragingly, we prove \emph{local} convergence guarantees for these models through tools from the literature on nonconvex projections \citep[e.g.,][]{lewis2008alternating,dao2019linear}. Lastly, we demonstrate the generality of these tools by extending the analysis from classification to regression. 
Overall, these tools provide a principled framework for understanding continual learning in deep neural networks. 

\paragraph{Summary of contributions.}
The main contributions of this paper are:
\begin{enumerate}[leftmargin=0.45cm, itemindent=0cm, itemsep=0pt,labelsep=0.2cm,topsep=3pt]

\item We prove that weakly-regularized continual learning with homogeneous DNNs implicitly performs sequential margin projections.

\item We show that, unlike in prior work on linear models, these projection sets are not necessarily convex, leading to qualitative differences between settings.

\item We demonstrate that global convergence is not guaranteed; consequently, forgetting can be \emph{catastrophic}, even for a simple homogeneous model with only $4$ parameters and $2$ tasks.

\item Bridging to the nonconvex projection literature, we establish local convergence guarantees for random and cyclic task orderings, with rates depending on model Lipschitzness and smoothness.

\item Finally, we extend our results to continual regression in homogeneous models, providing a unified framework for both classification and regression. 

\end{enumerate}

\section{Continual Classification in Homogeneous DNNs as Sequential Projections}
\label{sec:setup}

\paragraph{Notation.}
We denote $\cnt{n}\triangleq\{1,2,\dots,n\}$,
$[z]_{+}\triangleq \max\{0, z\}$,
and $
\ball_{\delta}(\vu)
\triangleq \{\vv~|~ \|\vv-\vu\|\le \delta\}
$.
We call a function $f:\reals^{\paramdim}\to\reals$ \textit{positively homogeneous} of degree $\homogendeg\in \reals$ if
$f(c \vz)=c^{\homogendeg}f(\vz),
\forall c > 0$.

\paragraph{Models.}
Throughout the paper we consider $\homogendeg$-positively-homogeneous models parametrized by weights \(\Theta\in\reals^\paramdim\).
For example, a fully connected neural network with $L\ge 1$ layers, no bias terms, and a positively homogeneous activation function of degree $1$ (such as ReLU or leaky ReLU) is $L$-positively-homogeneous in its parameters 
$\Theta=\prn{\W_1, \dots, \W_{L}}$:
\begin{align}
\label{eq:fc_homogeneity}
f(\x;\Theta)
=
\W_{L} \sigma \!\prn{
\W_{L-1} \sigma\!\prn{
\cdots
\sigma \prn{
\W_{1} \x
}
}
}
\quad
\Longrightarrow
\quad
f(\x;c\Theta)
=
c^{L}f(\cdot;\Theta),
\quad
\forall c>0
\,.
\end{align}

In this section, we focus on binary classification to establish our fundamental result: weakly regularized homogeneous models in continual learning converge to a sequential projection algorithm. We extend these results to regression in \cref{sec:regression}. In both settings, the projection perspective bridges our analysis to existing literature on nonconvex projections, facilitating rigorous convergence results.

\subsection{Setting: Continual Classification}
\label{sec:setting_classification}

\paragraph{Learning setup.}
We consider a setup with $M$ classification tasks 
$\prn{\X^{(1)},\y^{(1)}},\dots, \prn{\X^{(M)},\y^{(M)}}$
consisting of feature matrices $\X^{(m)}\in\reals^{n_{m}\times d}$
and binary labels $\y^{(m)}\in\{-1,1\}^{n_m}$.
The learner is exposed to tasks sequentially in $k\in\naturals$ iterations according to a \emph{task ordering} $\tau:\cnt{k}\to\cnt{M}$, which are commonly assumed to be cyclic or random, to capture adversarial vs.\ nonadversarial behaviors \citep{evron2022catastrophic,evron2023classification,swartworth2023nearly,levinstein2025optimal}. 
That is, for iteration $t\in \cnt{k}$,
\begin{align}
\ordcyc(t) = (\prn{t-1} \bmod M) + 1,
\qquad
\ordrnd(t) \stackrel{\text{i.i.d.}}{\sim} \mathrm{Unif}\{1,\dots,M\}
\,.
\end{align}


Our analysis assumes the model can separate all $M$ training sets. 
This assumption follows prior theoretical work on continual linear classification \citep{evron2023classification,jung2025convergence}, where it is justified by standard teacher assumptions or high-dimensional randomness \citep[e.g.,][]{cover1965geometrical}. 
For the deeper architectures considered here, joint separability is a significantly milder requirement; sufficiently expressive networks can interpolate finite datasets under general conditions, a phenomenon rooted in universal approximation results \citep{cybenko1989approximation}.

\begin{assumption}[Joint Separability]
\label{asm:overparam-separability}
Define the individual feasible sets
\begin{equation}
    \label{def:set_class}
\fset_{m}\triangleq
\Big\{
\bar\Theta
~\Big|~
y_{i}^{(m)}
f(\x_i^{(m)};\bar\Theta)
\ge 1,
\ \forall i \in \cnt{n_{m}}
\Big\},\qquad
\forall m \in \cnt{M}
\,.
\end{equation}
We assume the intersection, \ie the joint feasible set, is nonempty:
$\interset \triangleq \fset_{1}\cap\dots\cap\fset_{M}
\neq \emptyset$.
\end{assumption}

\paragraph{Metrics.}
We follow the projection analysis for continual classification in \citet{evron2023classification} and derive results for three quantities of interest:
\begin{enumerate}[leftmargin=0.5cm, itemsep=3pt,itemindent=0cm,labelsep=0.2cm]
\item \textbf{Distance to joint feasible set:}
\hfill
$\dist(\bar\Theta, \interset)
=
\min_{\bar\Theta^{\star} \in \interset}
\tnorm{\bar\Theta - \bar\Theta^{\star}}$.

\item \textbf{Distance to individual feasible set:}
\hfill
$\forall m\in\cnt{M},~
\dist(\bar\Theta, \fset_{m})
=
\min_{\bar\Theta{'} \in \fset_{m}}
\tnorm{\bar\Theta - \bar\Theta{'}}$.

\item \textbf{Forgetting:}
At iteration $t$, we define the forgetting of a task previously seen at iteration $t'\le t$ as the maximal hinge loss over its training points:
\hfill
$
F_{\tau(t')} (\bar\Theta_{t}) = 
\max_{i}
\big[
1-y_{i} f(\x_i^{\tau(t')};\bar\Theta_{t})
\big]_{+}.
$
\end{enumerate}
As noted by \citet{evron2023classification}, the hinge loss possesses favorable properties for analyzing continual learning. 
First, we will show that immediately after learning task $m$, the iterate resides in the feasible set, $\bar{\Theta} \in \fset_m$; consequently, the forgetting of the most recent task is zero. 
More importantly, lower forgetting---quantified here by a lower hinge loss---implies improved training margins, which are often tied to better generalization.


\subsection{Fundamental Result: Continual Classification to Sequential Margin Projections}
\label{sec:main_thm}
\paragraph{Learning algorithm.}
We consider a learner that minimizes the margin-based logistic loss,\footnote{
While we measure forgetting using the hinge loss---which can reach zero once the margin is sufficiently large---the learner optimizes the logistic loss. 
This choice aligns our model with standard deep learning practices and is made for simplicity (\ie our analysis can be extended to other common loss functions).}
defined for each task $m\in\cnt{M}$ as
\begin{equation}
\mathcal L_m(\Theta)
\triangleq
\frac{1}{n_m}
\sum_{i=1}^{n_m}\log\prn{1+\exp(-\,y_i^{(m)} f(\x_i^{(m)};\Theta))}.
\end{equation}
In \cref{alg:regularized_cl} we adopt a common strategy in continual learning by regularizing parameter shifts to prevent catastrophic forgetting. 
Such methods effectively trade off model plasticity for increased stability. 
While several prominent works utilize weighted $L_2$ regularization
(often guided by Fisher information) to protect task-specific parameters \citep{kirkpatrick2017ewc,zenke2017continual,aljundi2018memory,benzing2021unifying_regularization}, we focus on isotropic $L_2$ regularization. 
This variant offers a lower memory footprint and greater analytical tractability while remaining empirically competitive \citep{Hsu2018Baselines,lubana2021regularization,smith2022closer}. Furthermore, isotropic regularization has become a focal point of recent theoretical inquiries into model consolidation \citep[e.g.,][]{li2023fixed,levinstein2025optimal,karpel2026regularization}.


Formally, in \cref{alg:regularized_cl} we study the regularization at the limit as $\lambda\downarrow0$ and show that its iterates converge to the sequential projections of \cref{alg:sequential_mm}.

\vspace{-1.1em}
\begin{figure}[h]
    \centering
    \begin{minipage}{0.5\textwidth}
        \begin{algorithm}[H]
          \caption{Regularized Continual Learning
          \label{alg:regularized_cl}}
        \begin{algorithmic}[0]
           \STATE {
           \hspace{-1em}
           \textbf{Initialization:} 
           $\riterate{0}=\0$
           } 
           \STATE { 
           \hspace{-1em}
           \textbf{For each} iteration $t=1,\dots,k$: 
           }
           \STATE {
            \vspace{2pt}
                \hspace{-0.3em}%
                $\displaystyle
                  \riterate{t} \gets
                  \argmin_{\Theta}
                  \big\{
                    \mathcal{L}_{\tau(t)} (\Theta)
                    +
                    \lambda \bignorm{\Theta - \riterate{t-1}}^2
                  \big\}
                $
              }%
            \vspace{3pt}
           \STATE {
            \parbox[c][1.7em][c]{\linewidth}{%
           \hspace{-1em}
           \textbf{Output:} $\riterate{k}$
           }
           } 
        \end{algorithmic}
        \end{algorithm}
    \end{minipage}
    \hfill
    \begin{minipage}{0.49\textwidth}
        \begin{algorithm}[H]
          \caption{Sequential Margin Projections
          \label{alg:sequential_mm}}
        \begin{algorithmic}
        \vspace{0.25em}
           \STATE {
           \hspace{-1em}
           \textbf{Initialization:}
           $\piterate{0}=\0$
           } 
           \STATE { 
           \hspace{-1em}
           \textbf{For each} iteration $t=1,\dots,k$: 
           }
           \STATE {
            \vspace{2pt}
              \vspace{3pt}
               \hspace{-.3em}
               $\displaystyle
                    \piterate{t} 
                    \gets
                    \proj_{\tau(t)}\tprn{\piterate{t-1}}
                    \triangleq
                    \argmin_{\bar{\Theta}
                    \in\mathcal \fset_{\tau(t)}}
                    \bignorm{\bar{\Theta}-\piterate{t-1}}^2
                    $
                }
            \vspace{-2pt}
           \STATE {
            \parbox[c][1.6em][c]{\linewidth}{%
           \hspace{-1em}
           \textbf{Output:} $\piterate{k}$
           }
           } 
        \end{algorithmic}
        \end{algorithm}
    \end{minipage}
\end{figure}
\vspace{-1em}

\begin{theorem}[Weakly-Regularized CL $\to$ Sequential Margin Projections]
\label{thm:gen_m_convergence}
Consider an \linebreak $r$-positively-homogeneous model $f(\cdot;\Theta) : \mathcal{X} \to \mathbb{R}$, \ie $f(x; c\Theta) = c^r f(x;\Theta)$ for all $c > 0$. 
Assume individual separability, \ie nonempty feasible sets $\fset_{1}, \dots, \fset_{M}$ (implied by \Cref{asm:overparam-separability}). 
Then, as $\lambda\downarrow 0 $, \cref{alg:regularized_cl} trained with the logistic loss aligns with \Cref{alg:sequential_mm}.
\linebreak 
That is, for every iteration $t \in [k]$, any sequence $\lambda \downarrow 0$ admits a subsequence $(\lambda_j)$ and a point $\Bar{\Theta}_t\in \proj_{\tau(t)}\tprn{\piterate{t-1}}\triangleq
    \argmin_{\bar{\Theta}
    \in\mathcal \fset_{\tau(t)}}
    \bignorm{\bar{\Theta}-\piterate{t-1}}^2$ such that
\[
    \frac{\Theta_t^{(\lambda_j)}}{\bignorm{\Theta_t^{(\lambda_j)}}} 
    \xrightarrow{\quad j \to \infty \quad} 
    \frac{\bar{\Theta}_t}{\bignorm{
        \bar{\Theta}_t
        \vphantom{\Theta_t^{(\lambda_j)}} 
    }}.
\]
\end{theorem}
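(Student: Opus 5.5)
The plan is to prove, by induction on $t$, a statement stronger than directional convergence: a convergence statement at a fixed, explicit scale. Set
\[
\rho_\lambda \triangleq \big(\log(1/\lambda)\big)^{1/r}, \qquad\text{so that}\qquad \exp\!\big(-\rho_\lambda^{r} q\big)=\lambda^{q}\ \text{ for every } q.
\]
\textbf{Inductive claim.} Every sequence $\lambda\downarrow 0$ has a subsequence $(\lambda_j)$ along which $\Theta_t^{(\lambda_j)}/\rho_{\lambda_j}\to\bar\Theta_t$ for some $\bar\Theta_t\in\proj_{\tau(t)}\tprn{\bar\Theta_{t-1}}$, where $\bar\Theta_{t-1}$ is the limit produced at step $t-1$ along an enclosing subsequence. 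The base case is trivial, since $\riterate{0}=\bar\Theta_0=\0$. The theorem then follows by normalizing, because every $\bar\Theta_t\in\fset_{\tau(t)}$ is nonzero: by homogeneity $f(\cdot;\0)=0$, so $\0$ cannot satisfy the margin constraints. Throughout I assume $f(\x;\cdot)$ is continuous in $\Theta$ and that the minimizers in \cref{alg:regularized_cl} exist. I also fix the nested subsequences step by step so that all earlier limits hold simultaneously.

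\textbf{Rescaled objective.} Fix $t$ and write $m=\tau(t)$. Substitute $\Theta=\rho_\lambda\Phi$ and let $q(\Phi)\triangleq\min_i y_i^{(m)}f(\x_i^{(m)};\Phi)$. By homogeneity, $y_if(\x_i;\rho_\lambda\Phi)=\rho_\lambda^r y_if(\x_i;\Phi)$, and $\log(1+e^{-z})$ lies between $\tfrac12 e^{-z}$ and $e^{-z}$ for $z\ge0$. Hence the loss term is sandwiched between $c\,\lambda^{q(\Phi)}$ and $\lambda^{q(\Phi)}$ (up to the factor $1/n_m$) whenever $q(\Phi)\ge 0$, and it is bounded below by a constant when $q(\Phi)<0$. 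The regularizer becomes $\lambda\rho_\lambda^2\,\norm{\Phi-\Phi_{t-1}^{(\lambda)}}^2$, where $\Phi_{t-1}^{(\lambda)}\triangleq\riterate{t-1}/\rho_\lambda\to\bar\Theta_{t-1}$ by the inductive hypothesis.

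\textbf{Upper bound from a competitor.} Let $P\in\proj_m(\bar\Theta_{t-1})$; this exists because $\fset_m$ is closed and nonempty. For $\varepsilon>0$, use the competitor $(1+\varepsilon)P$, whose margin is at least $(1+\varepsilon)^r>1$. Its objective value is at most
\[
\lambda^{(1+\varepsilon)^r}+\lambda\rho_\lambda^2\big(\norm{P-\bar\Theta_{t-1}}+\varepsilon\norm{P}+o(1)\big)^2 .
\]
Since $\rho_\lambda$ grows only polylogarithmically, $\lambda^{(1+\varepsilon)^r}=o(\lambda\rho_\lambda^2)$. So the optimal value is $\lambda\rho_\lambda^2\big(D_\varepsilon^2+o(1)\big)$, where $D_\varepsilon\triangleq\norm{P-\bar\Theta_{t-1}}+\varepsilon\norm{P}$.

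\textbf{Consequences for the minimizer.} First, the regularizer of the minimizer $\Phi^{(\lambda)}$ is at most this value, so $\norm{\Phi^{(\lambda)}-\Phi_{t-1}^{(\lambda)}}\le D_\varepsilon+o(1)$. Hence $\Phi^{(\lambda)}$ is bounded and has a convergent subsequence with some limit $\Phi^\star$. Second, suppose $q(\Phi^\star)<1$. By continuity, $q(\Phi^{(\lambda_j)})\le 1-\delta$ eventually for some $\delta>0$. Then the loss term is at least $c\lambda^{1-\delta}$, and $c\lambda^{1-\delta}/(\lambda\rho_\lambda^2)=c\lambda^{-\delta}/\rho_\lambda^2\to\infty$, contradicting the upper bound. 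Therefore $\Phi^\star\in\fset_m$. Third, combining the two facts and letting $\varepsilon\downarrow0$ gives $\norm{\Phi^\star-\bar\Theta_{t-1}}\le\norm{P-\bar\Theta_{t-1}}=\dist(\bar\Theta_{t-1},\fset_m)$. So $\Phi^\star\in\proj_m(\bar\Theta_{t-1})$, which closes the induction with $\bar\Theta_t\triangleq\Phi^\star$.

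\textbf{Main obstacle.} The delicate step is choosing the scale $\rho_\lambda$ and comparing exponents. The loss must be exponentially small in $\lambda$ exactly at margin $1$, and the regularizer weight $\lambda\rho_\lambda^2$ must sit strictly between $\lambda^{1+\varepsilon}$ and $\lambda^{1-\delta}$ for all small $\varepsilon,\delta$. The choice $\rho_\lambda^r=\log(1/\lambda)$ achieves this. A second, bookkeeping issue is that the projections onto the nonconvex sets are set-valued. The induction must carry a specific limit point $\bar\Theta_{t-1}$ chosen along a nested subsequence, rather than a unique projection; this is why the theorem is stated with subsequences.
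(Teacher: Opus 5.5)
Your proposal is correct and follows the paper's proof. It uses the same scale $(\log(1/\lambda))^{1/r}$, the same rescaled objective, and the same nested-subsequence induction; your competitor bound, feasibility-by-blow-up argument, and boundedness step are exactly the limsup, liminf, and equicoercivity ingredients the paper feeds into the fundamental theorem of $\Gamma$-convergence, which you simply carry out by hand (the $(1+\varepsilon)P$ competitor is harmless but unnecessary, since $P$ itself already has loss at most $\lambda=o(\lambda\rho_\lambda^2)$, as in the paper's constant recovery sequence).
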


The full proof is provided in \cref{app:proof_main}. 
Below, we describe our proof techniques, contrast them with prior work, and outline the core ideas behind the derivation.

\paragraph{Comparison to prior work.} 
As discussed in the introduction, the limit of weak regularization is a standard analytical tool. 
In stationary classification---equivalent to the \emph{first} iteration of \cref{alg:regularized_cl,alg:sequential_mm}---weakly-regularized solutions have been linked to max-margin solutions in both linear \citep{rosset2004margin} and homogeneous models \citep{Wei2019}. 
More recently, \citet{evron2023classification} extended this framework to \emph{continual} linear classification.
Similar to our approach, they adjust the projection reference point to capture how expertise accumulates across tasks. 
Our work completes this landscape by analyzing weakly regularized continual learning in the broader class of nonlinear, homogeneous DNNs (see \cref{tab:comparison}).

Technically, our proof of \cref{thm:gen_m_convergence} differs from previous approaches. 
For instance, while \citet{evron2023classification} rely on convex geometry and KKT optimality, these tools do not readily extend to our setting because the feasible sets induced by DNNs are nonconvex. 
Furthermore, \citet{Wei2019} rely on the pointwise convergence of the loss to a fixed limit. This notion of convergence is insufficient for the continual setting, where the regularization term is dynamic and defined relative to the shifting limit of the previous task.


To address these challenges, we employ the framework of $\Gamma$-convergence. 
This allows us to handle nonconvex geometry by analyzing the global minimizers of a sequence of evolving functionals. 


%
\begin{definition}[$\Gamma$-convergence; {\citealp[Theorem 2.1]{braides2006handbook}}]
Let $G_n:
\R^\paramdim \to(-\infty,+\infty]$ and \linebreak
$G:\R^\paramdim\to(-\infty,+\infty]$.
We say that $G_n$ $\Gamma$-converges to $G$ if:
\begin{enumerate}[label=(\alph*),leftmargin=*,itemsep=0pt]
\item For every $\Theta\in \R^\paramdim $ and every sequence $\Theta_n\to \Theta$,
$
G(\Theta)\le \liminf_{n\to\infty} G_n(\Theta_n).
$
\item For every $\Theta\in \R^\paramdim $ there exists a sequence $\Theta_n\to \Theta$ such that
$
G(\Theta)\ge \limsup_{n\to\infty} G_n(\Theta_n).
$
\end{enumerate}
\end{definition}


The key property of $\Gamma$-convergence is that the $\Gamma$-limit of a sequence of functionals ensures the convergence of their respective minimizers to a minimizer of the limit, as formalized next.

\begin{lemma}[Fundamental Theorem of $\Gamma$-convergence; {\citealp[Theorem 2.10]{braides2006handbook}}]
\label{thm:fundamental_gamma}
Let
\linebreak
$(G_n)_{n\in \N}$ be a family of functionals $G_n : \R^\paramdim \to (-\infty,+\infty]$. 
Assume that $(G_n)_{n\in \N}$ is equicoercive, \ie that
for every $C\in\mathbb R$, 
$\{\Theta\in \R^\paramdim : G_n(\Theta)\le C\} \subset K_C$ for some compact set $K_C$.
Then, if $G_n$ $\Gamma$-converges to a functional $G:\R^p \to(-\infty,+\infty]$ as $n\to\infty$,
the following hold:
\begin{enumerate}[label=(\alph*),leftmargin=*]
\item \textbf{Convergence of minimal values:}
$
\lim_{n\to\infty}\ \inf_{\Theta\in \R^\paramdim} G_\lambda(\Theta)
=
\min_{\Theta\in \R^\paramdim} G(\Theta).
$
\item \textbf{Convergence of minimizers:}
Every sequence $(\Theta_n)$ such that $\Theta_n \in \argmin G_n$ admits a convergent subsequence $(\Theta_{n_{j}})$ such that
$
\lim_{j\to\infty}\Theta_{n_{j}} =\opt$,
for $\opt \in \argmin G$.
If the minimizer $\opt$ of $G$ is unique, then the entire sequence $(\Theta_n)$ converges to $\opt$.
\end{enumerate}
\end{lemma}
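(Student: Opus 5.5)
The plan is to prove the two parts directly from the two inequalities in the definition of $\Gamma$-convergence, with equicoercivity supplying compactness. Write $m_n \triangleq \inf_{\Theta} G_n(\Theta)$; the $G_\lambda$ in part (a) is read as $G_n$. I will assume $G\not\equiv+\infty$, which is the only case used later in the paper. If $G\equiv+\infty$, the statement degenerates, and the argument below still gives $m_n\to+\infty$.

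\textbf{Step 1: upper bound on $\limsup m_n$.} Fix any $\Theta$ and take a recovery sequence $\Theta_n\to\Theta$ from condition (b) of the definition. Then $m_n\le G_n(\Theta_n)$, so $\limsup_n m_n\le \limsup_n G_n(\Theta_n)\le G(\Theta)$. Taking the infimum over $\Theta$ gives $\limsup_n m_n\le \inf G<+\infty$.

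\textbf{Step 2: lower bound and existence of a minimizer.} Choose a subsequence $(n_k)$ with $m_{n_k}\to\liminf_n m_n=:\ell$. By Step 1, $\ell<+\infty$. Pick near-minimizers $\Theta_{n_k}$ with $G_{n_k}(\Theta_{n_k})\le m_{n_k}+1/k$ when $m_{n_k}>-\infty$, and with $G_{n_k}(\Theta_{n_k})\le -k$ otherwise. Then, for $k$ large, $G_{n_k}(\Theta_{n_k})\le C$ for a fixed constant $C$ (any $C>\ell$ works when $\ell$ is finite). Equicoercivity puts all these points in the compact set $K_C$, so a further subsequence converges to some $\Theta^\star$.

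Condition (a) of the definition now yields
\[
G(\Theta^\star)\le\liminf_k G_{n_k}(\Theta_{n_k})\le \ell .
\]
Since $G>-\infty$, this rules out $\ell=-\infty$. Combining with Step 1,
\[
G(\Theta^\star)\le \ell=\liminf_n m_n\le\limsup_n m_n\le \inf G\le G(\Theta^\star).
\]
Hence all these quantities are equal. In particular $\Theta^\star$ attains $\min G$, and $m_n\to\min G$, which is part (a).

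\textbf{Step 3: convergence of minimizers (part (b)).} Let $\Theta_n\in\argmin G_n$, so that $G_n(\Theta_n)=m_n$. By part (a), $m_n\to\min G<+\infty$, so for $n$ large $G_n(\Theta_n)\le C$ with $C=\min G+1$. Equicoercivity then places the sequence in $K_C$, and it admits a convergent subsequence $\Theta_{n_j}\to\opt$. Condition (a) gives
\[
G(\opt)\le\liminf_j m_{n_j}=\min G,
\]
so $\opt\in\argmin G$.

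For uniqueness, suppose the minimizer of $G$ is unique. Apply the same reasoning to an arbitrary subsequence of $(\Theta_n)$: it has a further subsequence converging to a minimizer of $G$, which must be $\opt$. A sequence in which every subsequence has a further subsequence converging to the same point $\opt$ converges to $\opt$.

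\textbf{Main obstacle.} The delicate point is Step 2. A bounded sublevel constant $C$ is needed before compactness can be invoked. This requires first obtaining the finite upper bound on $\limsup m_n$ from the recovery sequences in Step 1, and separately excluding $m_n\to-\infty$ using $G>-\infty$. I will handle these in exactly that order.
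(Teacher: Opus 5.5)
Your proof is correct. The paper gives no proof of this lemma; it imports it from Braides' handbook. Your argument is therefore a self-contained substitute for a citation, and it is the standard one: recovery sequences give $\limsup_n\inf G_n\le\inf G$, and near-minimizers plus equicoercivity plus the liminf inequality give the matching lower bound. The one difference is that you obtain a minimizer of $G$ directly as a cluster point of near-minimizers. Many presentations instead get it from lower semicontinuity of $\Gamma$-limits together with coercivity of $G$.

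Writing the proof out has a real benefit: it exposes a hypothesis the statement leaves implicit. Part (b) needs $\min G<+\infty$, not merely for convenience. Take, on $\R^p$, the functionals $G_n(\Theta)=n+\|\Theta-ne_1\|^2$, with $e_1$ the first standard basis vector. Each $\{G_n\le C\}$ is empty for $n>C$ and otherwise lies in the ball of radius $C+\sqrt{C}$ about the origin, so the family is equicoercive. It $\Gamma$-converges to $G\equiv+\infty$, so $\argmin G=\R^p$, yet the minimizers $ne_1$ have no convergent subsequence. In the paper's application, $G_t=\mathbf 1_{\fset_{\tau(t)}}+\|\cdot-\bar\Theta_{t-1}\|^2$ is finite on the nonempty set $\fset_{\tau(t)}$, so your restriction costs nothing there.

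One step should be made explicit. The liminf inequality in the definition is stated for full sequences $\Theta_n\to\Theta$, but you apply it along subsequences $(n_k)$ and $(n_j)$. To fix this, set $\tilde\Theta_n=\Theta_{n_k}$ when $n=n_k$ and $\tilde\Theta_n=\Theta^\star$ otherwise. Then $\tilde\Theta_n\to\Theta^\star$, and $G(\Theta^\star)\le\liminf_n G_n(\tilde\Theta_n)\le\liminf_k G_{n_k}(\Theta_{n_k})$.
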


Having established these preliminaries, we now detail the key ideas of our proof.

\paragraph{Proof sketch of our \cref{thm:gen_m_convergence}.}
Let $(\lambda_n)$ satisfy $\lambda_n \to 0$. For $\lambda>0$ and $t\in[k]$, define
$c_{\lambda,\homogendeg} \!=\! (\log \tfrac{1}{\lambda})^{\hfrac{1}{\homogendeg}}$ and
$\hat{\Theta}_t^{(\lambda)} \!=\! \Theta_t^{(\lambda)}/c_{\lambda,\homogendeg}$.
To establish convergence in direction, we first show convergence of the scaled iterates $\hat{\Theta}_t^{(\lambda)}$. 
For this, we need to prove that there exists a subsequence $(\lambda_\ell)$ such that
$\hat{\Theta}_t^{(\lambda_\ell)} \to \bar{\Theta}_t$ for some
$\bar{\Theta}_t \in 
\argmin_{\bar\Theta \in \fset_{\tau(t)}}
\|\bar\Theta - \bar{\Theta}_{t-1}\|^2$.

We proceed by induction on $t$.
Assume that for some subsequence $(\lambda_j)$,
$\hat{\Theta}_{t-1}^{(\lambda_j)} \to \bar{\Theta}_{t-1}
\in \argmin_{\bar\Theta \in \fset_{\tau(t-1)}}
\|\bar\Theta - \bar{\Theta}_{t-2}\|^2$,
we define
\[
G_t^{(\lambda)}(\Theta)
\triangleq
\frac{\mathcal L_{\tau(t)}(c_{\lambda,\homogendeg}\Theta)}
{c_{\lambda,\homogendeg}^2\lambda}
+
\|\Theta - \hat\Theta_{t-1}^{(\lambda)}\|^2 .
\]
A change of variables yields
\[
\Theta_t^{(\lambda)} \in \argmin
\{\mathcal L_{\tau(t)}(\Theta)
+ \lambda\|\Theta-\Theta_{t-1}^{(\lambda)}\|^2\}
\quad\iff\quad
\hat{\Theta}_t^{(\lambda)} \in \argmin G_t^{(\lambda)}(\Theta).
\]
Then, we show that,
along $(\lambda_j)$, the functionals $G_t^{(\lambda_j)}$ are equicoercive and
$\Gamma$-converge to\linebreak
$G_t(\Theta) = \mathbf 1_{\fset_{\tau(t)}}(\Theta)
+ \|\Theta - \bar{\Theta}_{t-1}\|^2$, where $\mathbf 1_{\mathcal{C}}(\Theta)=0$ for $\Theta\in\mathcal{C}$ and $\infty$ otherwise.
By \cref{thm:fundamental_gamma}, any sequence of minimizers
$\hat{\Theta}_t^{(\lambda_j)}$ admits a convergent subsequence whose limit is a
global minimizer of $G_t$, establishing the inductive step. 
Since this holds for
any $(\lambda_n)$ with $\lambda_n \to 0$, and since the directions of the scaled iterates $\hat{\Theta}_t^{(\lambda)}$ and the original iterates ${\Theta}_t^{(\lambda)}$ are identical, the theorem follows.




%% file: 04_convergence.tex
\section{Convergence Analysis from a Projection Perspective}
\label{sec:conv}
The established projection perspective (\cref{thm:gen_m_convergence}) facilitates analyzing continual learning through the lens of projection theory. 
Indeed, prior work has used classical results from alternating projections and Projections Onto Convex Sets (POCS) to analyze the convergence of continual \emph{linear} models \citep{evron2022catastrophic, evron2023classification}. 
However, these tools are not directly applicable in our nonlinear setting as feasible sets are no longer convex.
While this raises several challenges—detailed below—the projection viewpoint remains informative for structuring our analysis of the induced dynamics.


\subsection{Challenges: Projections onto Nonconvex Sets}
Extending projection-based analysis beyond convexity breaks classical analytical frameworks. 

\begin{figure}[t!]
  \captionsetup[subfloat]{width=0.31\textwidth}
  \setlength{\fboxsep}{0pt} 
  \setlength{\fboxrule}{.5pt} 
  \centering
  \subfloat[\textbf{Linear model.}
  Consider two 2D data points 
  $\x_1 = \left\lbrack5, -1\right\rbrack^{\top},\,
  \x_2 = \left\lbrack2,2\right\rbrack^{\top}$, $y_1=y_2=+1$,
  and a linear model 
  $f(\x;w_1,w_2)
  =
  w_1 x_1 + w_2 x_2
  $.
  As explained in \citet{evron2023classification}, the resulting feasible set is a closed and convex affine polyhedral cone of the form
  $
  \left\lbrack\begin{smallmatrix} 
  5 & -1 \\ 2 & 2
  \end{smallmatrix}\right\rbrack
  \left\lbrack\begin{smallmatrix} 
  w_{1} \\ w_{2}
  \end{smallmatrix}\right\rbrack
  \ge
  \left\lbrack\begin{smallmatrix} 
  1 \\ 1
  \end{smallmatrix}\right\rbrack$.
  ]{
    \label{fig:linear_fset}
    {\includegraphics[width=0.3\textwidth]{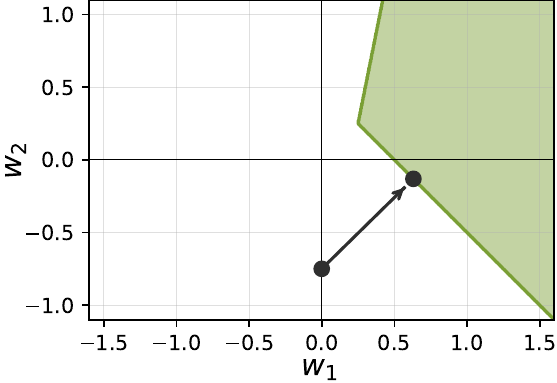}}
  }
  \hfill
  \subfloat[
  \textbf{Simplified ReLU.}
  Consider a single 1D data point $x=2,y=+1$, and a 1-positively-homogeneous model
  $f(x;w_1,w_2)
  =
  \lbrack w_1 x\rbrack_{+} + \lbrack w_2 x\rbrack_{+}
  $.
  \linebreak
  Here, the induced feasible set $f(2;w_1,w_2)\ge 1$ is nonconvex.
  Consequently, a single point can have multiple valid projections.
  ]{
    \label{fig:relu_fset}
    {\includegraphics[width=0.3\textwidth]{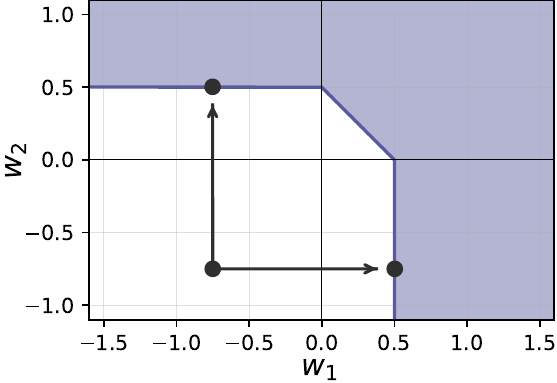}}
  }
  \hfill
  \subfloat[
  \textbf{Squared linear model.}
  Consider a single 1D data point $x=2,y=+1$,\linebreak and a 2-positively-homogeneous model
  $f(x; w_1,w_2)
  =
  w_1^2 x - w_2^2 x
  $ as in \citet{woodworth2020kernel}.
  While comprised of two convex regions, the resulting feasible set ${f(2;w_1,w_2)\ge 1}$ is nonconvex.
  ]{
    \label{fig:squared_fset}
    {\includegraphics[width=0.3\textwidth]{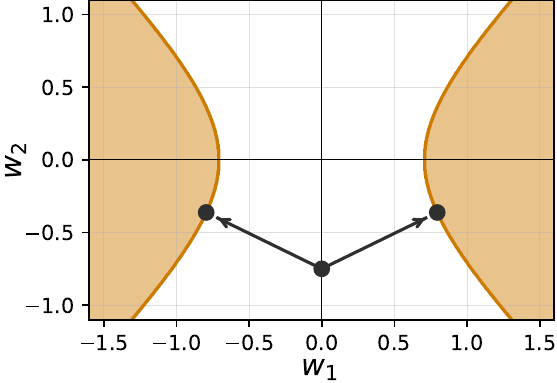}}
  }
  \caption{
  \textbf{Feasible sets under homogeneous models are not necessarily convex.}
  In the two-parameter spaces depicted, only the linear model yields convex feasible sets and, consequently, unique projections.}
  \label{fig:feasible_sets}
  \vspace{-.8em}
\end{figure}

\tparagraph{Nonunique projections.}
As illustrated in \Cref{fig:feasible_sets}, linear models induce convex feasible sets for which projections are uniquely defined---but general homogeneous models yield nonconvex feasible sets where a projection $\proj_{\fset}(\vu) \in \argmin_{\vv \in \fset} \|\vu-\vv\|_2$ may admit multiple minimizers.
This nonuniqueness is critical: \Cref{alg:sequential_mm} may select an arbitrary minimizer at each step, potentially inducing qualitatively distinct dynamics and branching into trajectories with diverging behaviors. 
Such branching complicates any uniform analysis that must account for all admissible selection rules.

\tparagraph{Breakdown of projection properties.}
Another distinction is that convex projections monotonically approach the \emph{joint} feasible set,
\ie $\dist(\proj_{\fset_m}(\bar\vtheta), \interset) \le \dist(\bar\vtheta, \interset)$, due to operator nonexpansiveness, 
\ie $\|\proj_{\fset}(\vu) - \proj_{\fset}(\vv)\| \le \|\vu - \vv\|$ 
\citep[][Lemma~4.5 and Corollary~D.1]{evron2023classification}. 
While crucial for convergence, these properties fail in the nonconvex regime; for instance, nonuniqueness in \cref{fig:relu_fset} implies that it is even possible that, roughly,
$\|\proj_{\fset}(\vu+\bm{\varepsilon}) - \proj_{\fset}(\vu)\|\gg 0$ 
while $\|\vu+\bm{\varepsilon} - \vu\|=\varepsilon\to 0$.
Overall, this removes primary analytical mechanisms used in prior work.

In \cref{sec:squared}, we show how this property breakdown precludes global convergence, while \cref{sec:local} establishes local convergence for models initialized near the intersection.


\subsection{Lower Bound: Forgetting Can Be Catastrophic}
\label{sec:squared}

\citet{evron2022catastrophic} proposed that forgetting is truly ``catastrophic'' only when it fails to converge to zero in the infinite-task-sequence limit. 
In their continual linear regression setting, they demonstrated that cyclic or random task orderings eliminate forgetting as the number of iterations $k \to \infty$. Similar convergence was later established for continual linear classification \citep{evron2023classification}.

While significant, these results are expected from a projection perspective. 
Both works prove an equivalence to sequential projections onto \emph{convex} sets---specifically affine subspaces in regression and polyhedral cones in classification (see \cref{fig:linear_fset}). 
Such processes are well-known to converge to the intersection under cyclic or random orderings \citep{deutsch2006ratePOCS_I,nedic2010randomPOCS}. 
We now ask: \linebreak
\textit{Can forgetting be catastrophic in homogeneous models whose projections are nonconvex?}


We answer in the affirmative. 
We show that the sequential margin projections of \cref{alg:sequential_mm} may fail to converge even as $k \to \infty$. 
This means that continual learning on homogeneous models can forget \emph{catastrophically}, 
\ie never reaching a configuration that satisfies all tasks simultaneously.

Technically, we employ a standard 2-positively-homogeneous model that is linear in the data but nonlinear in its parameters \citep[e.g., as in][]{woodworth2020kernel}, as illustrated in \cref{fig:squared_fset}.

\begin{definition}[Squared Linear Model] \label{eq:homogeneous_model_uv}
For inputs $\x\in\R^d$, we define a 2-positively-homogeneous model parameterized by $\piterate{}=(\vu,\vv)$ where $\vu,\vv\in\R^d$:
\begin{align*}
f\bigl(\x;\vu,\vv\bigr)
\;=\;
\iprod{\vu^2}{\x}
-
\iprod{\vv^2}{\x}
\;=\;
\sum_{i=1}^d 
\prn{\vecind{u}{i}^2
-
\vecind{v}{i}^2
}\vecind{x}{i}
\,.
\end{align*}
\end{definition}


We construct a specific case exhibiting catastrophic forgetting with as few as $M=2$ tasks and $p=4$ learnable parameters.
This highlights a major qualitative gap between convex and nonconvex projection interpretations of continual learning: whereas for linear models, catastrophic forgetting arises only in the limit of $M \to \infty$ tasks \citep{evron2022catastrophic,evron2023classification}, our nonconvex setting may fail with minimal complexity.
Moreover, unlike the convex case---where task repetition drives iterates toward joint feasibility---alternating between $M=2$ tasks here does \emph{not} lead to convergence.

\begin{theorem}[Catastrophic Forgetting in Squared Linear Models]
\label{thm:lower}
Let $f(\x;\piterate{})$ be a squared linear model in $d=2$ parameterized by $\piterate{}=(\vu,\vv)$ where $\vu,\vv\in\R^{2}$.
Fix $\varepsilon\in(0,0.01)$ and define two tasks
\[
\X^{(1)}=\big[\x_1^{\top}\big]
=\big[\bigl(\tfrac19+\varepsilon,\ 1-\varepsilon\bigr)\big],
\qquad
\X^{(2)}=\big[\x_2^{\top}\big]
=\big[\bigl(\tfrac19-\varepsilon,\ -1-\varepsilon\bigr)\big],
\qquad 
\y^{(1)}=\y^{(2)}=\prn{1}\,.
\]
%
\begin{itemize}[leftmargin=0.7cm, itemindent=0cm, itemsep=0pt,labelsep=0.2cm,topsep=4pt]
\item[(a)] 
The two tasks are jointly separable, 
\ie $\interset\triangleq\fset_1\cap \fset_2 \neq\varnothing$
where the feasible sets are
$\fset_1=\{\piterate{} \in \R^4 \mid f(\x_1;\piterate{})\geq 1\},\, \fset_2=\{\piterate{} \in \R^4 \mid f(\x_2;\piterate{})\geq 1\}$.
%
\item[(b)] \cref{alg:sequential_mm} forgets catastrophically: 
$
d\bigl(\Bar{\Theta}_t,\interset\bigr)
 \ge 2,
\,\max_{m\in \{1,2\}}F_m(\Bar{\Theta}_{t})\geq 1.5,
~~\forall t \ge 0$.
\end{itemize}
\end{theorem}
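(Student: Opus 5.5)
The plan is to exhibit a $2$-dimensional invariant subspace that contains all iterates of \cref{alg:sequential_mm} but does not meet $\interset$. Write $\piterate{}=(u_1,u_2,v_1,v_2)$. For a single example, $f(\x;\piterate{})=\piterate{}^\top \mathbf{D}\piterate{}$ with $\mathbf{D}=\mathrm{diag}(\x,-\x)$, so each $\fset_m$ is the superlevel set $\{\piterate{}^\top \mathbf{D}_m\piterate{}\ge 1\}$ of a diagonal quadratic form. The diagonal entries are:
\begin{itemize}[leftmargin=0.7cm, itemsep=0pt]
\item $\mathbf{D}_1$: $(\tfrac19+\varepsilon,\ 1-\varepsilon,\ -\tfrac19-\varepsilon,\ -1+\varepsilon)$;
\item $\mathbf{D}_2$: $(\tfrac19-\varepsilon,\ -1-\varepsilon,\ -\tfrac19+\varepsilon,\ 1+\varepsilon)$.
\end{itemize}

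\textbf{Part (a).} Take $\piterate{}=(4,0,0,0)$. Then $f(\x_1;\piterate{})=16(\tfrac19+\varepsilon)>1$ and $f(\x_2;\piterate{})=16(\tfrac19-\varepsilon)>1$ for $\varepsilon<0.01$, so $\piterate{}\in\interset$.

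\textbf{A lower bound on $u_1$ over $\interset$.} Consider the weighted sum $(1+\varepsilon)f(\x_1;\cdot)+(1-\varepsilon)f(\x_2;\cdot)$. The $u_2,v_2$ terms cancel, since their coefficients are $\pm[(1-\varepsilon)(1+\varepsilon)-(1+\varepsilon)(1-\varepsilon)]=0$. What remains is $(u_1^2-v_1^2)(\tfrac29+2\varepsilon^2)$. Every $\piterate{}\in\interset$ makes this sum at least $2$, hence
\[
u_1^2\ \ge\ \frac{2}{2/9+2\varepsilon^2}\ >\ 8.9 .
\]
Consequently, every point with $u_1=v_1=0$ is at distance at least $|u_1^\star|>2$ from $\interset$.

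\textbf{Forgetting inside the plane.} On the plane $P=\{u_1=v_1=0\}$, membership in $\fset_1$ means $u_2^2-v_2^2\ge 1/(1-\varepsilon)$. Then $f(\x_2;\piterate{})=(1+\varepsilon)(v_2^2-u_2^2)\le -\tfrac{1+\varepsilon}{1-\varepsilon}<-1$, so $F_2\ge 2$. Symmetrically, membership in $\fset_2$ forces $F_1\ge 2$. For $t=0$, $\piterate{0}=\0$ gives $F_1=F_2=1$, and the hinge claim at $t=0$ must be checked against the exact statement. Since $\piterate{t}\in\fset_{\tau(t)}$ for every $t\ge1$, it therefore suffices to prove that every iterate lies in $P$.

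\textbf{Invariance of $P$: KKT structure.} Suppose $\mathbf{z}\in P$ and $\piterate{}\in\proj_m(\mathbf{z})$. The constraint gradient $2\mathbf{D}_m\piterate{}$ vanishes only at $\0\notin\fset_m$, so LICQ holds and the KKT conditions give $\piterate{}_j=z_j/(1-\mu d_j)$ with $\mu\ge0$, componentwise. Because $z_{u_1}=z_{v_1}=0$, a nonzero first coordinate requires $\mu d_j=1$. This is impossible for $v_1$, whose $d_j<0$. For $u_1$ it forces $\mu=1/(\tfrac19\pm\varepsilon)\approx 9$. In that case the $u_2,v_2$ coordinates become $z_j/(1-\mu d_j)$ with $|1-\mu d_j|\gtrsim 8$, so they are tiny. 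Feasibility then needs $u_1^2(\tfrac19\pm\varepsilon)\gtrsim 1$, i.e.\ $|u_1|\approx 3$, and the squared cost is at least about $9-O(\|\mathbf{z}\|^2/64)$.

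\textbf{Invariance of $P$: cost comparison.} An in-plane feasible competitor costs much less. From $\mathbf{z}=(0,a,0,b)$, the point that keeps $a$ and sets $v_2^2=a^2+1/(1+\varepsilon)$ (for task $2$), or the symmetric one for task $1$, has squared cost at most $\bigl(|b|+\sqrt{a^2+1.1}\bigr)^2$. So if the iterates obey a uniform bound, say $\|\piterate{t}\|\le 1.5$, the in-plane cost is well below $9$ and every minimizer must lie in $P$. At $t=1$, $\proj_1(\0)$ is along the top eigenvector $u_2$, giving $\piterate{1}=(0,\pm(1-\varepsilon)^{-1/2},0,0)$.

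\textbf{Main obstacle: a uniform norm bound in the plane.} The dynamics in $P$ are alternating projections between the two disjoint hyperbolic regions $u_2^2-v_2^2\ge c_1$ and $v_2^2-u_2^2\ge c_2$, where $c_1=1/(1-\varepsilon)$ and $c_2=1/(1+\varepsilon)$. By the KKT formula, the map from a boundary point $(a,b)$ of one region is $(a,b)\mapsto\bigl(a/(1+\mu'),\,b/(1-\mu')\bigr)$, with $\mu'=\mu(1+\varepsilon)$ or $\mu(1-\varepsilon)$ depending on the step, chosen so the image lands on the other hyperbola. One coordinate shrinks and the other grows, which a priori permits drift along the hyperbolas.
\begin{itemize}[leftmargin=0.7cm, itemsep=0pt]
\item First attempt: compute the two-step map explicitly in a $1$D parametrization (e.g.\ $|u_2|$ after each task-$2$ step). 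Then show it is a contraction or monotone toward a fixed $2$-cycle near $\bigl(\pm\tfrac{1}{\sqrt2}c^{1/2}\text{-type values}\bigr)$. This would yield an invariant interval, e.g.\ $|u_2|,|v_2|\le 1.2$.
\item Fallback: even a bound like $\|\piterate{t}\|\le 2$ suffices, because the escape cost of about $9$ dominates any in-plane cost below $(2+\sqrt{5.1})^2$. That comparison is too tight, so a crude bound alone will not do; I would aim for a bound around $1.5$.
\end{itemize}
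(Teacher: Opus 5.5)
\textbf{The gap.} The problem is the step you flag as the main obstacle, and it cannot be closed as planned. Both the uniform bound $\|\piterate{t}\|\le 1.5$ (or $\le 2$) and convergence to a fixed $2$-cycle are false. Under the alternating order, each in-plane projection acts as $(a,b)\mapsto\bigl(a/(1\mp\mu'),\,b/(1\pm\mu')\bigr)$ with $\mu'\in(0,1)$, so $|u_2v_2|$ is multiplied by $1/(1-\mu'^2)>1$ at every step. Suppose the iterates stayed in a ball of radius $R$. The disjoint closed sets $\fset_1\cap P$ and $\fset_2\cap P$ would then be at some distance $\eta>0$ inside that ball. Since the step length equals $\mu'\|\piterate{t+1}\|$, this forces $\mu'\ge\eta/R$, hence geometric growth of $|u_2v_2|$, a contradiction. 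So the iterates drift to infinity along the common asymptote $|u_2|=|v_2|$, roughly like $t^{1/4}$. For small $\varepsilon$, $(u_2,v_2)\approx(1,0),(0.5,1.12),(1.22,0.70),(0.84,1.30),\dots$, and already $\|\piterate{4}\|\approx1.55$. Both of your estimates deteriorate as $\|\mathbf z\|$ grows: the escape cost $9-O(\|\mathbf z\|^2/64)$ and the in-plane cost $(|b|+\sqrt{a^2+1.1})^2$. Hence the comparison breaks down on the actual trajectory, and invariance of $P$ is not established.

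\textbf{The missing idea.} Neither side of the cost comparison depends on $\|\mathbf z\|$.

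On the escape branch ($\mu=1/d_{u_1}$), write $\Theta$ for the minimizer. The constraint is active because $\mathbf z\notin\fset_m$. Eliminating $u_1^2$ through it and using $\Theta_j-z_j=\mu d_j\Theta_j$, the squared cost is exactly $\mu+\sum_{j\in\{u_2,v_2\}}\mu d_j(\mu d_j-1)\Theta_j^2\ge\mu=1/d_{u_1}>8.2$, because each $\mu d_j$ is either negative or above $8$. You dropped the displacement in $u_2,v_2$, which more than pays for the smaller $u_1^2$.

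On the in-plane side, for $t\ge1$ each iterate lies on the boundary of the set it was last projected onto: it came from outside that set, or a repeated task left it fixed. So when projecting onto $\fset_2$ from $\mathbf z=(0,a,0,b)$ with $a^2-b^2=c_1$, your own competitor, with the sign of $v_2$ matched to $b$, costs $\bigl(\sqrt{b^2+c_1+c_2}-|b|\bigr)^2\le c_1+c_2<2.01$. The projection onto $\fset_1$ is symmetric.

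This norm-free branch comparison is what the paper does in \cref{lem:kkt_two_branches_detailed} and \cref{lem:kkt_projection_f2}. With it, the rest of your plan goes through: the sign argument eliminating $v_1$, the weighted-sum bound $u_1^2\ge 1/(\tfrac19+\varepsilon^2)$, and forgetting from in-plane membership.

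\textbf{Two smaller points.} Your caution at $t=0$ is justified: $F_1(\0)=F_2(\0)=1$, so the forgetting bound can only hold for $t\ge1$, which the paper's proof overlooks. Also, in the $\fset_2$ case you only get $F_1\ge 1+\frac{1-\varepsilon}{1+\varepsilon}>1.98$, not $\ge 2$.
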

The proof, provided in \cref{app:proofs_squared}, relies on the fact that for the parameters $\bar{\Theta}=(\vu,\vv)$ to lie in the intersection $\fset_1 \cap \fset_2$, it must hold that $\vecind{u}{1} \geq 2$.
However, we show that the sequential projection dynamics, when initialized at the origin $\bar{\Theta}_0=\0$, are restricted to the subspace defined by $\vecind{u}{1}=0$ and $\vecind{v}{1}=0$. 
Namely, under the specific geometry of our construction, each projection step modifies only $\vecind{u}{2}$ and $\vecind{v}{2}$, leaving the other components at zero. 
Consequently, the iterates $\bar{\Theta}_t$ are effectively trapped in a subspace bounded away from the joint feasible set, \ie $\fset_1 \cap \fset_2$.

%


Importantly, the failure in \cref{thm:lower} is not an artifact of a ``degenerate'' dataset or an adversarial task ordering: it persists over a nonzero measure of datasets under any ordering. 
Thus, global convergence guarantees require further assumptions in this nonconvex setting, as introduced next.

\pagebreak

Moreover, the lower-bound construction in \cref{thm:lower} extends beyond the case of zero initialization. In particular, a similar phenomenon can arise for arbitrary (and in particular random) initializations within a bounded ball. By appropriately adjusting the ratio between the coordinates in the construction, one can ensure that the distance to the jointly feasible set $\interset$ is arbitrarily larger than the initialization radius. In this regime, the sequential projection dynamics behave similarly to the zero-initialization case: only a subset of the coordinates is updated, even though reaching a global solution requires movement in other directions.


\subsection{Upper Bounds via Local Convergence Analysis}
\label{sec:local}

As we established, global convergence is generally unattainable in the nonconvex setting, even when individual tasks are solved to optimality.
Thus, we focus on local convergence analysis, as is common in nonconvex optimization, where global guarantees are elusive \citep[e.g.,][]{Bottou2016}.
A practical motivation is the sequential finetuning of large pretrained foundation models. These models often exhibit small zero-shot loss, implying an initialization near a joint solution.
Our analysis shows that local convergence \emph{can} be guaranteed in this regime: if the initialization is sufficiently close to the joint feasible set $\interset$, the sequence of projections converges to $\interset$ at a linear rate.


We establish this result when $f$ is $\beta$-smooth\footnote{
$f(\x;\cdot)$ is $\beta$-smooth if  $\forall \vu, \w$, 
$\big|
f(\x;\vu) - f(\x;\w) - \langle \nabla f(\x;\w), \vu - \w \rangle
\big| 
\leq \frac{\beta}{2} \bignorm{\vu - \w}^2$.
} 
and 
$G$-Lipschitz\footnote{
$f(\x;\cdot)$ is $G$-Lipschitz if $\forall \vu, \w $ , $\big|f(\x;\vu) -f(\x;\w)\big| 
\leq
G \bignorm{\vu - \w}$.
} for every data point $\x$.
In particular, both assumptions hold for DNNs with \emph{smooth} activations (\eg squared ReLU), where the Lipschitz constant may depend on architectural properties such as the network depth and homogeneity degree, as well as the magnitude of the data.
We prove the following local convergence theorem.

\begin{theorem}[Convergence Rate for Lipschitz, Smooth Homogeneous Models]
\label{thm:local_convergence}
Consider \linebreak a $G$-Lipschitz, $\beta$-$smooth$, \(\homogendeg\)-positively-homogeneous model $f(\cdot;\Bar \Theta):\mathcal X\to\mathbb R$.
Assume joint separability, \ie nonempty $\interset=
\cap_{m} \fset_{m}
\neq \emptyset$ (\Cref{asm:overparam-separability}). 
Let $\epsilon$ hold $\frac{1}{(1-\epsilon)^{M-1}} = 1 + \frac{\homogendeg^2}{2(M-1)G^2\|\opt\|^2}$ and $\delta=\hfrac{\epsilon r}{\beta \|\Theta^\star\|}$.
Then, there exists $\opt\in\interset$ 
such that if $\Bar{\Theta}_0\in \ball_\delta(\opt)$, the sequential projections of \cref{alg:sequential_mm} converge linearly with a rate depending on the task ordering, \ie  $\forall k\in \naturals$,
\begin{itemize}[leftmargin=0.5cm, itemindent=0cm, itemsep=0pt,labelsep=0.25cm,topsep=4pt]

\item \textbf{Random:}
\hfill
$\displaystyle
\frac{1}{G}
\expectation_{\ordrnd}
\max_{m\in\cnt{M}}F_{m} (\Bar{\Theta}_{k})  
~\le~
\expectation_{\ordrnd}
\dist(\Bar{\Theta}_{k}, \interset) 
~\le~
{\textstyle
\exp 
\prn{
    - \frac{k}{M}
    \frac{\homogendeg^2}{4 G^2 \|\opt\|^2}
}
}
\expectation_{\ordrnd}
\dist(\Bar \Theta_{0},\interset)$.

\item
\textbf{Cyclic ($M \mid k$):}
\hfill
$\displaystyle
\frac{1}{G}
\max_{m\in\cnt{M}}F_{m} (\Bar{\Theta}_{k})  
~\le~
\dist(\Bar{\Theta}_{k}, \interset) 
~\le~
{\textstyle
\exp 
\prn{
    - \frac{k}{M^2}
    \frac{\homogendeg^2}{4 G^2 \|\opt\|^2}
}
}
\,
\dist(\Bar \Theta_{0},\interset)
$.

\end{itemize}
\end{theorem}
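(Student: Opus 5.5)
The plan is to use the nonconvex projection literature \citep{lewis2008alternating,dao2019linear}. Linear convergence of random and cyclic projections there follows from two ingredients. The first is \emph{local regularity}, in the form of superregularity or prox-regularity, of each $\fset_m$ near $\opt$. The second is a \emph{linear regularity} (error bound) inequality $\dist(\bar\Theta,\interset)\le \kappa \max_m \dist(\bar\Theta,\fset_m)$ on a neighborhood of $\opt$. So I will derive both properties from homogeneity, Lipschitzness and smoothness, with explicit constants. Then I will run a Fejér-type descent argument for one projection step and iterate it.

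\textbf{Step 1 (forgetting vs.\ distance).} This gives the left inequalities. For $\bar\Theta^\star\in\interset$ we have $y_i f(\x_i;\bar\Theta^\star)\ge 1$. By $G$-Lipschitzness, $1-y_if(\x_i;\bar\Theta)\le G\|\bar\Theta-\bar\Theta^\star\|$. Minimizing over $\bar\Theta^\star$ yields $F_m(\bar\Theta)\le G\,\dist(\bar\Theta,\interset)$, and this holds for every $m$.

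\textbf{Step 2 (regularity from homogeneity).} By Euler's identity, $\iprod{\nabla f(\x;\Theta)}{\Theta}=\homogendeg f(\x;\Theta)$. Take a point $\Theta$ on the boundary $y_if(\x_i;\Theta)=1$ near $\opt$. Then the constraint gradient $g_i=y_i\nabla f(\x_i;\Theta)$ satisfies $\iprod{g_i}{\Theta}=\homogendeg$. Together with Cauchy--Schwarz this gives $\|g_i\|\ge \homogendeg/\|\Theta\|$, and Lipschitzness gives $\|g_i\|\le G$. More importantly, the direction $\Theta/\|\Theta\|$ is a \emph{common} ascent direction for every active constraint of every task, with slope at least $\homogendeg/\|\Theta\|$. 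This is a Mangasarian--Fromovitz-type constraint qualification, uniform across tasks. I will use it for two purposes:
\begin{itemize}
\item[(i)] To get an error bound. Moving from $\bar\Theta$ along $\Theta$ by a step of size about $G\max_m\dist(\bar\Theta,\fset_m)\cdot\|\opt\|/\homogendeg$ fixes all violated constraints, using Step 1 and smoothness to control the curvature error. This gives $\kappa\approx G\|\opt\|/\homogendeg$.
\item[(ii)] To get approximate convexity of each $\fset_m$. By $\beta$-smoothness, each $\fset_m$ is contained in the complement of balls of radius about $\|g\|/\beta$. Equivalently, $\fset_m$ is $\epsilon$-superregular in a ball of radius $\delta=\epsilon\homogendeg/(\beta\|\opt\|)$. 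Concretely, for a projection $\bar\Theta'$ of $\bar\Theta$ onto $\fset_m$ and any $z\in\interset$ nearby,
\[
\iprod{\bar\Theta-\bar\Theta'}{z-\bar\Theta'}\le \epsilon\|\bar\Theta-\bar\Theta'\|\|z-\bar\Theta'\|.
\]
\end{itemize}

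\textbf{Step 3 (one-step descent and iteration).} From the superregularity inequality I get an approximate Pythagorean relation: $\|\bar\Theta'-z\|^2\le \|\bar\Theta-z\|^2-(1-O(\epsilon))\|\bar\Theta-\bar\Theta'\|^2$. Taking $z$ to be the projection onto $\interset$ keeps all iterates inside $\ball_\delta(\opt)$, up to the $(1-\epsilon)^{-(M-1)}$ inflation factor that the definition of $\epsilon$ accounts for. This keeps Step 2 valid throughout the run.

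For the \textbf{random} ordering, take expectations over the uniform choice of $m$. The squared step is then at least $\frac1M\max_m\dist^2(\bar\Theta,\fset_m)\ge \frac{1}{M\kappa^2}\dist^2(\bar\Theta,\interset)$. This gives the contraction factor $1-\frac{\homogendeg^2}{2MG^2\|\opt\|^2}$ per step, and hence the exponential bound via $1-x\le e^{-x}$.

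For the \textbf{cyclic} ordering, I will argue over one full cycle. The triangle inequality with Cauchy--Schwarz gives $\max_m\dist(\bar\Theta_{\text{start}},\fset_m)^2\le M\sum_j\|\text{step}_j\|^2$, up to the $(1-\epsilon)$ losses. So each cycle contracts by $1-\frac{\homogendeg^2}{2M G^2\|\opt\|^2}$ at a cost of $M$ steps, which yields the $k/M^2$ rate.

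\textbf{Main obstacle.} The hard part is Step 2(ii) and the bookkeeping needed to keep the iterates local. Projections may be nonunique, and the descent inequality must hold for every selection. Superregularity has to be derived from $\beta$-smoothness with the Euler-identity lower bound on the gradient. Because $\fset_m$ is an intersection of $n_m$ sublevel sets, I will first try to treat it as the sublevel set of the single function $\min_i y_if$, using the common direction $\Theta$ to handle the corners. I also have to check that the chosen $\epsilon$ exactly offsets the drift accumulated over $M-1$ steps.
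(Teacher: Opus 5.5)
Your plan has the same architecture as the paper's proof. Step~1 is the paper's loss-bound lemma. The Euler-identity common ascent direction is its MFCQ lemma (margin $\homogendeg$, direction $\opt$). Your approximate Pythagorean relation plays the role of the quasi-firm Fej\'er inequality the paper imports from Dao--Phan (Proposition~3.5), and the random-order expectation step is the same.

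You differ in how the two regularity properties are obtained and in the cyclic case. The paper goes through Rockafellar--Wets: MFCQ $\Rightarrow$ metric regularity, Example~9.44 for the error bound, and Theorem~6.31 to write proximal normals as $\nabla F^\top\lambda$. It then cites Dao--Phan's Corollary~5.10 for cyclic orders. You argue directly, which is more self-contained and gives explicit neighborhoods.
\begin{itemize}
\item For (i), if you move along $\bar\Theta$ itself the step is exact by homogeneity and needs no smoothness. If $\eta=G\max_m\dist(\bar\Theta,\fset_m)<1$, then $(1-\eta)^{-1/\homogendeg}\bar\Theta\in\interset$. This gives $\kappa=(1+o(1))\,G\|\opt\|/\homogendeg$ on an explicit ball, whereas Example~9.44 only gives an unquantified neighborhood.
\item For (ii), Euler at the base point $x$ itself gives $\iprod{\nabla h_j(x)}{x}=\homogendeg$ for every active constraint. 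This bounds the total KKT multiplier mass of a proximal normal $u$ by $\|x\|\|u\|/\homogendeg$. With scalar $\beta$-smoothness it yields $\delta=\epsilon\homogendeg/(\beta\|x\|)$ without Theorem~6.31.
\end{itemize}

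Three steps need repair.

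First, the relation $\|\bar\Theta'-z\|^2\le\|\bar\Theta-z\|^2-(1-O(\epsilon))\|\bar\Theta-\bar\Theta'\|^2$ would be genuine Fej\'er monotonicity, and it does not follow. With $a=\|\bar\Theta-\bar\Theta'\|$ and $b=\|\bar\Theta'-z\|$, the cross term $2\epsilon ab$ is not $O(\epsilon a^2)$ when $a\ll b$. What superregularity gives is $\|\bar\Theta'-z\|^2+\|\bar\Theta-\bar\Theta'\|^2\le\frac{1}{1-\epsilon}\|\bar\Theta-z\|^2$.

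Second, with that inflation your full-cycle count ($M$ steps, $\max_m\dist^2\le M\sum_j\|\mathrm{step}_j\|^2$) gives the factor $(1-\epsilon)^{-M}-\frac{1}{M\kappa^2}$. For the prescribed $\epsilon$ this exceeds $1$ when $M\in\{2,3\}$. Use windows of $M-1$ steps that start at an iterate already lying in the set just projected onto. This gives $(1-\epsilon)^{-(M-1)}-\frac{1}{(M-1)\kappa^2}=1-\frac{1}{2(M-1)\kappa^2}$, which is the base of $\rho_c$ in the paper's proposition. Similarly, your random-order factor needs $\frac{1}{1-\epsilon}\le 1+\frac{1}{2M\kappa^2}$. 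The prescribed $\epsilon$ violates this for $M=2$; the paper's proof quietly switches to that $\epsilon$ there.

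Third, your locality mechanism does not work as stated.
\begin{itemize}
\item The inflation bound controls $\|\bar\Theta_t-z\|$ only within one window. With a fixed anchor the bound grows like $(1-\epsilon)^{-t}$, so $z=\proj_{\interset}(\cdot)$ must be re-chosen each window.
\item Consecutive anchors can be up to $2\dist(\cdot,\interset)$ apart. Summing gives a drift bound of order $\dist(\bar\Theta_0,\interset)/(1-\rho)$, with $1-\rho$ of order $\homogendeg^2/(MG^2\|\opt\|^2)$.
\item So this bound keeps the iterates in $\ball_{\delta/2}(\opt)$ only if you start in a ball shrunk by a factor of that order, as in the Lewis--Luke--Malick and Dao--Phan local theorems. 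For random orders this must hold along every sample path, not just in expectation.
\end{itemize}
The paper's proof does not verify this either: it applies the proposition at every iteration without checking $\bar\Theta_t\in\ball_{\delta/2}(\opt)$. So this is a shared omission, but as written neither argument justifies the stated radius $\delta$.
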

The proof is provided in \cref{app:proofs_local}. 
Here, we make a few remarks and then outline the proof idea.

\begin{remark}[Comparison to Linear Model]
For a linear model, the local convergence rates above recover the global rates of \cite{evron2023classification} for both cyclic and random task orderings.
\end{remark}

\begin{remark}[Nonzero Initialization]
Local convergence requires the initialization of \cref{alg:sequential_mm} to lie within a neighborhood $\ball_\delta(\opt)$. While we prove \cref{thm:gen_m_convergence} for the case of zero initialization, this does not necessitate that $\opt$ is near the origin. Indeed, by using a technical modification, it is possible to extend the analysis to other fixed initializations where the initialization of \cref{alg:sequential_mm} scales according to the weak regularization’s $\lambda$. This could be the case, for instance, when \cref{alg:sequential_mm} is initialized with a model pretrained using weight decay \emph{with the same order of $\lambda$}, a change that does not alter the underlying geometric intuition.
\end{remark}

\begin{remark}[Order of Limits]
The number of iterations $k$ may be arbitrarily large to guarantee convergence, potentially requiring task repetitions.
Following \citep[][Remark~4.6]{evron2023classification}, we clarify that our analysis takes the limit $\lambda\downarrow 0$ \emph{after} fixing $k$.
\end{remark}

\paragraph{Proof sketch.}
As in the convex case where proofs are based on properties of POCS \citep{evron2023classification},
our proof of \cref{thm:local_convergence} leverages properties of projections on \emph{nonconvex} sets.
However, as discussed, nonconvex projections can be ill-behaved; they are often nonunique and tiny perturbations can cause large ``jumps''.
As a result, nonconvex projections are typically analyzed under regularity conditions that rule out pathological geometry and ensure the projection is locally stable, which is essential for any convergence guarantees \cite[e.g.,][]{lewis2008alternating,lewis2009local,dao2019linear}.
The resulting proof consists of two steps.

\begin{enumerate}[leftmargin=0.6cm, itemindent=0cm, itemsep=0pt,labelsep=0.2cm,topsep=4pt]
\item \textbf{First step: Smoothness and Lipschitz continuity imply regularity.}
We show that a smooth and $G$-Lipschitz homogeneous model satisfies the regularity properties required for local convergence. 
We rely on tools from variational analysis and set regularity \cite[e.g.,][]{rockafellar1998variational} to assert that the feasible sets induced by homogeneous DNNs satisfy the needed local regularity conditions. 
In particular, we establish the following two conditions.

\begin{definition}[($\epsilon,\delta)$-Regularity]
\label{def:epsdel_reg}
Let $\fset$ be a nonempty subset of $\R^\paramdim$, $\w \in \R^\paramdim$, $\varepsilon \geq 0$ and $\delta >0$.
Let $\proj_\fset(\w)$ project $\w$ onto $\fset$ and define
$
N_\fset^{\mathrm{prox}}(\w) \triangleq \left\{ \lambda(\vz - \w) \;\middle|\; \vz \in \proj_\fset^{-1}(\w),\; \lambda \geq 0 \right\}
$.
We say that $\fset$ is {\it $(\varepsilon,\delta)$-regular at $\w$} if
\begin{equation*}
\x, \vy \in \fset \cap \mathbb{B}(\w;\delta),\;
\vu \in N_\fset^{\mathrm{prox}}(\x)
\quad\Longrightarrow\quad
\langle \vu, \x - \vy \rangle \le \varepsilon \|\vu\| \cdot \|\x - \vy\|.
\end{equation*}
\end{definition}
\begin{definition}[$\kappa$-Linear Regularity of Set Collection]
\label{def:lin_reg}
A system $\{\fset_i\}_{i\in I}$ is $\kappa$-linearly regular on a subset $U \subseteq X$ if ,
for all $x\in U$,
$
\dist(x, \cap_{i\in I} \fset_i) \le \kappa \max_{i \in I} \dist(x, \fset_i)
\,. 
$
\end{definition}
In \cref{lem:epsdel_reg,lem:kap_reg} in  \cref{app:proofs_local} we show that, in the smooth and Lipschitz case, homogeneous models satisfy the conditions given in \cref{def:epsdel_reg,def:lin_reg}.

\item \textbf{Second step: Invoke convergence under regularity.}
We invoke a local linear convergence result for nonconvex projections. 
In particular, the following proposition shows that under the conditions of  \cref{def:epsdel_reg,def:lin_reg}, 
the iterates of \Cref{alg:sequential_mm} exhibit local convergence to the intersection $\interset$ both in random and cyclic orderings.
For random ordering, the proof appears in \cref{app:proofs_local}, while the cyclic case follows directly from \citet[][Corollary 5.10]{dao2019linear}.

%
%


\begin{proposition}[Stepwise decrease under random and cyclic projections]
\label{prop:one_step_random_projection}
Consider closed sets $\fset_1,\dots,\fset_M \subset \mathbb{R}^\paramdim$ with nonempty intersection
\(
\interset \triangleq \bigcap_{m=1}^M \fset_m \neq \varnothing,
\)
and let $\opt \in \interset$.
Assume that there exist constants $\kappa>0$, $\varepsilon\in[0,1)$, and $\delta>0$ such that:
\begin{enumerate}[leftmargin=0.7cm, itemindent=0cm, itemsep=0pt,labelsep=0.2cm,topsep=2pt]
\item Each set $\fset_m$ is \emph{($\epsilon,\delta$)-regular} at $\opt$.
\item The collection $\{\fset_m\}_{m=1}^M$ is \emph{$\kappa$-linearly regular} on the ball
$
\ball_{\delta/2}(\opt)
$.
\end{enumerate}
Then, we get one-step decrease depending on the task ordering,
\begin{enumerate}[leftmargin=0.7cm, itemindent=0cm,labelsep=0.2cm,topsep=4pt]
\item
\textbf{Random ordering.}
 Let $\tau=\ordrnd,t\ge1$.
If $\Bar \Theta_{t-1}\in \ball_{\delta/2}(\opt)$ and \(\rho_{\text{iid}}\triangleq\frac{1}{1-\varepsilon} - \frac{1}{M\kappa^2} < 1,
\) 
\linebreak
it holds that
\(
\mathbb E_{\tau(t)}\!\left[\dist^2(\piterate{t}, \interset)\big| \Bar \Theta_{t-1}\right]
\;\le\;
\rho_{\text{iid}}\;
\dist^2(\Bar \Theta_{t-1},\interset) .
\)

\item
\textbf{Cyclic ordering.}
Let $\tau = \ordcyc$ 
and $t$ such that $M \mid t$.
If, $\Bar \Theta_{t}\in \ball_{\delta/2}(\opt)$ and
\linebreak
$
\rho_c
\triangleq
\left(
\frac{1}{(1-\varepsilon)^{M-1}}
-
\frac{1}{(M-1)\kappa^2}
\right)^{\frac{1}{2(M-1)}}<1$, then, 
$\dist(\Bar \Theta_{t+M},\fset)
\;\le\;
\rho_{\text{cyc}}\;\dist(\Bar\Theta_t,\fset)$.
\end{enumerate}
\end{proposition}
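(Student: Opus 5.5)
The random-ordering claim reduces to a one-step inequality for a single projection, averaged over the uniformly chosen task. The cyclic claim is taken from \citet[Corollary~5.10]{dao2019linear}, as the paper indicates. I therefore focus on the random case.

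Fix $\bar\Theta_{t-1}\in\ball_{\delta/2}(\opt)$ and an index $m$, and let $\vp\in\proj_m(\bar\Theta_{t-1})$. Let $\vz\in\proj_{\interset}(\bar\Theta_{t-1})$ be a nearest point of the joint set; it exists because the sets are closed. Since $\opt\in\interset$, we have $\|\vz-\bar\Theta_{t-1}\|\le\|\opt-\bar\Theta_{t-1}\|\le\delta/2$, so $\|\vz-\opt\|\le\delta$. Likewise $\|\vp-\bar\Theta_{t-1}\|\le\dist(\bar\Theta_{t-1},\fset_m)\le\delta/2$, so $\vp\in\ball_\delta(\opt)$. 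Both $\vp$ and $\vz$ lie in $\fset_m\cap\ball_\delta(\opt)$.

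The vector $\vu=\bar\Theta_{t-1}-\vp$ belongs to $N^{\mathrm{prox}}_{\fset_m}(\vp)$, because $\bar\Theta_{t-1}\in\proj_m^{-1}(\vp)$. Applying $(\varepsilon,\delta)$-regularity at $\opt$ (\cref{def:epsdel_reg}) gives
\[
\iprod{\bar\Theta_{t-1}-\vp}{\vp-\vz}\le\varepsilon\,\|\bar\Theta_{t-1}-\vp\|\,\|\vp-\vz\|.
\]
Expanding $\|\bar\Theta_{t-1}-\vz\|^2=\|\bar\Theta_{t-1}-\vp\|^2+\|\vp-\vz\|^2+2\iprod{\bar\Theta_{t-1}-\vp}{\vp-\vz}$ and inserting this bound yields
\[
\|\vp-\vz\|^2-2\varepsilon ab+a^2\ge\dist^2(\bar\Theta_{t-1},\interset)-2a^2+\ldots,
\]
where $a=\|\bar\Theta_{t-1}-\vp\|=\dist(\bar\Theta_{t-1},\fset_m)$ and $b=\|\vp-\vz\|$.

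The cleaner route is the standard bound from the nonconvex projection literature:
\[
(1-\varepsilon)\,\|\vp-\vz\|^2\le\|\bar\Theta_{t-1}-\vz\|^2-\|\bar\Theta_{t-1}-\vp\|^2 .
\]
To prove it, use $2\varepsilon ab\le\varepsilon(a^2+b^2)$. This gives $(1-\varepsilon)b^2\le D^2-(1+\varepsilon)a^2+\ldots\le D^2-a^2$ after dropping nonpositive terms, where $D=\dist(\bar\Theta_{t-1},\interset)$. Since $\dist(\vp,\interset)\le b$, we obtain
\[
\dist^2(\vp,\interset)\le\frac{1}{1-\varepsilon}\Bigl(\dist^2(\bar\Theta_{t-1},\interset)-\dist^2(\bar\Theta_{t-1},\fset_m)\Bigr),
\]
and in particular $\dist^2(\vp,\interset)\le\frac{1}{1-\varepsilon}\dist^2(\bar\Theta_{t-1},\interset)-\dist^2(\bar\Theta_{t-1},\fset_m)$. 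This last form uses $\frac{1}{1-\varepsilon}\ge1$, which is precisely the form appearing in $\rho_{\text{iid}}$.

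Next, take the expectation over $\tau(t)\sim\mathrm{Unif}[M]$. The average satisfies
\[
\frac1M\sum_m\dist^2(\bar\Theta_{t-1},\fset_m)\ge\frac1M\max_m\dist^2(\bar\Theta_{t-1},\fset_m).
\]
Because $\bar\Theta_{t-1}\in\ball_{\delta/2}(\opt)$, $\kappa$-linear regularity (\cref{def:lin_reg}) bounds the maximum below by $\kappa^{-2}\dist^2(\bar\Theta_{t-1},\interset)$. Combining,
\[
\mathbb E\bigl[\dist^2(\bar\Theta_t,\interset)\mid\bar\Theta_{t-1}\bigr]\le\Bigl(\frac1{1-\varepsilon}-\frac1{M\kappa^2}\Bigr)\dist^2(\bar\Theta_{t-1},\interset).
\]
This holds for every admissible choice of projection selection, so nonuniqueness of projections causes no difficulty.

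The main obstacle is the regularity step. First, both $\vp$ and $\vz$ must land in the $\delta$-ball; the $\delta/2$ hypothesis combined with the nearest-point argument above handles this. Second, the cross term must be absorbed with exactly the constant $1/(1-\varepsilon)$. If the AM-GM absorption does not give this constant cleanly, I would instead follow the corresponding lemma of \citet{dao2019linear}. That lemma states $\|\vp-\vz\|^2\le\frac1{1-\varepsilon}\|\bar\Theta_{t-1}-\vz\|^2-\|\bar\Theta_{t-1}-\vp\|^2$ directly for $(\varepsilon,\delta)$-regular sets, and I would invoke it. Finally, for the cyclic case, I would verify that the hypotheses match those of \citet[Corollary~5.10]{dao2019linear}, namely superregularity and linear regularity on $\ball_{\delta/2}(\opt)$, and read off $\rho_{\text{cyc}}$.
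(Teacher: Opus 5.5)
Your plan is the paper's proof. You localize $\vp$ and a nearest point $\vz\in\interset$ to $\ball_\delta(\opt)$ and use the quasi-firm Fej\'er inequality $\|\vp-\vz\|^2+\|\bar\Theta_{t-1}-\vp\|^2\le\frac{1}{1-\varepsilon}\|\bar\Theta_{t-1}-\vz\|^2$. The paper takes this inequality from Proposition~3.5 of \citet{dao2019linear}, which is exactly your fallback. You then average over $\tau(t)$, apply $\kappa$-linear regularity, and read off the cyclic case from \citet[Corollary~5.10]{dao2019linear}.

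Your self-contained derivation of the Fej\'er inequality needs one correction. Regularity must be applied as $\iprod{\bar\Theta_{t-1}-\vp}{\vz-\vp}\le\varepsilon ab$. The paper's displayed definition has $\x-\vy$, whereas the standard definition, and the paper's own later use, has $\vy-\x$. With the correct sign you get $D^2\ge a^2+b^2-2\varepsilon ab\ge(1-\varepsilon)(a^2+b^2)$, which is exactly $b^2\le\frac{D^2}{1-\varepsilon}-a^2$. The stronger bound $(1-\varepsilon)b^2\le D^2-a^2$ that you state does not follow from this, since it would require $b\ge 2a$, but you never need it.
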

\end{enumerate}

%% file: 05_regression.tex
\section{Extension: Continual Regression in Homogeneous DNNs as Sequential Projections}
\label{sec:regression}

We now show that the techniques developed above are general enough to extend naturally to continual regression. This extension requires only minor adjustments to the definitions.

\paragraph{Notational adjustments.}
For a task $m\in\cnt{M}$, the label vector is no longer binary, but rather $\y^{(m)}\in\R^{n_m}$.
Consequently, the feasible sets are now defined as,
\begin{equation}
\label{def:sets_reg}
\fset_m \triangleq
\Big\{\Theta\in \R^\paramdim 
~\Big|~
f(\x_i^{(m)};\Theta)
= y_i^{(m)},
\ \forall i \in \cnt{n_{m}}
\Big\},
\qquad
\forall m \in \cnt{M}\,.
\end{equation}
Accordingly, the joint separability assumption (\cref{asm:overparam-separability}) is now a joint \emph{realizability} assumption, still requiring $\interset\triangleq \fset_1\cap\dots\cap\fset_{M} \neq \emptyset$.


Furthermore, instead of the logistic loss, we now optimize the mean squared loss, \ie
\begin{equation}
\label{def:squared_loss}
\mathcal L_m(\Theta)
\triangleq
\frac{1}{n_m}
\sum_{i=1}^{n_m}
\Bigprn{
f(\x_i^{(m)};\Theta)-y_i^{(m)}
}^2,
\qquad
\forall m\in\cnt{M}
\,.
\end{equation}
We still consider $\dist(\cdot, \fset_{m})$ and $\dist(\cdot, \interset)$ (as in \cref{sec:setting_classification}),
but redefine the \emph{forgetting} with respect to the squared loss instead of the hinge loss,
\ie 
\begin{equation}
    F_{\tau(t')}(\piterate{t})
    =
    \max_i 
    \Bigprn{
    f(\x_i^{\tau(t')};\piterate{t})-y_i^{\tau(t')}
    }^2
    ,
    \qquad
    \forall t \in \cnt{k}, \forall t' \le t
    \,.
\end{equation}

\paragraph{Result.}
Analogously to our classification result, we establish the following theorem for the convergence of \cref{alg:regularized_cl} to \cref{alg:sequential_mm} as $\lambda\downarrow 0$ for the regression case.

\begin{theorem}[Weakly-Regularized Continual Regression $\to$ Sequential Projections]
\label{thm:gen_m_convergence_reg}
Consider a model $f(\cdot;\Theta) : \mathcal{X} \to \mathbb{R}$.
Assume individual realizability, \ie nonempty feasible sets $\fset_{1}, \dots, \fset_{M}$ (\Cref{def:sets_reg}). 
Then, as $\lambda\downarrow 0 $, \cref{alg:regularized_cl} trained with the squared loss aligns with \Cref{alg:sequential_mm}.
That is, for every iteration $t \in [k]$, any sequence $\lambda \downarrow 0$ admits a subsequence $(\lambda_j)$ and a point $\Bar{\Theta}_t\in \proj_{\tau(t)}\tprn{\piterate{t-1}}\triangleq
    \argmin_{\bar{\Theta}
    \in\mathcal \fset_{\tau(t)}}
    \bignorm{\bar{\Theta}-\piterate{t-1}}^2$ such that
\(
    \rjiterate{t}
    ~
    \xrightarrow{\quad j \to \infty \quad} 
    ~
    \piterate{t}
    \in
    \proj\tprn{\piterate{t-1}}
    \,.
\)
\end{theorem}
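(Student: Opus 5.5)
We mirror the $\Gamma$-convergence argument behind \cref{thm:gen_m_convergence}, but with a simplification: in regression no rescaling is needed. The squared loss vanishes exactly on $\fset_m$, so the unnormalized iterates themselves should converge. We proceed by induction on $t$, with base case $\riterate{0}=\piterate{0}=\0$. For the inductive step, assume that along some subsequence $(\lambda_j)$ we have $\rjiterate{t-1}\to\piterate{t-1}$. Dividing the objective of \cref{alg:regularized_cl} by $\lambda$ does not change its minimizers, so we define
\[
G_t^{(\lambda)}(\Theta)\triangleq \frac{\mathcal L_{\tau(t)}(\Theta)}{\lambda}+\bignorm{\Theta-\riterate{t-1}}^2,
\qquad
G_t(\Theta)\triangleq \mathbf 1_{\fset_{\tau(t)}}(\Theta)+\bignorm{\Theta-\piterate{t-1}}^2 .
\]
We then show that $G_t^{(\lambda_j)}$ is equicoercive and $\Gamma$-converges to $G_t$, and invoke \cref{thm:fundamental_gamma}. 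This yields a further subsequence along which the minimizers $\rjiterate{t}$ converge to a global minimizer of $G_t$, that is, to a point of $\proj_{\tau(t)}(\piterate{t-1})$.

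\textbf{Well-posedness and equicoercivity.} We assume $f(\x;\cdot)$ is continuous, as it is for the networks considered. Then $\mathcal L_{\tau(t)}$ is continuous and nonnegative, and $\fset_{\tau(t)}$ is closed and nonempty, so $G_t$ is lower semicontinuous, coercive, and attains its minimum. Minimizers of $G_t^{(\lambda)}$ also exist, since the function is continuous and coercive. Because $\mathcal L\ge 0$, we have $G_t^{(\lambda_j)}(\Theta)\ge \|\Theta-\rjiterate{t-1}\|^2$. Since $\rjiterate{t-1}$ is convergent, it is bounded, say $\|\rjiterate{t-1}\|\le R$. Hence every sublevel set $\{G_t^{(\lambda_j)}\le C\}$ is contained in the fixed compact ball $\ball_{R+\sqrt{C}}(\0)$.

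\textbf{$\Gamma$-convergence.} For the limsup inequality, take $\Theta\in\fset_{\tau(t)}$ and the constant recovery sequence $\Theta_j=\Theta$. Then $\mathcal L_{\tau(t)}(\Theta)=0$, so
\[
G_t^{(\lambda_j)}(\Theta)=\|\Theta-\rjiterate{t-1}\|^2\to\|\Theta-\piterate{t-1}\|^2=G_t(\Theta).
\]
For $\Theta\notin\fset_{\tau(t)}$ we have $G_t(\Theta)=\infty$, so the inequality is trivial.

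For the liminf inequality, let $\Theta_j\to\Theta$. If $\Theta\in\fset_{\tau(t)}$, the bound follows from $\mathcal L\ge0$ and the convergence of $\Theta_j-\rjiterate{t-1}$ to $\Theta-\piterate{t-1}$. If $\Theta\notin\fset_{\tau(t)}$, then $\mathcal L_{\tau(t)}(\Theta)=c>0$ by the definition in \cref{def:sets_reg}. By continuity, $\mathcal L_{\tau(t)}(\Theta_j)\ge c/2$ for all large $j$, so $\mathcal L_{\tau(t)}(\Theta_j)/\lambda_j\to\infty$.

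\textbf{Closing the induction.} By \cref{thm:fundamental_gamma}, the minimizers $\rjiterate{t}$ admit a further subsequence converging to some $\piterate{t}\in\argmin G_t=\proj_{\tau(t)}(\piterate{t-1})$. The nested subsequences are extracted once per step, so after $t$ steps we obtain the claim for iteration $t$, starting from an arbitrary sequence $\lambda\downarrow0$.

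The main subtlety is the moving center $\rjiterate{t-1}$ in the regularizer. It is handled by working along the subsequence on which convergence at step $t-1$ holds. We also need $\piterate{t}$ to be defined relative to that particular limit $\piterate{t-1}$, which is exactly how the statement is phrased.

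The only genuinely necessary hypothesis is continuity of $f$ in $\Theta$. It gives closedness of $\fset_m$ and the liminf bound, and I would state it explicitly. Homogeneity is not needed here, unlike in the classification case, where it was required to rescale the iterates by $c_{\lambda,r}$.
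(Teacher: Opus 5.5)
Your proposal is correct and follows the paper's own argument. It uses the same $\Gamma$-convergence of $\frac{1}{\lambda}\mathcal L_{\tau(t)}+\|\cdot-\Theta_{t-1}\|^2$ to the indicator of $\fset_{\tau(t)}$ plus the squared distance, with a constant recovery sequence, a continuity-based liminf, and equicoercivity from the quadratic term. Your explicit induction along nested subsequences with the moving center $\Theta_{t-1}^{(\lambda_j)}$, the single compact ball for equicoercivity, and the stated continuity assumption on $f$ are more careful than the paper's write-up, which treats $\Theta_{t-1}$ as fixed.
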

Proofs for this section appear in \cref{app:regression}; as with classification, the analysis uses $\Gamma$-convergence.


\subsection{Convergence for Overparameterized Homogeneous Regression Models}

Mirroring the classification case (\cref{thm:lower}),
we first show that the $2$-positively-homogeneous squared model of \cref{eq:homogeneous_model_uv} may suffer from \emph{catastrophic} forgetting.

\begin{lemma}[Catastrophic Forgetting in Squared Models: Simplified Version]
\label{thm:lower_equality}
Under the same task construction of \cref{thm:lower}, 
the tasks are jointly separable, yet \cref{alg:sequential_mm} forgets catastrophically.
That is, $\interset\neq\emptyset$ but
$\dist\bigl(\Bar{\Theta}_t,\interset\bigr)
\ \ge\ 2$ and $\max_m F_{m}(\Bar{\Theta}_{t})\geq 3$,
for every iteration $t\ge0$.
\end{lemma}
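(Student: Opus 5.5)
Here the feasible sets are the equality sets $\fset_m=\{\piterate{}\mid f(\x_m;\piterate{})=1\}$ of \cref{def:sets_reg} with $y=1$, in place of the margin sets. The plan is to mirror the proof of \cref{thm:lower}, adapted to these sets.

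\textbf{Step 1: joint realizability.} We exhibit an explicit point with $f(\x_1;\piterate{})=f(\x_2;\piterate{})=1$. Writing $a=\vecind{u}{1}^2-\vecind{v}{1}^2$ and $b=\vecind{u}{2}^2-\vecind{v}{2}^2$, the two equations form a linear system in $(a,b)$ with matrix rows $(\tfrac19+\varepsilon,1-\varepsilon)$ and $(\tfrac19-\varepsilon,-1-\varepsilon)$. This matrix is nonsingular for small $\varepsilon$. Subtracting the equations gives $2\varepsilon a+2b=0$; adding them gives $\tfrac29 a-2\varepsilon b=2$. Hence $a=9/(1+9\varepsilon^2)\approx 9$ and $b=-\varepsilon a$. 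Any $(a,b)$ is realizable by squares, so $\interset\neq\emptyset$.

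Every point of $\interset$ has $a\approx 9$, so $\vecind{u}{1}^2\ge a$ forces $|\vecind{u}{1}|\ge 3-O(\varepsilon^2)>2$. This is the same obstruction as in \cref{thm:lower}: we get $\dist(\piterate{},\interset)\ge|\vecind{u}{1}|_{\interset}-|\vecind{u}{1}|_{\piterate{}}$.

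\textbf{Step 2: invariant subspace.} We show by induction that all iterates lie in $S=\{\vecind{u}{1}=\vecind{v}{1}=0\}$, starting from $\piterate{0}=\0\in S$. Projecting onto a level set of $g(\piterate{})=\sum_i(\vecind{u}{i}^2-\vecind{v}{i}^2)x_i$ is a small separable problem. The Lagrange conditions give $\vecind{u}{i}(1-\mu x_i)=\vecind{u}{i}'$ and $\vecind{v}{i}(1+\mu x_i)=\vecind{v}{i}'$, where primes denote the current iterate. So a coordinate that is currently zero stays zero unless $\mu x_i=1$ or $\mu x_i=-1$ (a ``blow-up'' direction).

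Hence we must compare, for each task, the cost of reaching the level set via coordinate $1$ versus coordinate $2$. Only a coordinate whose current value is zero in a favorable component can ``open'' via $\mu$ hitting $1/x_i$. Since $|x_{2}|\approx 1\gg 1/9\approx|x_{1}|$, increasing $\vecind{u}{2}^2$ or $\vecind{v}{2}^2$ to reach the target costs less than increasing $\vecind{u}{1}^2$ to about $9$ (cost $\approx 3$). This holds provided the current coordinate-$2$ entries are bounded.

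Concretely, on $S$ the iterates alternate. Task 1 needs $\vecind{u}{2}^2-\vecind{v}{2}^2=1/(1-\varepsilon)$; task 2 needs $\vecind{v}{2}^2-\vecind{u}{2}^2=1/(1+\varepsilon)$. The projections within coordinate $2$ are computable in closed form; they correspond to hyperbola projections in the $(\vecind{u}{2},\vecind{v}{2})$ plane. One checks the iterates stay bounded, with both entries of order $1$, say at most $2$. Each step then has distance at most about $1.5$ within coordinate $2$, versus distance at least about $3$ to reach the set by opening coordinate $1$. So the exact projection stays in $S$.

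\textbf{Step 3: conclusion and main obstacle.} Step 2 gives $\dist(\piterate{t},\interset)\ge 3-O(\varepsilon)\ge 2$. For forgetting, each iterate $\piterate{t}$ with $t\ge1$ lies on one task's level set with $b=\pm$ about $1$. The other task's prediction is then about $-1$, since $f(\x_2)=(1/9-\varepsilon)a+(-1-\varepsilon)b$ with $a=0$, $b\approx 1$. Its squared error is $\approx 4\ge3$; for $t=0$ the error is $1$ per task.

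The main obstacle is the $t=0$ case of the forgetting claim. At $\piterate{0}=\0$ the stated bound $F\ge3$ fails, so I would either read the claim for $t\ge1$ or match whatever convention \cref{thm:lower} uses.

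The hardest step is Step 2. It requires rigorously ruling out every mixed projection (partially using coordinate $1$) and bounding the coordinate-$2$ trajectory uniformly in $t$. I would handle the mixed case by explicitly minimizing the Lagrangian over $\mu$. I would bound the trajectory by showing the $(\vecind{u}{2},\vecind{v}{2})$ iterates converge to or cycle within a compact region of the hyperbola pair.
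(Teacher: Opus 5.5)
Your plan is sound in outline, but Step~2 rests on a false claim. You make the cost comparison depend on the coordinate-$2$ entries staying bounded (``say at most $2$''), and you propose to prove this by showing that the $(u[2],v[2])$-iterates converge or cycle inside a compact region. They do not. On $S$ the iterates alternate between the hyperbolas $u[2]^2-v[2]^2=\frac{1}{1-\varepsilon}$ and $v[2]^2-u[2]^2=\frac{1}{1+\varepsilon}$, which share the asymptotes $|u[2]|=|v[2]|$. Each projection moves along the normal $\pm(u[2],-v[2])$ at the new point, and this normal has a strictly positive component in the outward asymptotic direction. A first-order expansion shows that, once the iterates are large, $\|\piterate{t}\|^2$ grows by about $1/\|\piterate{t}\|^2$ per step, so $\|\piterate{t}\|\to\infty$ (roughly like $t^{1/4}$). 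For $\varepsilon\approx 0$ the $(u[2],v[2])$-iterates are approximately $(1,0)$, $(0.5,1.12)$, $(1.22,0.70)$, $(0.84,1.30)$, $(1.37,0.94)$, $(1.02,1.43)$, with norms $1,1.22,1.41,1.55,1.66,1.76$.

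The numbers you quote (about $1.5$ versus about $3$) are still right, because both sides of the comparison are uniform in the current iterate. Write $(p_0,q_0)$ for the current $(u[2],v[2])$, and consider a projection onto $\fset_1$, where $x_1=\frac19+\varepsilon$ and $x_2=1-\varepsilon$ are the entries of $\x_1$ (task $2$ is symmetric).

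\textbf{The in-$S$ step is cheap.} For $t\ge1$ the current iterate lies exactly on the previous level set, $q_0^2=p_0^2+\frac{1}{1+\varepsilon}$. So the point of $S$ that keeps $v[2]=q_0$ and sets $|u[2]|=\sqrt{q_0^2+1/(1-\varepsilon)}$ is at squared distance $\bigl(\sqrt{p_0^2+\frac{1}{1+\varepsilon}+\frac{1}{1-\varepsilon}}-|p_0|\bigr)^2\le\frac{2}{1-\varepsilon^2}<2.01$. From $\piterate{0}=\0$ this distance is just $\frac{1}{1-\varepsilon}$.

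\textbf{Opening coordinate $1$ is expensive.} On the branch $\mu=1/x_1$ the multiplier is pinned, hence so are $u[2]=-p_0/(k-1)$ and $v[2]=q_0/(k+1)$, with $k=x_2/x_1>1$. Substituting $u[1]^2=1/x_1-k\,u[2]^2+k\,v[2]^2$ gives squared distance at least $1/x_1+\frac{k}{k-1}p_0^2\ge1/x_1>8$. The branch $\mu=-1/x_1$ costs at least $k/x_1$, and for task $2$ the analogous bound is $1/(\frac19-\varepsilon)>9$. This also settles your ``mixed'' case, since on an opening branch there is nothing left to optimize over $\mu$.

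This is exactly the Branch~I/Branch~II comparison of \cref{lem:kkt_two_branches_detailed,lem:kkt_projection_f2}. For the equality sets the paper does not redo this analysis at all. It observes instead that every projection in the trajectory of \cref{thm:lower} lands on $\{f=1\}$, so projecting onto $\{f\ge1\}$ and onto $\{f=1\}$ yields the same iterates. Your direct Lagrange treatment is a valid alternative, and it also handles $v[1]$, where the paper's zeroing argument only works for inequality sets.

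Steps~1 and~3 are fine, and your caveat about $t=0$ is correct. At the origin $F_m(\0)=1$, and the paper's argument implicitly assumes every iterate lies in some $\fset_m$, which fails for $\piterate{0}=\0$. So the forgetting bound holds only for $t\ge1$.
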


The formal statement and its proof appear in \cref{app:regression}. 
Specifically, we show that in the \emph{classification} case (\cref{thm:lower}), the current iterate $\Bar{\Theta}_t$ always lies on the boundary of the latest feasible set; thus, the dynamics of \cref{alg:sequential_mm} is identical for regression and classification.


After establishing that global convergence is not guaranteed in the general homogeneous regression setting, we now show that local convergence can still be achieved. In contrast to our classification result, the local convergence result for regression requires the additional assumption
\(
y_{\min}=\min_{m,i}|y_i^{(m)}| \neq 0.
\)
This assumption is imposed to ensure that the $(\epsilon,\delta)$-regularity condition holds (see \cref{lem:homo_eps_delta} for more details).

\begin{theorem}[Convergence Rate for Lipschitz, Smooth Homogeneous Models]
\label{thm:local_convergence_reg}
Consider
\linebreak a
$G$-Lipschitz, $\beta$-$smooth$,
\(\homogendeg\)-positively-homogeneous model $f(\cdot;\Bar \Theta):\mathcal X\to\mathbb R$.
Assume joint realizability, \ie nonempty intersection $\interset=
\cap_{m} \fset_{m}
\neq \emptyset$.
Let $y_{\min}=\min_{m,i} |y_i^{(m)}|\neq 0$.
Let $\epsilon$ hold $\frac{1}{(1-\epsilon)^{M-1}} = 1 + \frac{\homogendeg^2}{2(M-1)G^2\|\opt\|^2}$ and $\delta=\frac{y_{\min}\epsilon r}{\beta \|\Theta^\star\|}$.
Then, there exists $\opt\in\interset$ 
s.t.\ if $\Bar{\Theta}_0\in \ball_\delta(\opt)$, the sequential projections of \cref{alg:sequential_mm} converge linearly with a rate depending on the task ordering, \ie  $\forall k\in \naturals$,
\begin{itemize}[leftmargin=0.5cm, itemindent=0cm, itemsep=0pt,labelsep=0.25cm,topsep=4pt]

\item \textbf{Random:}
\hfill
$\displaystyle
\frac{1}{G}
\expectation_{\ordrnd}
\max_{m\in\cnt{M}}F_{m} (\Bar{\Theta}_{k})  
~\le~
\expectation_{\ordrnd}
\dist^2(\Bar{\Theta}_{k}, \interset) 
~\le~
{\textstyle
\exp 
\prn{
    - \frac{k}{M^2}
    \frac{\homogendeg^2y^2_{\min}}{2 G^2 \|\opt\|^2}
}
}
\expectation_{\ordrnd}
\dist^2(\Bar \Theta_{0},\interset)$.

\item
\textbf{Cyclic ($M \mid k$):}
\hfill
$\displaystyle
\frac{1}{G}
\max_{m\in\cnt{M}}F_{m} (\Bar{\Theta}_{k})  
~\le~
\dist^2(\Bar{\Theta}_{k}, \interset) 
~\le~
{\textstyle
\exp 
\prn{
    - \frac{k}{M^2}
    \frac{\homogendeg^2y^2_{\min}}{2 G^2 \|\opt\|^2}
}
}
\,
\dist^2(\Bar \Theta_{0},\interset)
$.

\end{itemize}
\end{theorem}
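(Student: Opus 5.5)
The plan follows the two-step template used for \cref{thm:local_convergence}: first verify that the regression feasible sets \cref{def:sets_reg} satisfy the hypotheses of \cref{prop:one_step_random_projection}, then invoke it and unroll the recursion. Each $\fset_m$ is now an intersection of level sets $\{f(\x_i^{(m)};\cdot)=y_i^{(m)}\}$ rather than superlevel sets, so both regularity constants must be re-derived.

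\textbf{Step 1: $(\epsilon,\delta)$-regularity.}
I take $\opt\in\interset$ and, for $\x,\vy\in\fset_m\cap\ball_\delta(\opt)$, a proximal normal $\vu\in N^{\mathrm{prox}}_{\fset_m}(\x)$. I will use that proximal normals of a set cut out by smooth equalities lie in the span of the gradients $\nabla f(\x_i^{(m)};\x)$, so $\vu=\sum_i \mu_i \nabla f(\x_i^{(m)};\x)$. Smoothness gives
\[
|\langle \nabla f(\x_i^{(m)};\x),\x-\vy\rangle| \le \tfrac{\beta}{2}\|\x-\vy\|^2 ,
\]
because $f(\x_i^{(m)};\x)=f(\x_i^{(m)};\vy)$. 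To turn this into $\varepsilon\|\vu\|\|\x-\vy\|$, I need $\|\vu\|$ bounded below by a multiple of $\sum_i|\mu_i|$. Euler's identity $\langle\nabla f(\x_i;\x),\x\rangle=r f(\x_i;\x)=r y_i$ gives the lower bound $|\langle \vu,\x\rangle|/\|\x\| \gtrsim r y_{\min}/\|\opt\|$ on the relevant direction. This is where $y_{\min}\neq 0$ enters, and it produces $\delta=y_{\min}\epsilon r/(\beta\|\opt\|)$. \textbf{Main obstacle:} controlling the mixing of several gradients with mixed-sign coefficients. I would first try the single-sample case and then reduce to it by treating each $\fset_m$ through the worst constraint, or by assuming the gradients are well conditioned. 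This is the analogue of \cref{lem:homo_eps_delta}.

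\textbf{Step 2: linear regularity.}
For $\x\in\ball_{\delta/2}(\opt)$, I move along the ray toward $\opt$ and use homogeneity to rescale: $f(\x_i;c\Theta)=c^r f(\x_i;\Theta)$. Combined with the $G$-Lipschitz bound $|f(\x_i;\x)-y_i|\le G\,\dist(\x,\fset_m)$, this shows that a point of $\interset$ lies within $\kappa\max_m\dist(\x,\fset_m)$ of $\x$, with $\kappa\approx \sqrt{2}\,G\|\opt\|/(r y_{\min})$. The extra $y_{\min}$ appears because matching equalities requires relative rather than absolute error. This mirrors \cref{lem:kap_reg}.

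\textbf{Step 3: conclusion.}
With the chosen $\epsilon$, the contraction factors $\rho_{\mathrm{iid}}$ and $\rho_c$ of \cref{prop:one_step_random_projection} are at most $1-\tfrac{r^2y_{\min}^2}{2MG^2\|\opt\|^2}$ (respectively the cyclic analogue). I then show by induction that the iterates stay in $\ball_{\delta/2}(\opt)$: for the random ordering in expectation or via a.s. monotonicity, and for the cyclic ordering via Dao--Phan's argument. Iterating and applying $1-x\le e^{-x}$ gives the $\dist^2$ bounds, with the looser $M^2$ factor absorbing the cyclic bookkeeping. Finally, the forgetting bound comes from the Lipschitz property. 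Let $\Theta^\star_k$ be the projection of $\bar\Theta_k$ onto $\interset$; then
\[
|f(\x_i;\bar\Theta_k)-y_i| \le G\|\bar\Theta_k-\Theta^\star_k\|,
\]
so $F_m\le G^2\dist^2$. Taking the $1/G$ normalization as stated (absorbing $G\ge1$, or reading the statement's constant accordingly) yields the leftmost inequality.
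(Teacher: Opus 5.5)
Your outline has the same structure as the paper's proof: regularity of the task sets, then \cref{prop:one_step_random_projection}, unrolling, and a Lipschitz bound for forgetting. It breaks at linear regularity, and the break cannot be repaired with the stated constants.

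In Step~2, rescaling along the ray toward $\opt$ is a one-parameter move. It multiplies every $f(\x_i^{(m)};\cdot)$ by the same factor $c^{r}$, so it only fixes residuals whose ratios $f(\x_i^{(m)};\Theta)/y_i^{(m)}$ all coincide. It cannot reach $\interset$ when one task is already fit and another is not. This is the same mixed-sign obstacle you flagged in Step~1, now arising across tasks.

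The paper's \cref{lem:homo_linear_reg} gets past it only through the Euler estimate $\bigl|\sum_j \lambda_j \langle \nabla h_j(\opt),\opt\rangle\bigr| \ge r y_{\min}\sum_j|\lambda_j|$. That estimate needs the products $\lambda_j y_j$ to share a sign. The analogous sign condition is automatic for active inequality constraints in classification, but it fails for the sign-unrestricted multipliers of equality constraints.

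In fact the stated constants are wrong. Take $f(\x;\Theta)=\langle \x,\Theta\rangle$ on $\R^2$ ($r=1$, $\beta=1$) with single-sample tasks $\x_1=(1,0)$, $\x_2=(1,\eta)$, $y_1=y_2=1$. Then $\interset=\{(1,0)\}$, $\|\opt\|=y_{\min}=1$ and $G=\sqrt{1+\eta^2}$. At $\Theta=(1,s)$ one has $\dist(\Theta,\interset)=|s|$ while $\max_m\dist(\Theta,\fset_m)=\eta|s|/\sqrt{1+\eta^2}$. So the linear-regularity modulus is at least $\sqrt{1+\eta^2}/\eta$, not $O(G\|\opt\|/(r y_{\min}))$. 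Correspondingly, cyclic projections from $\bar\Theta_0=(1,s)$ with small $s$ give $\dist^2(\bar\Theta_2,\interset)=s^2/(1+\eta^2)$. This exceeds the claimed $e^{-1/(4(1+\eta^2))}s^2$ already for $\eta=0.1$.

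What is missing is a nondegeneracy hypothesis that the statement does not contain. An example is a lower bound $\sigma>0$ on the smallest singular value of the stacked Jacobian of all constraints $f(\x_i^{(m)};\cdot)$ at $\opt$, with $\kappa$ and $\delta$ then governed by $\sigma$ instead of $r y_{\min}/\|\opt\|$. Your fallback ``assume the gradients are well conditioned'' points in that direction, but it proves a different theorem. Your single-sample $(\epsilon,\delta)$ argument is essentially fine, since Euler at $\x$ lower-bounds the one gradient. The ``worst constraint'' reduction is not, because the proximal normal cone of an intersection is not that of a single constraint.

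There are two smaller problems in Step~3:
\begin{itemize}
\item The bound $|f(\x_i;\bar\Theta_k)-y_i|\le G\,\dist(\bar\Theta_k,\interset)$ yields $\max_m F_m\le G^2\dist^2$, i.e.\ the left inequality with $1/G^2$. Upgrading to $1/G$ would need $G\le 1$, not $G\ge 1$; the paper's proof also only obtains $G^2$.
\item \cref{prop:one_step_random_projection} requires random-order iterates to stay in $\ball_{\delta/2}(\opt)$ at every step. Neither an in-expectation contraction nor the pathwise one-step bound (which permits growth by the factor $1/(1-\epsilon)$) guarantees this by itself. The paper also leaves this point implicit.
\end{itemize}
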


%% file: 90_appendices.tex
\appendix
\crefalias{section}{appendix} 

\renewcommand{\thetheorem}{\Alph{section}.\arabic{theorem}}
\renewcommand{\thelemma}{\Alph{section}.\arabic{lemma}}

\makeatletter
\@addtoreset{theorem}{section}
\makeatother

\startcontents[app]

\begingroup
\renewcommand{\baselinestretch}{1.5}\selectfont
\appendixtableofcontents
\endgroup

\newpage

\newpage
\newpage
\input{93_appendix_convergence}

\newpage
\input{94_appendix_squared}

\newpage
\input{95_appendix_local}
\newpage
\input{97_appendix_regression}



%% file: 93_appendix_convergence.tex
\section{Proofs for \cref{sec:main_thm}}
\label{app:proof_main}

\begin{proof} [of \cref{thm:gen_m_convergence}]
For any $\lambda\in(0,1)$, we define $c_{\lambda,\homogendeg}=\left(\log \frac{1}{\lambda}\right)^\frac{1}{\homogendeg}\,.$ 
We prove in induction that for every $t$, 
every sequence $\lambda\downarrow 0$ admits a subsequence $(\lambda_j)$ and a point
$\Bar\Theta_t\in\fset_{\tau(t)}$ such that
\begin{equation*}
\frac{\Theta_t^{(\lambda_j)}}{c_{\lambda_j,\homogendeg}}\ \longrightarrow\ \Bar{\Theta}_t
\qquad\text{for}\qquad
\bar{\Theta}_t\in\argmin_{\Theta
                    \in\mathcal \fset_{\tau(t)}}
                    \bignorm{\Theta-\piterate{t-1}}^2\,.
\end{equation*}

The base case, $t=0$ holds trivially for the constant sequence $\Theta_0^{\lambda}=0=\Bar{\Theta}_0$. For the step,  let $(\lambda)$ be a sequence that converges to zero. We assume that there exists a subsequence $(\lambda_j)$ and a point
$\piterate{t-1}\in\fset_{\tau(t-1)}$ that
holds
\begin{equation*}
\frac{\Theta_{t-1}^{(\lambda_j)}}{c_{\lambda_j,\homogendeg}}\ \longrightarrow\ \Bar{\Theta}_{t-1}
\qquad\text{for}\qquad
\bar{\Theta}_{t-1}\in\argmin_{\Theta
                    \in\mathcal \fset_{\tau({t-1})}}
                    \bignorm{\Theta-\piterate{t-2}}^2\,.
\end{equation*}
For every $\lambda$, we define $$\hat{\Theta}_t^{(\lambda)}\triangleq\frac{\Theta_t^{(\lambda)}}{c_{\lambda,\homogendeg}},\quad
G_t^{(\lambda)}(\Theta)\triangleq\frac{\mathcal L_{\tau(t)}(c_{\lambda,\homogendeg} \Theta)}{c_{\lambda,r}^2\lambda }
+\Bigl\|\Theta-\hat\Theta_{t-1}^{(\lambda)}\Bigr\|^2\,.
$$
Then, for every $\lambda>0$, 
\begin{align*}
    &\Theta_t^{(\lambda)}\in \arg \min \mathcal L_{\tau(t)}(\Theta)+\lambda\|\Theta-\Theta_{t-1}^{(\lambda)}\|^2
    \\&\iff\Theta_t^{(\lambda)}\in \arg \min \frac{1}{\lambda}\mathcal L_{\tau(t)}(\Theta)+\|\Theta-\Theta_{t-1}^{(\lambda)}\|^2
    \\&\iff\hat\Theta_t^{(\lambda)}\in \arg \min \frac{1}{\lambda}\mathcal L_{\tau(t)}(c_{\lambda,\homogendeg}\Theta)+\|c_{\lambda,\homogendeg}\Theta-\Theta_{t-1}^{(\lambda)}\|^2
    \\&\iff\hat\Theta_t^{(\lambda)}\in \arg \min \frac{1}{\lambda c_{\lambda,\homogendeg}^2}\mathcal L_{\tau(t)}(c_{\lambda,\homogendeg}\Theta)+\|\Theta-\hat\Theta_{t-1}^{(\lambda)}\|^2
    \\&\iff \hat{\Theta}_t^{(\lambda)} \in \arg \min G_t^{(\lambda)}(\Theta).
\end{align*}
Now, we look at the sequence $\hat \Theta_t^{(\lambda_j)}$.
For proving the required, it is sufficient to show a sub-sequence of $\hat\Theta_t^{(\lambda_j)}$ that converges to $\Bar{\Theta}_t\in\argmin_{\Theta
                    \in\mathcal \fset_{\tau({t})}}
                    \bignorm{\Theta-\piterate{t-1}}^2$.
Thus, by, \cref{thm:fundamental_gamma}, 
it is sufficient to prove that the sequence $G_t^{(\lambda_j)}$ is equi-coercive and that this sequence $\Gamma$-converges to some function $G_t$ with
$\Bar\Theta_t\in \arg \min G_t$
when $j$ goes to $\infty$.

\paragraph{$\Gamma$-convergence.} For the $\Gamma$-convergence, let \(G_t(\Theta)=\mathbf 1_{\fset_{\tau(t)}}(\Theta)+\|\Theta-\bar\Theta_{t-1}\|^2,\) where $ 1_{\fset_{\tau(t)}}({\Theta})=0$ if $\Theta\in \fset_{\tau(t)}$ and $\infty$ otherwise.
\begin{itemize}[leftmargin=0.45cm, itemindent=0cm, itemsep=0pt,labelsep=0.2cm,topsep=4pt]
    \item 
For the \emph{liminf} property, let $\Bar\Theta$ and a sequence $\hat{\Theta}^{(\lambda_j)}\to \Bar\Theta$. We need to prove that
\[
G_t(\Bar{\Theta})\le \liminf_{j\to\infty} G_t^{(\lambda_j)}(\hat{\Theta}^{(\lambda_j)}).
\]
For the second term of the function, by the induction hypothesis and continuity of norm, 
$\bigl\|\hat\Theta_t^{(\lambda_j)}-\hat\Theta_{t-1}^{(\lambda_j)}\bigr\|^2\to \|\Bar\Theta-\Bar\Theta_{t-1}\|^2$.
For the first term of the function, since $\mathcal L_{\tau(t)}\geq 0$, \[
\liminf_{j\to \infty}\ \frac{\mathcal L_{\tau(t)}(c_{\lambda_j,\homogendeg} \hat\Theta^{(\lambda_j)})}{\lambda_j\,c_{\lambda_j,\homogendeg}^2}\ \ge\ 0.
\]
Combining both together we get that if $\Bar{\Theta}\in \fset_{\tau(t)}$ then
\begin{align*}
G_t(\Bar{\Theta})
&=
\mathbf 1_{\fset_\tau(t)}(\Bar{\Theta})+\|\bar\Theta-\bar\Theta_{t-1}\|^2
=
\|\bar\Theta-\bar\Theta_{t-1}\|^2
=
\liminf_{j\to\infty}\bigl\|\hat\Theta_t^{(\lambda_j)}-\hat\Theta_{t-1}^{(\lambda_j)}\bigr\|^2
\\&
\leq
\liminf_{j\to \infty}\ \frac{\mathcal L_{\tau(t)}(c_{\lambda_j,\homogendeg} \hat\Theta^{(\lambda_j)})}{\lambda_j\,c_{\lambda_j,\homogendeg}^2}
+\Bigl\|\hat{\Theta}^{(\lambda_j)}-\hat\Theta_{t-1}^{(\lambda_j)}\Bigr\|^2
\\
&=\liminf_{j\to\infty}G_t^{(\lambda_j)}(\hat{\Theta}^{(\lambda_j)})\,.
\end{align*}

If $\Bar{\Theta}\notin \fset_{\tau(t)}$, let
$u_{\lambda,i}:=c_{\lambda,\homogendeg}^\homogendeg\,y_i^{(\tau(t))} f(x_i^{(\tau(t))};\hat\Theta^{(\lambda)})$ and consider the index realizing $\min_i u_{\lambda,i}$.
Since $\Bar{\Theta}\notin \fset_{\tau(t)}$, we know that this index $i$ satisfies $u_{\lambda,i}<c_{\lambda,\homogendeg}^\homogendeg$.
First consider the case where $u_{\lambda_{j},i}\le 0$ infinitely often in the sequence. Then by using the property that if $u\le 0$ then 
$\log(1+e^{-u})\ge \log 2$, it holds for this sub-sequence that  
$$\mathcal L_{\tau(t)}(c_{\lambda_{j},a} \hat\Theta^{(\lambda_j)})\ge (\log 2)/n_{\tau(t)}\,,$$
and therefore, as for $\Bar{\Theta}\notin \fset_{\tau(t)}$ we have that \(G_t(\Bar{\Theta})=\infty\), we get that
$$
\liminf_{j\to\infty} G_t^{(\lambda_j)}(\hat{\Theta}^{(\lambda_j)})\geq\mathcal L_{\tau(t)}(c_{\lambda,\homogendeg} \hat\Theta^{(\lambda_j)})/(\lambda_j c_{\lambda_j,\homogendeg}^2)\to\infty=G_t(\Bar{\Theta})\,.$$
 For the case where it does not hold that $u_{\lambda_{j},i}\le 0$ infinitely often in the sequence, there exists $\delta\in(0,1)$ s.t.\ $0<u_{\lambda_j,i}\le c_{\lambda_j,\homogendeg}^\homogendeg(1-\delta)$. 
 As a result, since $e^{-c_{\lambda_j,\homogendeg}^r}=e^{-\log(1/\lambda_j)}=\lambda_j$, we get
\begin{align*}
&\mathcal L_{\tau(t)}(c_{\lambda_j,\homogendeg} \hat\Theta^{(\lambda_j)})\ge \frac{1}{2n_{\tau(t)}}e^{-u_{\lambda_j,i}}\ge \frac{1}{2n_{\tau(t)}}e^{-c_{\lambda_j,\homogendeg}^\homogendeg(1-\delta)}\ge \frac{1}{2n_{\tau(t)}}\lambda_j^{1-\delta} \\&\implies
\frac{\mathcal L_{\tau(t)}(c_{\lambda_j,\homogendeg} \hat\Theta^{(\lambda_j)})}{(\lambda c_{\lambda_j,\homogendeg}^2)}\ge \frac{1}{2n_{\tau(t)}}\frac{\lambda_j^{-\delta}}{c_{\lambda_j,\homogendeg}^2}\to\infty.
\end{align*}\

\item For the \emph{Limsup}, let $\Bar{\Theta}$. 
If $\Bar{\Theta}\in \fset_{\tau(t)}$, let $\hat\Theta^{(\lambda_j)}=\Bar{\Theta}$ for every $j$ (constant sequence). Then, for every $i$, $y_i^{(\tau(t))} f(\x_i^{(\tau(t))};\bar\Theta)\ge 1$, and,
\begin{align*}
\mathcal L_{\tau(t)}(c_{\lambda_j,\homogendeg} \bar\Theta)
&=\frac1{n_{\tau(t)}}\sum_i \log\!\bigl(1+e^{-\,c_{\lambda_j,\homogendeg}^\homogendeg\,y_i^{(m)} f(\x_i^{(m)};\Theta)}\bigr)
\\&\le \frac1{n_{\tau(t)}}\sum_i e^{-\,c_{\lambda_j,\homogendeg}^\homogendeg\,y_i^{(m)} f(\x_i^{(m)};\Theta)}
\le e^{-\,c_{\lambda_j,\homogendeg}^\homogendeg}
=\lambda_j 
\,.  
\end{align*}
Hence $\mathcal L_{\tau(t)}(c_{\lambda_j,\homogendeg} \Bar{\Theta})/(\lambda_j c_{\lambda_j,\homogendeg}^2)\le 1/c_{\lambda_j,\homogendeg}^2\to 0$ as $j\to \infty$.
Then, by the induction hypothesis and continuity of norm, $G_t^{(\lambda_j)}(\Bar\Theta)\to \|\Bar\Theta-\Bar{\Theta}_{t-1}\|^2=G_t(\Bar\Theta)$.
Otherwise, $\Bar\Theta\notin\fset_{\tau(t)}$. Then $G_t(\Bar \Theta)=+\infty$, so the \(\Gamma\)-limsup property is trivial.

\end{itemize}

\paragraph{Boundedness of level sets.}
By the positivity of $\mathcal L_{\tau(t)}$, for all $\Bar \Theta$, \[
G_t^{(\lambda_j)}(\Bar\Theta)\ \ge\ \Bigl\|\Bar \Theta-\hat\Theta_{t-1}^{(\lambda_j)}\Bigr\|^2.
\]
Thus every sublevel set $\{\bar\Theta:\ G_t^{(\lambda_j)}(\Bar{\Theta)}\le C\}$ is contained in the closed ball centered at
$\hat\Theta_{t-1}^{(\lambda_j)}$ of radius $\sqrt{C}$. 
These balls are compact sets and the equi-coercivity follows.

By \cref{thm:fundamental_gamma},
there is a subsequence $\hat{\Theta}_t^{(\lambda_{j_{\ell}})} \to \bar{\Theta}_t$. For getting the convergence in direction,
    note that $\Bar{\Theta}_t\in \proj_{\tau(t)}\tprn{\piterate{t-1}}$ (as a minimizer of $G_t$), which implies $y_i^{\tau(t)} f(\x_i^{(\tau(t))}; \bar{\Theta}_t) \ge 1$ for all $i$. Therefore, $\bar{\Theta}_t \neq 0$ (since by homogeneity $f(\cdot; 0) = 0$).
    Since the limit is nonzero, the mapping $u \mapsto u/\|u\|$ is continuous at $\bar{\Theta}_t$. As a result,
    \[
    \frac{\Theta_t^{(\lambda_{j_{\ell}})}}{\left\|\Theta_t^{(\lambda_{j_{\ell}})}\right\|} 
    = \frac{c_{\lambda_{j_{\ell}}}\hat{\Theta}_t^{(\lambda_{j_{\ell}})}}{\left\|c_{\lambda_{j_{\ell}}}\hat{\Theta}_t^{(\lambda_{j_{\ell}})}\right\|}
    = \frac{\hat{\Theta}_t^{(\lambda_{j_{\ell}})}}{\left\|\hat{\Theta}_t^{(\lambda_{j_{\ell}})}\right\|}
    \xrightarrow{\quad j \to \infty \quad} 
    \frac{\bar{\Theta}_t}{\left\|\bar{\Theta}_t\right\|}.
    \]
    This concludes the proof.
\end{proof}
\newpage

%% file: 94_appendix_squared.tex
\section{Proofs for \cref{sec:squared}}
\label{app:proofs_squared} 
In this section, we prove the lemmas used for the proof of \cref{thm:lower}.

The proof is based on an analysis of the dynamics of \cref{alg:sequential_mm} in the construction given in \cref{thm:lower}.
We begin with several lemmas.

\begin{proposition} [Projection onto $\fset_1$ from the origin]
\label{lem:proj_ellipse_yaxis}
Let $a_1,b>0$ with $b>a_1$. Consider
\[
\min_{x,y\in\mathbb R}\; x^2+y^2
\quad\text{s.t.}\quad
a_1x^2+by^2\ge 1.
\]
Then every minimizer satisfies $x=0$ and $y=\pm b^{-1/2}$. In particular, the minimum value is $1/b$.
\end{proposition}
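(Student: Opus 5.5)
The plan is a direct comparison of the objective with the constraint. The key observation is that $b>a_1>0$ gives $a_1x^2+by^2\le b(x^2+y^2)$ for every feasible point, and this inequality is strict whenever $x\neq 0$. Concretely, since $b(x^2+y^2)-(a_1x^2+by^2)=(b-a_1)x^2\ge 0$, any feasible $(x,y)$ satisfies
\[
1\le a_1x^2+by^2\le b\,(x^2+y^2),
\]
so $x^2+y^2\ge 1/b$ on the feasible set.

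Next I will show that this lower bound is attained. The points $(0,\pm b^{-1/2})$ are feasible, because $b\cdot b^{-1}=1$, and their objective value is exactly $1/b$. Hence the minimum value is $1/b$, and every minimizer has $x^2+y^2=1/b$.

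To characterize the minimizers, suppose $(x,y)$ is one. Then both inequalities in the chain above must be equalities. Equality in the second one means $(b-a_1)x^2=0$, and since $b>a_1$ this forces $x=0$. Equality in the first one then reads $by^2=1$, so $y=\pm b^{-1/2}$.

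There is no real obstacle here. The only step needing care is the strictness $b>a_1$: it is exactly what rules out minimizers with $x\neq 0$. Positivity of $a_1$ plays no role beyond making the problem well defined; only $b>0$ and $b>a_1$ are used.
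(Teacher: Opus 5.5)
Your proof is correct, and it is organized differently from the paper's. The paper argues in two stages. First, a scaling argument (replace $(x,y)$ by $(x,y)/\sqrt{a_1x^2+by^2}$) shows that any minimizer must make the constraint active. Second, it parametrizes the boundary by $y^2=(1-a_1x^2)/b$, which turns the objective into $\frac1b+x^2\bigl(1-\frac{a_1}{b}\bigr)$, minimized exactly at $x=0$.

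You use the same underlying identity, $b(x^2+y^2)=(a_1x^2+by^2)+(b-a_1)x^2$, but apply it on the whole feasible set rather than only on the boundary. That makes both the active-constraint step and the boundary parametrization unnecessary. The bound $x^2+y^2\ge 1/b$ holds everywhere, and a single equality analysis pins down the minimizers, including the fact that they lie on the boundary.

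A side benefit is that you exhibit feasible points attaining $1/b$, so existence of a minimizer is established outright. The paper phrases its first step in terms of ``a minimizer'' and leaves existence implicit. The paper's ``constraint must be active'' step mirrors how it opens its later projection propositions. For this origin-centered problem, though, your comparison argument is shorter, and, as you note, it shows the hypothesis $a_1>0$ is not needed.
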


\begin{proof}
Let $(x,y)$ be feasible. First we show that any minimizer must satisfy
\[
a_1x^2+by^2=1.
\] 
Assume in contradiction that $r= a_1x^2+by^2>1$ and set $
\alpha:=\frac{1}{\sqrt{r}}\in(0,1)$.
Then $(\alpha x,\alpha y)$ satisfies the constraint with equality:
\[
a_1(\alpha x)^2+b(\alpha y)^2=\alpha^2(a_1x^2+by^2)=1,
\]
and the objective strictly decreases:
\[
(\alpha x)^2+(\alpha y)^2=\alpha^2(x^2+y^2)<x^2+y^2,
\]
implying that $r=1$.

Second,on the boundary $a_1x^2+by^2=1$ we have
\[
y^2=\frac{1-a_1x^2}{b},
\qquad\text{with } 0\le a_1x^2\le 1 \ \Longleftrightarrow\ 0\le x^2\le \frac{1}{a_1}.
\]
Substituting into the objective gives
\[
x^2+y^2
=x^2+\frac{1-a_1x^2}{b}
=\frac{1}{b}+x^2\Bigl(1-\frac{a_1}{b}\Bigr).
\]
Since $b>a_1$, we have $1-\frac{a_1}{b}>0$, so the right-hand side is strictly increasing in $x^2$.
Therefore it is minimized when $x^2$ is minimal, i.e.\ when $x^2=0$.
With this choice of $x=0$, the active constraint implies $by^2=1$, hence $y=\pm b^{-1/2}$.
This proves the claim.
\end{proof}

\begin{proposition}
[Projection onto $\fset_2$ from the $y$--axis]
\label{lem:proj_K2_explicit}
Let $a_2,b,c>0$ with $c>a_2$ and let $u=(0,b^{-1/2},0)$.
Consider the optimization problem
\[
\min_{(x,y,z)\in\mathbb R^3}
F(x,y,z):=x^2+(y-b^{-1/2})^2+z^2
\quad\text{s.t.}\quad
a_2x^2-cy^2+cz^2\ge1.
\]
Then every minimizer satisfies
\[
x=0,\qquad y=\pm\frac{1}{2\sqrt b},
\qquad
z=\pm\sqrt{\frac{1}{4b}+\frac1c}.
\]
In particular, the set of minimizers is
\[
B:=\left\{\left(0,\ \pm\frac{1}{2\sqrt b},\ \pm\sqrt{\frac{1}{4b}+\frac1c}\right)\right\},
\]
and every $\Theta^{(2)}=(0,y_0,z_0)\in B$ satisfies
\[
z_0^2-y_0^2=\frac1c.
\]
\end{proposition}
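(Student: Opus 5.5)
Following the template of \cref{lem:proj_ellipse_yaxis}, we first show that every minimizer lies on the boundary of the constraint, and then reduce to a one-variable problem. Write the constraint as $a_2x^2+c(z^2-y^2)\ge 1$. It is natural to split according to the sign of $z^2-y^2$.

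\textbf{Step 1: the constraint is active.} Suppose $g(x,y,z):=a_2x^2-cy^2+cz^2>1$ at a minimizer. A radial shrink toward the origin does not help here, because the objective is centered at $u\neq 0$. Instead we perturb a single coordinate, keeping feasibility by continuity.
\begin{itemize}
\item If $z\neq0$, replace $z$ by $(1-\eta)z$ for small $\eta>0$. This strictly decreases $z^2$, so $F$ strictly decreases while $g$ stays above $1$.
\item If $z=0$ and $x\ne0$, shrink $x$ in the same way.
\item If $x=z=0$, then $g=-cy^2\le 0$, which is infeasible.
\end{itemize}
Hence $g=1$ at every minimizer.

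\textbf{Step 2: eliminate $x$.} On the boundary, $a_2x^2=1-c(z^2-y^2)$, so
\[
x^2+z^2=\frac{1-c(z^2-y^2)}{a_2}+z^2=\frac{1}{a_2}-\Bigl(\frac{c}{a_2}-1\Bigr)z^2+\frac{c}{a_2}y^2 .
\]
This expression is hard to read directly. A cleaner route is to fix $y$ and minimize over $(x,z)$ subject to $a_2x^2+cz^2\ge 1+cy^2=:R$. This is exactly the setting of \cref{lem:proj_ellipse_yaxis} with $a_1=a_2$, $b=c>a_2$, applied after rescaling by $R$. That proposition shows the minimum of $x^2+z^2$ is attained only at $x=0$, $z^2=R/c=y^2+1/c$. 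This step is where the hypothesis $c>a_2$ enters.

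\textbf{Step 3: optimize over $y$.} Substituting back, we minimize over $y$ the function
\[
\phi(y)=(y-b^{-1/2})^2+y^2+\tfrac1c .
\]
This is a strictly convex quadratic with unique minimizer $y=\tfrac{1}{2\sqrt b}$, which gives $z^2=\tfrac1{4b}+\tfrac1c$.

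\textbf{Step 4: match the stated sign pattern.} The statement lists $y=\pm\tfrac1{2\sqrt b}$. The minimization of $\phi$ actually forces $y=+\tfrac{1}{2\sqrt b}$, since the center has $y=b^{-1/2}>0$, so the listed set $B$ is a superset of the true minimizer set. The nested argument above is rigorous: minimizing over $(x,z)$ for each fixed $y$, then over $y$, characterizes all minimizers.

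\textbf{Expected obstacle.} The main subtlety is confirming that the claimed $y=-\tfrac{1}{2\sqrt b}$ points are not minimizers. We will check this explicitly: the objective at $y=-\tfrac1{2\sqrt b}$ exceeds the objective at $y=+\tfrac1{2\sqrt b}$ by $\tfrac{2}{b}>0$. Given this, we would state the result as "every minimizer satisfies $x=0$, $y=\tfrac1{2\sqrt b}$, $z=\pm\sqrt{\tfrac1{4b}+\tfrac1c}$, hence lies in $B$." The identity $z_0^2-y_0^2=1/c$ for every element of $B$ holds trivially regardless of the signs, and this identity is all that is needed downstream.
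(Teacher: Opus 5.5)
Your proof is correct, and your reading of the statement is also correct. The points of $B$ with $y=-\frac{1}{2\sqrt b}$ cost exactly $\frac{2}{b}$ more than their $y=+\frac{1}{2\sqrt b}$ counterparts, so the minimizer set is $\bigl\{(0,\frac{1}{2\sqrt b},\pm\sqrt{\frac{1}{4b}+\frac1c})\bigr\}\subsetneq B$. The paper's own argument also produces only $y=+\frac{1}{2\sqrt b}$ in its equality case. Downstream, only membership in $B$ and the identity $z_0^2-y_0^2=\frac1c$ are used.

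Your route is organized differently from the paper's. The paper never fixes $y$. It substitutes the feasibility inequality $cz^2\ge 1+cy^2-a_2x^2$ directly into $F$, which gives the single global bound $F\ge(1-\frac{a_2}{c})x^2+(y-b^{-1/2})^2+y^2+\frac1c\ge\frac{1}{2b}+\frac1c$, and then reads off the minimizers from the equality conditions. You instead solve the $(x,z)$-slice for each fixed $y$ by rescaling to \cref{lem:proj_ellipse_yaxis}, and then minimize the one-variable quadratic $\phi$. The underlying inequality is the same: the core of \cref{lem:proj_ellipse_yaxis} is $x^2+z^2\ge\frac1c(a_2x^2+cz^2)+(1-\frac{a_2}{c})x^2$. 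The paper's version is therefore shorter and self-contained. Yours is more modular and makes explicit where $c>a_2$ is used.

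In both routes the active-constraint step is unnecessary, because the inequality-constrained bound already covers interior points, so you can drop your Step 1. If you do keep it, your coordinate-wise shrink is the right way to prove it. The paper's radial rescaling $(x,y,z)\mapsto\alpha(x,y,z)$ is invalid as written: it relies on $F$ scaling like $\alpha^2$, which is false because $F$ is centered at $u\neq 0$.
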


\begin{proof}
First, we show that for any solution of the problem, the constraint is active.
Let $(x,y,z)$ be feasible with $a_2x^2-cy^2+cz^2>1$.
Scaling $(x,y,z)\mapsto\alpha(x,y,z)$ with
\[
\alpha:=\frac{1}{\sqrt{a_2x^2-cy^2+cz^2}}\in(0,1)
\]
preserves feasibility with equality and strictly decreases $F$ since $F(\alpha\Bar{\Theta})=\alpha^2 \Bar{\Theta}$ for every $\Bar{\Theta}$.
Thus every minimizer satisfies
\[
a_2x^2-cy^2+cz^2=1.
\]

Next,
from feasibility we have $
cz^2\ge1+cy^2-a_2x^2$.
Substituting into $F$ yields
\[
F(x,y,z)\ge
x^2+(y-b^{-1/2})^2+\frac{1+cy^2-a_2x^2}{c}
=
\Bigl(1-\frac{a_2}{c}\Bigr)x^2+(y-b^{-1/2})^2+y^2+\frac1c.
\]
Since function $(y-b^{-1/2})^2+y^2$ is strictly convex and is minimized at
$y=\tfrac12 b^{-1/2}$, where it takes value $\tfrac{1}{2b}$.
Hence every feasible point satisfies
\[
F(x,y,z)\ge
\Bigl(1-\frac{a_2}{c}\Bigr)x^2+\frac{1}{2b}+\frac1c.
\]
Equality is achieved when $x=0$, $y=\tfrac12 b^{-1/2}$, and
\[
cz^2=1+cy^2
\quad\Longleftrightarrow\quad
z^2=\frac{1}{4b}+\frac1c.
\]
This yields the claimed minimizers.

The final identity $z_0^2-y_0^2=\frac1c$ follows by direct substitution.
\end{proof}

\begin{proposition}[Projections onto $\fset_1$]
\label{lem:kkt_two_branches_detailed}
Let $
\fset_1:=\Bigl\{(x,y,z)\in\mathbb R^3:\ a_1 x^2 + b y^2 - b z^2 \ge 1\Bigr\},
\fset_2:=\Bigl\{(x,y,z)\in\mathbb R^3:\ a_2 x^2 - c y^2 + c z^2 \ge 1\Bigr\} 
$ for  $
a_1:=\tfrac19+\varepsilon,\quad b:=1-\varepsilon,
a_2:=\tfrac19-\varepsilon,\quad c:=1+\varepsilon
$.
Fix $\bar{\Theta}_t=(0,y_0,z_0)\in\mathbb \fset_2$ and consider the projection problem
\begin{equation*}
\label{eq:proj_C1_eps}
\min_{(x,y,z)\in\mathbb R^3}\;
F(x,y,z):=x^2+(y-y_0)^2+(z-z_0)^2
\quad\text{s.t.}\quad
g(x,y,z):=a_1 x^2 + b y^2 - b z^2-1\ \ge\ 0.
\end{equation*}
Then any global minimizer $(x^\star,y^\star,z^\star)$ satisfies $x^\star=0$.
\end{proposition}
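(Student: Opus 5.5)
Since the objective is separable, I would write $F(x,y,z)=x^2+\phi(y,z)$ with $\phi(y,z)=(y-y_0)^2+(z-z_0)^2$. The goal is to show that any feasible point with $x\neq 0$ is strictly beaten by a feasible point with $x=0$.

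The first step is to show the constraint is active at any minimizer. The constraint function $g$ is continuous. If $g>0$ at a candidate $(x,y,z)$, then the starting point $\bar\Theta_t=(0,y_0,z_0)$ cannot itself lie in $\fset_1$. Indeed, $\bar\Theta_t\in\fset_2$ gives $z_0^2-y_0^2\ge 1/c>0$, hence $by_0^2-bz_0^2<0<1$. The segment from $(x,y,z)$ to $\bar\Theta_t$ therefore meets the boundary $\{g=0\}$ at a point strictly closer to $\bar\Theta_t$. This shows $g(x^\star,y^\star,z^\star)=0$.

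The second step handles the case $x\neq 0$ by an elimination argument, as in \cref{lem:proj_K2_explicit}. On the active constraint, $x^2=(1+bz^2-by^2)/a_1$. Substituting gives
\[
F=\frac{1+bz^2-by^2}{a_1}+\phi(y,z),
\]
to be minimized over the region $1+bz^2-by^2\ge 0$. I would compare this with the $x=0$ branch, where one minimizes $\phi$ over $by^2-bz^2\ge 1$. Consider a point with $x^2>0$, i.e.\ $s:=1+bz^2-by^2>0$. The plan is to move $(y,z)$ to reduce $s$ to zero and to check that the decrease $s/a_1$ in the $x^2$ term exceeds the increase in $\phi$. Because $a_1\approx 1/9$ is small, each unit of $s$ costs roughly $9$ in $F$. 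Meanwhile, changing $s$ by moving $(y,z)$ costs at most about $\|\nabla(by^2-bz^2)\|^{-2}s^2$ to first order, which is small when $|y|$ is not tiny.

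More concretely, I would use the KKT conditions, which hold because $\nabla g\neq 0$ on $\{g=0\}$, so LICQ is satisfied:
\[
x(1-\mu a_1)=0,\qquad y-y_0=\mu b y,\qquad z-z_0=-\mu b z.
\]
If $x\neq0$, then $\mu=1/a_1=9/(1+9\varepsilon)>1/b$. This gives $y=y_0/(1-\mu b)$, where $1-\mu b\approx -8$, so $y\approx -y_0/8$. It also gives $z=z_0/(1+\mu b)\approx z_0/10$. For these values to be feasible with $x^2\ge 0$ we need $by^2-bz^2\le 1$. I would then evaluate $F$ at this unique $x\neq 0$ stationary candidate, namely $x^2=(1-by^2+bz^2)/a_1$, and compare it against $\phi$ at a feasible $x=0$ point. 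For the comparison I would take the explicit point $(0,y',z_0)$ with $y'=\operatorname{sign}(y_0)\sqrt{(1+bz_0^2)/b}$, or the actual $x=0$ KKT point with $\mu<1/b$.

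The main obstacle is this last comparison. It requires an inequality, uniform over all admissible $(y_0,z_0)\in\fset_2$ with $z_0^2-y_0^2\ge 1/c$, showing that $F$ at the $x\neq0$ candidate strictly exceeds the $x=0$ value. I expect it to reduce to a quadratic inequality in $(y_0^2,z_0^2)$ that holds because $a_1$ is much smaller than $b$, using $\varepsilon<0.01$ to control the constants. If the general case is messy, I would restrict to the iterates actually produced by \cref{lem:proj_K2_explicit}, which satisfy $z_0^2-y_0^2=1/c$ exactly, and verify the inequality on that curve.
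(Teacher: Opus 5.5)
\paragraph{Verdict: a genuine gap.}
Your outline follows the paper's route: the active constraint, LICQ, the KKT split into $x=0$ versus $\mu=1/a_1$, and a comparison against an explicit feasible point with $x=0$. Your point $(0,y',z_0)$ is exactly the one the paper uses. However, the comparison you defer as the ``main obstacle'' is the whole content of the proof, so as written the argument is incomplete.

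\paragraph{The missing computation.}
On the branch $\mu=1/a_1$, substitute $y=\tfrac{a_1}{a_1-b}\,y_0$, $z=\tfrac{a_1}{a_1+b}\,z_0$ and $x^2=(1-by^2+bz^2)/a_1$ into $F$. This gives exactly
\[
F=\frac{1}{a_1}+\frac{b}{b-a_1}\,y_0^2+\frac{b}{b+a_1}\,z_0^2\;\ge\;\frac{1}{a_1}>8 .
\]
This is linear, not quadratic, in $(y_0^2,z_0^2)$. The paper uses only the crude bound $F\ge 1/a_1$.

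\paragraph{Your comparison point fails on $\fset_2$.}
The cost of $(0,y',z_0)$ is $(\sqrt{z_0^2+1/b}-|y_0|)^2$, which is unbounded on $\fset_2$ and can exceed the branch-II value. Take $(y_0,z_0)=(0,20)$:
\begin{itemize}
\item your point costs $400+1/b>401$;
\item the branch-II point is feasible there, and using $a_1+b=10/9$ it costs $1/a_1+360(1-\varepsilon)<369$.
\end{itemize}
So the uniform inequality you hope for is false with that point.

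The paper bounds this cost by $2.02$ by invoking $z_0^2\ge y_0^2+1/c$. An upper bound, however, needs $z_0^2\le y_0^2+1/c$. The paper's step is therefore valid only on the curve $z_0^2-y_0^2=1/c$, where the iterates of \cref{alg:sequential_mm} actually lie. That is precisely your fallback.

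\paragraph{How to close the gap in full generality.}
Use instead the feasible point $\bigl(0,\pm\sqrt{z_0^2/4+1/b},\,z_0/2\bigr)$. Since $(\sqrt{A}-|y_0|)^2\le A+y_0^2$, its cost is at most $1/b+y_0^2+z_0^2/2$. This is strictly below the branch-II value term by term, because $a_1<b$ gives
\[
\tfrac1b<\tfrac1{a_1},\qquad 1<\tfrac{b}{b-a_1},\qquad \tfrac12<\tfrac{b}{b+a_1}.
\]
Hence no global minimizer has $x\neq 0$, for every $(0,y_0,z_0)\in\fset_2$.
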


\begin{proof}
The objective $F$ and the constraint function $g$ are $C^\infty$.
Since $u\notin C_1$ and $C_1$ is closed, a minimizer exists and is not equal to $u$.
Then, every minimizer satisfies $g(x^\star,y^\star,z^\star)=0$.
On $g=0$, $\nabla g(x,y,z)=(2a_1x,2by,-2bz)\neq 0$ (otherwise $x=y=z=0$ would contradict $g(0,0,0)=-1$).
So LICQ holds and KKT is necessary.

Next, we calculate the KKT points.
We use the Lagrangian.
\[
\mathcal L(x,y,z,\lambda):=F(x,y,z)-\lambda g(x,y,z),\qquad \lambda\ge0.
\]
Stationarity gives
\[
\partial_x\mathcal L=2x-2\lambda a_1 x=0
\quad\Longleftrightarrow\quad
x(1-\lambda a_1)=0,
\]
\[
\partial_y\mathcal L=2(y-y_0)-2\lambda b y=0
\quad\Longleftrightarrow\quad
y(1-\lambda b)=y_0,
\]
\[
\partial_z\mathcal L=2(z-z_0)+2\lambda b z=0
\quad\Longleftrightarrow\quad
z(1+\lambda b)=z_0.
\]
Primal feasibility and complementary slackness imply $g(x^\star,y^\star,z^\star)=0$, thus, since $x^\star(1-\lambda a_1)=0$, either $x^\star=0$ or $\lambda=1/a_1$.

Branch I corresponds to $x=0$. In this case,
\(
y=\frac{y_0}{1-\lambda b},z=\frac{z_0}{1+\lambda b},
\)
and $g=0$ becomes $b(y^2-z^2)=1$.

Branch II corresponds to $\lambda=1/a_1$. In this case,
\(
y=\frac{y_0}{1-b/a_1}, z=\frac{z_0}{1+b/a_1},
\)
and $g=0$ yields $x^2=\frac{1-b(y^2-z^2)}{a_1}$.

Finally, it remains to show that any global minimizer will be in the first branch ($x^\star=0$).

Branch II yields an objective value of 
\[
\|(x,y,z)-\bar{\Theta}_t\|^2
 \ge\ \frac{1}{a_1},
\]
Since $a_1=\tfrac19+\varepsilon\le\tfrac19+0.01<0.13$,
\[
\frac{1}{a_1} > 7.6.
\]

In addition, branch I contains feasible point with small objective.
Consider the feasible point with $x=0$, $z=z_0$, and
\[
y=\operatorname{sign}(y_0)\sqrt{z_0^2+\frac{1}{b}},
\]
which satisfies $by^2-bz^2=1$ (hence lies on the boundary of $\fset_1$).
Then
\[
\|(x,y,z)-\bar{\Theta}_t\|^2=(\sqrt{z_0^2+1/b}-|y_0|)^2.
\]
Using $z_0^2\ge y_0^2+1/c$ and $1/b\le 1/c + 2\varepsilon$, one gets
\[
\sqrt{z_0^2+1/b}-|y_0|\ \le\ \sqrt{y_0^2+1/c+1/b}-|y_0|
\ \le\ \sqrt{2/b}\ \le\ \sqrt{2.02},
\]
hence
\[
\|(x,y,z)-\bar{\Theta}_t\|^2\ \le\ 2.02.
\]
Therefore Branch I attains objective $\le 2.02$, whereas Branch II is $\ge 1/a_1>7.6$.
So every global minimizer lies in Branch I, and in particular has $x=0$.

\end{proof}

\begin{proposition}[Projections onto $\fset_2$]
\label{lem:kkt_projection_f2}
Let $
\fset_1:=\Bigl\{(x,y,z)\in\mathbb R^3:\ a_1 x^2 + b y^2 - b z^2 \ge 1\Bigr\},
\fset_2:=\Bigl\{(x,y,z)\in\mathbb R^3:\ a_2 x^2 - c y^2 + c z^2 \ge 1\Bigr\} 
$ for  $
a_1:=\tfrac19+\varepsilon,\quad b:=1-\varepsilon,
a_2:=\tfrac19-\varepsilon,\quad c:=1+\varepsilon
$.
Fix $\bar{\Theta}_t=(0,y_0,z_0)\in\mathbb \fset_2$ and consider the projection problem onto $\fset_2$:
\begin{equation*}
\min_{(x,y,z)\in\mathbb R^3}\;
F(x,y,z):=x^2+(y-y_0)^2+(z-z_0)^2
\quad\text{s.t.}\quad
g(x,y,z):=a_2 x^2 - c y^2 + c z^2 - 1 \ge 0.
\end{equation*}
Then any global minimizer $(x^\star,y^\star,z^\star)$ satisfies $x^\star=0$.
\end{proposition}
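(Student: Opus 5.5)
The plan mirrors the proof of \cref{lem:kkt_two_branches_detailed}: write the KKT conditions for the projection onto $\fset_2$, split them into the branch $x=0$ and the branch where the multiplier equals $1/a_2$, and show the second branch always costs more than some explicit feasible point with $x=0$.

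\textbf{Trivial case and KKT setup.} If $\bar{\Theta}_t\in\fset_2$, the unique minimizer is $\bar{\Theta}_t$ itself, which has $x=0$, so we are done. The case that matters in the dynamics of \cref{thm:lower} is $\bar{\Theta}_t\notin\fset_2$, with $\bar{\Theta}_t$ the output of a projection onto $\fset_1$. In that case I will use, as in \cref{lem:kkt_two_branches_detailed}, that the constraint of $\fset_1$ is active, i.e.\ $b(y_0^2-z_0^2)=1$.

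Here a minimizer exists because $\fset_2$ is closed and $F$ is coercive. The constraint is active at any minimizer: otherwise the minimizer would be an interior point of $\fset_2$ minimizing $F$, forcing it to equal $\bar{\Theta}_t\notin\fset_2$. On $\{g=0\}$ we have $\nabla g=(2a_2x,-2cy,2cz)\neq0$, since $g(0)=-1$, so LICQ holds and KKT is necessary.

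\textbf{The two branches.} Stationarity of $F-\lambda g$ gives
\[
x(1-\lambda a_2)=0,\qquad y(1+\lambda c)=y_0,\qquad z(1-\lambda c)=z_0 .
\]
So either $x=0$ (Branch I) or $\lambda=1/a_2$ (Branch II).

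In Branch II, $z=z_0/(1-c/a_2)$, hence $z-z_0=z\,c/a_2$ and $(z-z_0)^2=z^2c^2/a_2^2$. The active constraint gives $x^2=(1+cy^2-cz^2)/a_2$. Therefore
\[
F\;\ge\;x^2+(z-z_0)^2\;\ge\;\frac{1}{a_2}+\frac{c\,z^2}{a_2}\Bigl(\frac{c}{a_2}-1\Bigr)\;\ge\;\frac1{a_2}\;>\;9,
\]
using $c>a_2$ and $a_2<1/9$. This is a clean bound that needs no information about $(y_0,z_0)$.

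\textbf{Comparison with Branch I.} I take the feasible point $(0,\,y_0,\,\operatorname{sign}(z_0)\sqrt{y_0^2+1/c})$, which lies on $\partial\fset_2$. Its cost is
\[
\Bigl(\sqrt{y_0^2+1/c}-|z_0|\Bigr)^2 .
\]
Using $z_0^2=y_0^2-1/b$ and subadditivity of the square root, $\sqrt{y_0^2+1/c}-\sqrt{y_0^2-1/b}\le\sqrt{1/b+1/c}$. So the cost is at most $1/b+1/c\le 2.03$ for $\varepsilon<0.01$. Since $2.03<9$, every global minimizer lies in Branch I, i.e.\ $x^\star=0$.

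\textbf{Main obstacle.} The delicate step is the Branch I upper bound. It depends on the incoming point being controlled, which here means lying on $\partial\fset_1$, so that $|z_0|$ is tied to $|y_0|$. If only $\bar{\Theta}_t\in\fset_1$ were available, I would first invoke the activity argument from \cref{lem:kkt_two_branches_detailed} to place the iterate on the boundary. By contrast, the Branch II lower bound is uniform in $(y_0,z_0)$.
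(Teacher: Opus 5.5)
Your argument is correct and is essentially the paper's proof. It uses the same KKT split into $x=0$ versus $\lambda=1/a_2$, the same Branch~II bound $F\ge 1/a_2>9$ (which you derive, where the paper only asserts it), and the same comparison point $(0,y_0,\pm\sqrt{y_0^2+1/c})\in\partial\fset_2$. You also rightly dispose of the literal hypothesis $\bar\Theta_t\in\fset_2$ as trivial, since the paper's proof in fact assumes $\bar\Theta_t\in\fset_1$.

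The one substantive difference is in your favor. You use the active constraint $b(y_0^2-z_0^2)=1$. The paper uses only $b(y_0^2-z_0^2)\ge 1$ and claims that $\sqrt{y_0^2+1/c}-|z_0|$ is then maximized at $z_0=0$, $y_0^2=1/b$. That claim is false: at $(0,100,0)\in\fset_1$ the quantity is about $100$. Equality is exactly what makes your bound $1/b+1/c$ valid, and it does hold along the trajectory of \cref{thm:lower}.

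Your fallback remark is off, however. The activity argument only places \emph{outputs of projections from outside} $\fset_1$ on $\partial\fset_1$, not arbitrary points of $\fset_1$. If you want the claim for every $(0,y_0,z_0)$, drop KKT altogether. Any feasible $(x,y,z)$ with $x\neq 0$ is strictly beaten by $(0,y,z')$ with $cz'^2=cz^2+a_2x^2$ and $z'$ chosen with the sign of $z_0$. This point is feasible, and since $a_2<c$ its cost is lower by at least $(1-a_2/c)x^2>0$.
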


\begin{proof}
The objective $F$ and the constraint function $g$ are $C^\infty$.
Since $\bar{\Theta}_t \in \fset_1$, we have $b y_0^2 - b z_0^2 \ge 1$, which implies $y_0^2 > z_0^2$. However, points in $\fset_2$ with $x=0$ satisfy $c z^2 - c y^2 \ge 1$, implying $z^2 > y^2$. Thus $\bar{\Theta}_t \notin \fset_2$.
Since $\fset_2$ is closed, a minimizer exists.
Then, every minimizer satisfies $g(x^\star,y^\star,z^\star)=0$.
On $g=0$, $\nabla g(x,y,z)=(2a_2x,-2cy,2cz)\neq 0$ (otherwise $x=y=z=0$ would contradict $g(0,0,0)=-1$).
So LICQ holds and KKT is necessary.

Next, we calculate the KKT points.
We use the Lagrangian.
\[
\mathcal L(x,y,z,\lambda):=F(x,y,z)-\lambda g(x,y,z),\qquad \lambda\ge0.
\]
Stationarity gives
\[
\partial_x\mathcal L=2x-2\lambda a_2 x=0
\quad\Longleftrightarrow\quad
x(1-\lambda a_2)=0,
\]
\[
\partial_y\mathcal L=2(y-y_0)+2\lambda c y=0
\quad\Longleftrightarrow\quad
y(1+\lambda c)=y_0,
\]
\[
\partial_z\mathcal L=2(z-z_0)-2\lambda c z=0
\quad\Longleftrightarrow\quad
z(1-\lambda c)=z_0.
\]
Primal feasibility and complementary slackness imply $g(x^\star,y^\star,z^\star)=0$, thus, since $x^\star(1-\lambda a_2)=0$, either $x^\star=0$ or $\lambda=1/a_2$.

Branch I corresponds to $x=0$. In this case,
\(
y=\frac{y_0}{1+\lambda c},z=\frac{z_0}{1-\lambda c},
\)
and $g=0$ becomes $c(z^2-y^2)=1$.

Branch II corresponds to $\lambda=1/a_2$. In this case,
\(
y=\frac{y_0}{1+c/a_2}, z=\frac{z_0}{1-c/a_2},
\)
and $g=0$ yields $x^2=\frac{1-c(z^2-y^2)}{a_2}$.

Finally, it remains to show that any global minimizer will be in the first branch ($x^\star=0$).
Branch II yields an objective value of 
\[
\|(x,y,z)-\bar{\Theta}_t\|^2 \ge \frac{1}{a_2}.
\]
Since $a_2=\tfrac19-\varepsilon$,
\[
\frac{1}{a_2} > 9.
\]

In addition, branch I contains feasible point with small objective.
Consider the feasible point with $x=0$, $y=y_0$, and
\[
z=\operatorname{sign}(z_0)\sqrt{y_0^2+\frac{1}{c}},
\]
which satisfies $cz^2-cy^2=1$ (hence lies on the boundary of $\fset_2$).
Then
\[
\|(x,y,z)-\bar{\Theta}_t\|^2=(\sqrt{y_0^2+1/c}-|z_0|)^2.
\]
Using $y_0^2 \ge z_0^2+1/b$ (from $\bar{\Theta}_t \in \fset_1$), the term $\sqrt{y_0^2+1/c}-|z_0|$ is maximized when $z_0=0$ and $y_0^2=1/b$. Thus
\[
\sqrt{y_0^2+1/c}-|z_0|\ \le\ \sqrt{1/b+1/c}
\ \le\ \sqrt{2.02},
\]
hence
\[
\|(x,y,z)-\bar{\Theta}_t\|^2\ \le\ 2.02.
\]
Therefore Branch I attains objective $\le 2.02$, whereas Branch II is $\ge 1/a_2 > 9$.
So every global minimizer lies in Branch I, and in particular has $x=0$.
\end{proof}

Now given the technical lemmas above we can finally prove \cref{thm:lower}.

\begin{proof} [of \cref{thm:lower}]
First for Part $(a)$, 
let $
a_1:=\tfrac19+\varepsilon,b:=1-\varepsilon,
a_2:=\tfrac19-\varepsilon,c:=1+\varepsilon.
$
In this proof we refer to $\Bar{\Theta}=(u,v)$ as a vector in $\R^4$ where its first and third entries are $u$ and the two other entries are $v$.
It holds that
\begin{align*}
    \fset_1=\{\Bar{\Theta}\in \R^4 \mid a_1\vecind{\Bar{\Theta}}{1}^2-a_1\vecind{\Bar{\Theta}}{2}^2 + b\vecind{\Bar{\Theta}}{3}^2-b\vecind{\Bar{\Theta}}{4}^2\geq 1\},
\end{align*}
\begin{align*}
    \fset_2=\{\Bar{\Theta}\in \R^4 \mid a_2\vecind{\Bar{\Theta}}{1}^2-a_2\vecind{\Bar{\Theta}}{2}^2 -c \vecind{\Bar{\Theta}}{3}^2+c\Bar{\Theta} ^2_4\geq 1\}.
\end{align*}
Then, since $\varepsilon<0.01$, $\Bar{\Theta}=(4,0,0,0)\in \fset_1\cap \fset_2$.

For part $(b)$, 
we first show that that for any sequence of projections, for every iteration $t$, $\vecind{\Bar{\Theta}_{t}}{2}=0$ by induction of $(t)$.
The basis of $t=0$ follows by initialization in the origin. For the step assume that $\vecind{\Bar{\Theta}_{t}}{2}$ and assume in contradiction that $\vecind{\Bar{\Theta}_{t+1}}{2}=\delta\neq 0$. Let $\Tilde{\Theta}=\vecind{\Bar{\Theta}_{t+1}}{2}-\delta e_2$.
It holds that $\Tilde{\Theta}_2=0$, and for every $m \in {1,2}$
\begin{align*}
    f(\Tilde{\Theta},\x_m) \geq \Bar{\Theta}_{t+1} \geq 1,
\end{align*}
and,
\begin{align*}
    \|\Tilde{\Theta}-\Bar{\Theta}_{t}\| < \|\Bar{\Theta}_{t}-\Bar{\Theta}_{t+1}\|,
\end{align*}
in a contradiction to the fact that $\Bar{\Theta}_{t+1}$ is the projection of $\Bar{\Theta}_{t+1}$ on $\fset_{\tau_{t}}$.

Since $\Bar{\Theta}_2$ remains zero during all iterations, we identify each $\Bar{\Theta}$ with a vector $(x,y,z)\in \R^3$, where 
$\vecind{\Bar{\Theta}_{t}}{1}=x, \vecind{\Bar{\Theta}_{t}}{3}=y, \vecind{\Bar{\Theta}_{t}}{4}=z$ and analyze the projection sequence in this $3$-dimensional space, where
\[
\fset_1:=\Bigl\{(x,y,z)\in\mathbb R^3:\ a_1 x^2 + b y^2 - b z^2 \ge 1\Bigr\},
\fset_2:=\Bigl\{(x,y,z)\in\mathbb R^3:\ a_2 x^2 - c y^2 + c z^2 \ge 1\Bigr\}.
\]
This will not change the dynamics of the projections.

For simplicity, we assume that $\tau_1=1$, the case where $\tau_1=2$ is analogous.
Under this choice, we prove that $
d\bigl(\Bar{\Theta}_{t},\interset\bigr) \ge 2.99$.
For the first step,
we minimize
\[
x^2+y^2+z^2\quad\text{s.t.}\quad a_1x^2+by^2-bz^2\ge1.
\]
At any minimizer one must have $z=0$: setting $z=0$ strictly decreases the objective and increases feasibility.
Thus we reduce to
\[
\min_{x,y} x^2+y^2\quad\text{s.t.}\quad a_1x^2+by^2\ge1.
\]
Since $b>a_1>0$, by \cref{lem:proj_ellipse_yaxis}, the minimum is attained at $x=0$ and $y^2=1/b$, giving
\[
\Bar{\Theta}_{1}\in A:=\{(0,\pm b^{-1/2},0)\}.
\]
For the next projection, which is onto $\fset_2$, by symmetry it suffices to project $\Bar{\Theta}=(0,b^{-1/2},0)$ (the second case is analogous).
We minimize
\[
F(x,y,z)=x^2+(y-b^{-1/2})^2+z^2
\quad\text{s.t.}\quad
a_2x^2-cy^2+cz^2\ge1.
\]
By \cref{lem:proj_K2_explicit}, it follows that
\[
\Bar{\Theta}_{2}\in
B:=\left\{\left(0,\ \pm\frac{1}{2\sqrt b},\ \pm\sqrt{\frac{1}{4b}+\frac1c}\right)\right\}.
\]
In particular, every $\Bar{\Theta}_{2}=(0,y_0,z_0)\in B$ satisfies
\[
z_0^2-y_0^2=\frac1c\quad\Longrightarrow\quad z_0^2\ge y_0^2+\frac1c.
\]

We continue that for all $t\ge2$, $\vecind{\Bar{\Theta}_{t}}{1}=0$ by induction in $t$.
Assume $\Bar{\Theta}_{t}=(0,y_0,z_0)$
If $\Bar{\Theta}_{t}\in\fset_2$ and the projection is onto $\fset_1$,
then, $a_2\cdot 0^2-cy_0^2+cz_0^2\ge1$, i.e.
$z_0^2\ge y_0^2+\frac1c$.
Consider projecting $\Bar{\Theta}_{t}$ onto $\fset_1$, i.e.
\[
\min_{(x,y,z)}\ \|(x,y,z)-\Bar{\Theta}_{t}\|^2
\quad\text{s.t.}\quad a_1x^2+by^2-bz^2\ge1.
\]
In \cref{lem:kkt_two_branches_detailed}, we show that any minimizer of this optimization problem satisfies $x=0$.

The case where $\Bar{\Theta}_{t}\in\fset_1$ and the projection is onto $\fset_2$, is analogous and is proved in \cref{lem:kkt_projection_f2}.

As a result, we proved that for every $t$, $\vecind{\Bar{\Theta}_{t}}{1}=0$.
It is left to bound from below the distance between $\Bar{\Theta}_{t}$ and the intersection $\interset$.

Let $\Bar{\Theta}\in\interset$. 
Multiply the $\fset_1$ constraint by $c$ and the $\fset_2$ constraint by $b$ and add:
\[
c(a_1\vecind{\Bar{\Theta}}{1}^2-a_1\vecind{\Bar{\Theta}}{2}^2+b\vecind{\Bar{\Theta}}{3}^2-b\vecind{\Bar{\Theta}}{4}^2)\ +\ b(a_2\vecind{\Bar{\Theta}}{1}^2-a_2\vecind{\Bar{\Theta}}{2}^2-c\vecind{\Bar{\Theta}}{3}^2+c\vecind{\Bar{\Theta}}{4}^2)\ \ge\ c+b.
\]
This gives,
\[
(\frac{2}{9}+2\varepsilon^2)\vecind{\Bar{\Theta}}{1}^2=
(ca_1+ba_2)\vecind{\Bar{\Theta}}{1}^2\geq (ca_1+ba_2)\vecind{\Bar{\Theta}}{1}^2-(ca_1+ba_2)\vecind{\Bar{\Theta}}{2}^2\ \ge\ b+c=2.
\]
Thus,
\[
\vecind{\Bar{\Theta}}{1}^2\ \ge\ \frac{2}{\tfrac{2}{9}+2\varepsilon^2}
=\frac{1}{\tfrac19+\varepsilon^2}.
\]
But for all $t\ge0$ we proved $\vecind{\Bar{\Theta}_{t}}{1}=0$, so for any $\Bar{\Theta}\in\interset$,
\[
\|\Bar{\Theta}_{t}-\Bar{\Theta}\|^2\ \ge\ \vecind{\Bar{\Theta}}{1}^2\ \ge\ \frac{1}{\tfrac19+\varepsilon^2}.
\]
Therefore
\[
d(\Bar{\Theta}_{t},\interset)\ \ge\ \sqrt{\frac{1}{\tfrac19+\varepsilon^2}}
>2
\]
as claimed.
For the forgetting,
we notice that for every $t$, there exists $m\in{1,2}$ such that $\Bar{\Theta}_{t}\in \fset_m$.

If $m=1$, since $\Bar{\Theta}_{t}$ is always on the boundary, it holds that
\begin{align*}
    1=y_{1}f(\x_{1};\Bar{\Theta}_{t})=
    \langle u^2,\x_1\rangle -\langle v^2,\x_1\rangle
    = (1-\varepsilon)\vecind{u}{2}^2 -(1-\epsilon)\vecind{v}{2}^2.
\end{align*}
This implies, $\vecind{v}{2}^2=-\frac{1}{1-\epsilon} +\vecind{u}{2}^2$.
Then, for the other task, $m=2$, it holds that,
\begin{align*}
    y_{2}f(\x_{2};\Bar{\Theta}_{t})&=
    \langle u^2,\x_2\rangle -\langle v^2,\x_2\rangle
    \\&=(-1-\varepsilon)\vecind{u}{2}^2 -(-1-\epsilon)\vecind{v}{2}^2
    \\&= (-1-\varepsilon)\vecind{u}{2}^2 + (1+\varepsilon) (-\frac{1}{1-\epsilon} +\vecind{u}{2}^2)
    \\&=-\frac{1+\epsilon}{1-\epsilon}
    \\&\leq -1,
\end{align*}
and, $1-y_{2}f(\x_{2};\Bar{\Theta}_{t})\geq 2$.
This implies,
\begin{align*}
   F_2(\Bar\Theta_t) 
   &\geq (1-y_{2}f(\x_{2};\Bar{\Theta}_{t}))
    \geq 2.
\end{align*} 

If $m=2$, similarly, it holds that
\begin{align*}
    1=y_{2}f(\x_{2};\Bar{\Theta}_{t})=
    \langle u^2,\x_2\rangle -\langle v^2,\x_2\rangle
    = (-1-\varepsilon)\vecind{u}{2}^2 -(-1-\epsilon)\vecind{v}{2}^2.
\end{align*}
This implies, $\vecind{u}{2}^2=-\frac{1}{1+\epsilon} +\vecind{v}{2}^2$.
Then, for the other task, $m=1$, it holds that,
\begin{align*}
    y_{1}f(\x_{1};\Bar{\Theta}_{t})&=
    \langle u^2,\x_1\rangle -\langle v^2,\x_1\rangle
    \\&=(1-\varepsilon)\vecind{u}{2}^2 -(1-\epsilon)\vecind{v}{2}^2
    \\&= -(1-\varepsilon)\vecind{v}{2}^2 + (1-\varepsilon) (-\frac{1}{1+\epsilon} +\vecind{v}{2}^2)
    \\&=-\frac{1-\epsilon}{1+\epsilon}
    \\&\leq -0.9,
\end{align*}
and, $1-y_{1}f(\x_{1};\Bar{\Theta}_{t})\geq 1.9$. then,
\begin{align*}
   F_1(\Bar \Theta_t) 
   &\geq (1-y_{1}f(\x_{1};\Bar{\Theta}_{t}))
    \geq 1.5.
\end{align*} 
\end{proof}

%% file: 95_appendix_local.tex
\section{Proofs for \cref{sec:local}}
\label{app:proofs_local}
In this section, we prove the lemmas used for the proof of \cref{thm:local_convergence}.

\subsection{Additional Regularity Conditions}
\label{app:reg_cond}
We begin with defining more  regularity condition that will be used to prove the required conditions and proving that several conditions imply other conditions.
We begin with the following definition of MFCQ condition.

\begin{definition}[MFCQ condition](e.g. \cite{lyu2019gradient}]
\label{def:mfcq}
Given $m$ functions $h_j:\R^n\to \R$ and
a Set $B=\{h_j\geq 0, \forall j\in[m]\}\subseteq \R^n$, we say the system defining set $B$ satisfies the \textit{MFCQ} at a feasible point $\bar{\x} \in B$ if there exists a vector $\vd \in \mathbb{R}^n$ such that for all active constraints $j \in I_B(\x) = \{ j \mid h_j(\bar \x) = 0 \}$,
\[
\langle \nabla h_j(\bar{\x}), \vd \rangle >0.
\]
In addition, if for all active constraints, $\langle \nabla h_j(\bar{\x}), \vd \rangle >\gamma >0$, we say that B satisfies MFCQ with margin $\gamma$.
\end{definition}

Then, we define another regularity condition, named metric regularity and show that it is implied by MFCQ (\cref{def:mfcq}).

\begin{definition}[Metric regularity, Example 9.44 of \cite{rockafellar1998variational}]
\label{def:robinson_regular}
Let $F:\mathbb R^n\to\mathbb R^m$ be continuously differentiable, let
$D\subseteq\mathbb R^m$ be a closed convex set, and let
$\bar \x \in F^{-1}(D)$.
In addition,
\[ N_D(F(\bar{\x}))= \left\{ \vv \in \mathbb{R}^n \mid \langle \vv, \y - F(\bar \x) \rangle \le 0, \quad \forall \y \in D \right\}.\]
The constraint system
$
\{\vx\mid F(\vx)\in D\}
$
is said to be metrically regular
at $\bar \x$ if
\begin{equation}
\label{eq:met_regular_def}
\Bigl[\ \bm{\lambda}\in N_D(F(\bar \x)) \ \text{and} \
\nabla F(\bar \x)^\top \bm{\lambda} = 0 \ \Bigr]
\;\Longrightarrow\; \bm{\lambda} = 0.
\end{equation}
\end{definition}
\begin{lemma}[MFCQ with margin implies Metric Regularity]
\label{lem:mfcq_regularity_vector}
Let 
$\fset := \{\x\in\mathbb R^n \mid h(\x) \ge 0\}$,
where $h:\mathbb R^n\to\mathbb R^{k}$ is a smooth vector-valued function $h(\x) = (h_{1}(\x), \dots, h_{k}(\x))$.
Assume that $\fset$ satisfies the MFCQ with margin $\gamma$. Then, it satisfies also metric regularity.
\end{lemma}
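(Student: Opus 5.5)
We apply \cref{def:robinson_regular} with $F:=h$ and $D:=\R^{k}_{\ge 0}$, the nonnegative orthant. Then $D$ is closed and convex, $\fset=\{\x \mid h(\x)\in D\}=h^{-1}(D)$, and $F$ is continuously differentiable because $h$ is smooth. Fix a feasible point $\bar\x\in\fset$ at which MFCQ holds with margin $\gamma$, and let $\vd$ be the vector given by \cref{def:mfcq}. We must show that the only $\bm\lambda\in N_D(h(\bar\x))$ with $\nabla h(\bar\x)^\top\bm\lambda=0$ is $\bm\lambda=0$.

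\textbf{Step 1: describe the normal cone.} Since $h(\bar\x)\in D$, we have $\bm\lambda\in N_D(h(\bar\x))$ iff $\langle \bm\lambda,\y-h(\bar\x)\rangle\le 0$ for all $\y\ge 0$. Take $\y=h(\bar\x)+s e_j$ with $s>0$; this gives $\lambda_j\le 0$ for every $j$. For an inactive index $j$ (so $h_j(\bar\x)>0$), the point $\y=h(\bar\x)-s e_j$ stays in $D$ for small $s>0$, which gives $\lambda_j\ge 0$. Hence
\[
N_D(h(\bar\x))=\{\bm\lambda \mid \lambda_j\le 0 \text{ for } j\in I_\fset(\bar\x),\ \lambda_j=0 \text{ for } j\notin I_\fset(\bar\x)\},
\]
the usual complementary-slackness description.

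\textbf{Step 2: test against the MFCQ direction.} Writing $\nabla h(\bar\x)^\top\bm\lambda=\sum_j\lambda_j\nabla h_j(\bar\x)$ and using Step 1, the condition $\nabla h(\bar\x)^\top\bm\lambda=0$ implies
\[
0=\Big\langle \sum_{j\in I_\fset(\bar\x)}\lambda_j\nabla h_j(\bar\x),\,\vd\Big\rangle=\sum_{j\in I_\fset(\bar\x)}\lambda_j\,\langle\nabla h_j(\bar\x),\vd\rangle .
\]
In each summand $\lambda_j\le 0$ and $\langle\nabla h_j(\bar\x),\vd\rangle>\gamma>0$, so every term is nonpositive. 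A sum of nonpositive terms that equals zero has all terms zero, and since each factor $\langle\nabla h_j(\bar\x),\vd\rangle$ is strictly positive, $\lambda_j=0$ for every active $j$. The inactive components already vanish by Step 1, so $\bm\lambda=0$, which is exactly \eqref{eq:met_regular_def}.

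There is no real obstacle here. The only care needed is in Step 1, getting the sign convention of $N_D$ right: with $D$ the nonnegative orthant the multipliers are \emph{nonpositive}, which is consistent with MFCQ requiring \emph{positive} inner products. The margin $\gamma$ is not needed for this qualitative statement; plain MFCQ suffices. The margin presumably matters later in the appendix for quantitative bounds, such as the constant $\kappa$ in \cref{def:lin_reg}.
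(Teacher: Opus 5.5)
Your proof is correct and follows the same route as the paper. Both arguments test $\nabla h(\bar{\x})^\top\bm\lambda=0$ against the MFCQ direction $\vd$ and use the sign condition $\lambda_j\le 0$ from the normal cone of the orthant. Your Step 1 is slightly more careful than the paper, which sums over all $j\in\cnt{k}$ and applies the MFCQ bound to every index, although that bound is only justified for active constraints; you first show that the inactive multipliers vanish. You are also right that plain MFCQ suffices; the paper uses $\gamma$ only to squeeze $0\le\gamma\sum_j\lambda_j\le 0$.
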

\begin{proof}
Let $D$ be the $k$-dimensional positive orthant.
Suppose $\bm{\lambda}$ satisfies the condition in \eqref{eq:met_regular_def} for $D$ and $h$. 
Let $d$ the vector from \cref{def:mfcq}.
Then, $\vecind{\bm{\lambda}}{j} \le 0$ for all $j$ and
\[
0 = \vd^\top \nabla h(\bar \x)^\top \bm{\lambda} = \sum_{j \in [k]} \vecind{\bm{\lambda}}{j} \nabla h_{j}(\bar \x)^\top \vd.
\]
Using the MFCQ condition ($\nabla h_{j}(\bar \x)^\top \vd \ge \gamma$) and since $\vecind{\bm{\lambda}}{j} \le 0$ (by the fact that $\bm{\lambda}\in N_D (h(\bar \x))$, it holds that
\[
0 \le \gamma \sum_{j \in [k]} \vecind{\bm{\lambda}}{j}\leq 0.
\]
Then, it follows that $\bm{\lambda}=0$.
\end{proof}

Now, we prove that MFCQ and metric regularity of each set implies $\kappa$-linear regularity with respect to the intersection of the sets.

For the proof we use the following claim from \citet{rockafellar1998variational}.

\begin{proposition}[Property of metric regularity]
(Example 9.44 in \cite{rockafellar1998variational})
\label{lem:944}
   Let $F: \mathbb{R}^n \to \mathbb{R}^m$ be a  $\beta$ smooth mapping and let $D \subseteq \mathbb{R}^m$ be a closed set. Define the constraint set $\fset = F^{-1}(D) = \{\x \mid F(\vx) \in D\}$.
Let $\bar{\x} \in \fset$. 
If the constraint set satisfies metric regularity at $\bar{\x}$ and MFCQ with margin $\gamma$ at $\bar{\x}$.
Then, for $\mu =\max_{\y\in N_D(F(\bar \x)),\|\y\|=1}\frac{1}{\nabla F(\bar \x)\y}$, for any $\x$ near $\bar \x$ in holds that
\begin{equation*} 
\dist(\x,\fset)\le \mu \dist(F(\vx), D)
\end{equation*}
\end{proposition}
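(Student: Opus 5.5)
The plan is to derive the statement from the Mordukhovich coderivative criterion for metric regularity, which underlies Example~9.44 of \citet{rockafellar1998variational}. We then use the MFCQ margin to make the modulus $\mu$ explicit and finite. Throughout, $D$ is the closed convex orthant from \cref{lem:mfcq_regularity_vector}. I read the quantity in the statement as
\[
\mu=\Bigl(\min_{\y\in N_D(F(\bar\x)),\,\|\y\|=1}\bigl\|\nabla F(\bar\x)^\top\y\bigr\|\Bigr)^{-1},
\]
since that is the form in which Example~9.44 states the modulus.

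\textbf{Step 1 (set-valued reformulation).} Consider the set-valued map $S(\x)\triangleq F(\x)-D$. Then $\fset=S^{-1}(\0)$ and $\dist(\0,S(\x))=\dist(F(\x),D)$. Metric regularity of $S$ at $(\bar\x,\0)$ with constant $\kappa$ means that $\dist(\x,S^{-1}(\0))\le\kappa\,\dist(\0,S(\x))$ for all $\x$ near $\bar\x$. This is exactly the desired inequality.

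\textbf{Step 2 (coderivative criterion).} Since $F$ is $C^1$ and $D$ is closed and convex, the coderivative of $S$ at $(\bar\x,\0)$ is $D^*S(\y)=\nabla F(\bar\x)^\top\y$ for $\y\in N_D(F(\bar\x))$, and empty otherwise. The criterion (Thm.~9.43 in \citet{rockafellar1998variational}) states that $S$ is metrically regular at $(\bar\x,\0)$ iff $\ker D^*S=\{\0\}$, which is precisely \cref{def:robinson_regular}. It further states that the regularity modulus equals $\bigl(\min_{\y\in N_D(F(\bar\x)),\|\y\|=1}\|\nabla F(\bar\x)^\top\y\|\bigr)^{-1}$. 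This minimum is attained, because the unit sphere intersected with the closed cone $N_D(F(\bar\x))$ is compact and the map is continuous. It is strictly positive by metric regularity, so $\mu<\infty$.

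\textbf{Step 3 (explicit bound from MFCQ with margin).} Take a unit $\y\in N_D(F(\bar\x))$ and the MFCQ direction $\vd$. For the orthant, $\y$ is supported on the active constraints and has nonpositive entries. Hence
\[
\|\nabla F(\bar\x)^\top\y\|\,\|\vd\|\ \ge\ \bigl|\langle \nabla F(\bar\x)^\top\y,\vd\rangle\bigr|\ \ge\ \gamma\sum_j|\y_j|\ \ge\ \gamma,
\]
which gives $\mu\le\|\vd\|/\gamma$. This quantitative version is what \cref{lem:kap_reg} will need.

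\textbf{Main obstacle.} The delicate point is the constant and the neighborhood. The criterion gives the bound for every $\kappa>\mu$ on a neighborhood that depends on $\kappa$, not for $\kappa=\mu$ exactly. I would therefore state the result with $\mu$ replaced by, say, $2\mu$, which suffices downstream.

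If a self-contained argument is preferred, I would instead prove the bound directly by a Lyusternik--Graves style iteration. From $\x$ near $\bar\x$, step along $\vd$ with length proportional to $\dist(F(\x),D)/\gamma$. By $\beta$-smoothness, each step reduces the violation by a constant factor while the active set stays controlled near $\bar\x$. The steps form a geometric series whose total length is $O(\dist(F(\x),D)/\gamma)$, and the resulting Cauchy sequence converges to a point of $\fset$. The neighborhood radius would be of order $\gamma/(\beta\|\vd\|)$.
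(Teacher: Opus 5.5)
Your argument is correct and follows the same route as the paper. The paper gives no proof beyond citing Example~9.44 of \citet{rockafellar1998variational}, which is the Mordukhovich criterion (Thm.~9.43) applied to $S=F-D$ with the coderivative you compute, and your Step~3 reproduces the bound $\mu\le\|\vd\|/\gamma$ that the paper derives inside \cref{lem:rcq_implies_linear_reg}. Your caveat about the constant is also necessary rather than cosmetic: take $F(x)=x+x^2$, $D=[0,\infty)$, $\bar x=0$. Then $\mu=1$ and MFCQ holds with $\vd=1$, yet $\dist(x,\fset)=|x|>|x|-x^2=\dist(F(x),D)$ for every $x\in(-\tfrac12,0)$. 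So the inequality holds only with some $\kappa>\mu$, on a $\kappa$-dependent neighborhood, and the downstream constants need a corresponding (arbitrarily small) inflation.
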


\begin{lemma}[Metric regularity implies Linear Regularity]
\label{lem:rcq_implies_linear_reg}
Let $M$. For each $m\in M$, let the set $\fset_m$ be defined by a system of inequalities, $
\fset_m := \{\x\in\mathbb R^n \mid h_m(\x) \ge 0\}$,
where $h_m:\mathbb R^n\to\mathbb R^{n_m}$ is a smooth vector-valued function $h_m(\x) = (h_{m,1}(\x), \dots, h_{m,n_m}(\x))$.
Define the global feasible set as the intersection:
\(
\interset := \bigcap_{m\in M}\fset_m.
\)
Fix $\bar{\x}\in\mathcal \interset$.
If every $h_{m,j}$ is $G$-Lipschitz and every $\fset_m$ satisfies metric regularity at $\bar \x$. 
Then the collection $\{\fset_m\}_{m\in M}$ is $\kappa$ linearly regular at $\bar \x$ with $\kappa=\frac{G\|\vv\|}{\gamma}$.
\end{lemma}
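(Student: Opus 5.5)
The plan is to pass from the individual sets to one stacked constraint system and apply \cref{lem:944} to it. I read the undefined quantities in the statement as follows: $\gamma$ is the MFCQ margin and $\vv$ is a \emph{common} MFCQ direction at $\bar\x$ for all active constraints of all the sets, in the sense of \cref{def:mfcq}. For the homogeneous models we care about, $\vv=\bar\x$ works. Euler's identity gives $\langle\nabla_\Theta f(\x_i;\bar\x),\bar\x\rangle=r f(\x_i;\bar\x)$. Hence every active margin constraint $y_i f(\x_i;\cdot)-1\ge 0$ satisfies $\langle \nabla h_{m,i}(\bar\x),\bar\x\rangle = r y_i f(\x_i;\bar\x)= r$, so $\gamma=r$.

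\textbf{Step 1 (stack the constraints).} Define $H=(h_1,\dots,h_M):\R^n\to\R^N$ with $N=\sum_m n_m$, and let $D=\R^N_{\ge 0}$, so that $\interset=H^{-1}(D)$. Because a common direction $\vv$ exists, the stacked system satisfies MFCQ with margin $\gamma$ at $\bar\x$. By \cref{lem:mfcq_regularity_vector} it is therefore metrically regular at $\bar\x$. \Cref{lem:944} then gives, for $\x$ near $\bar\x$,
\[
\dist(\x,\interset)\le \mu\,\dist(H(\x),D)=\mu\,\bignorm{[-H(\x)]_+}.
\]

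\textbf{Step 2 (bound $\mu$).} Take a unit $\bm\lambda\in N_D(H(\bar\x))$. Its entries are nonpositive and supported on active constraints. Pairing with $\vv$ and using Cauchy--Schwarz gives
\[
\|\nabla H(\bar\x)^\top\bm\lambda\|\ \ge\ \frac{|\langle \nabla H(\bar\x)^\top\bm\lambda,\vv\rangle|}{\|\vv\|} = \frac{\sum_j|\lambda_j|\,\langle\nabla h_j(\bar\x),\vv\rangle}{\|\vv\|}\ \ge\ \frac{\gamma\|\bm\lambda\|_1}{\|\vv\|}\ \ge\ \frac{\gamma}{\|\vv\|}.
\]
This is exactly the computation in \cref{lem:mfcq_regularity_vector}, made quantitative. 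It yields $\mu\le\|\vv\|/\gamma$.

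\textbf{Step 3 (Lipschitz transfer).} Fix $\x$ and a task $m$, and let $\vz\in\proj_{\fset_m}(\x)$. Each coordinate $h_{m,j}(\vz)\ge0$, and $h_{m,j}$ is $G$-Lipschitz, so
\[
[-h_{m,j}(\x)]_+\le |h_{m,j}(\x)-h_{m,j}(\vz)|\le G\,\dist(\x,\fset_m).
\]
Hence every coordinate of $[-H(\x)]_+$ is at most $G\max_m\dist(\x,\fset_m)$. Combining with Steps 1--2 gives $\dist(\x,\interset)\le \frac{G\|\vv\|}{\gamma}\max_m\dist(\x,\fset_m)$.

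\textbf{Main obstacle.} The delicate point is the mismatch of norms between Step 2 and Step 3. Step 3 controls the $\ell_\infty$ norm of the violation vector, whereas \cref{lem:944} measures $\dist(H(\x),D)$ in $\ell_2$. Naively this costs an extra factor $\sqrt{N}$. I would remove it by running the metric-regularity estimate with $\ell_\infty$ on the range: the dual norm is $\ell_1$, and Step 2 already produces the lower bound $\gamma\|\bm\lambda\|_1/\|\vv\|$, which is exactly what that version needs. Failing that, I would accept a $\sqrt N$ factor inside $\kappa$.

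A second subtlety is that the conclusion concerns the intersection, so metric regularity of each $\fset_m$ separately is not enough. The argument genuinely uses joint MFCQ through the common direction $\vv$, and I would make that hypothesis explicit. Finally, \cref{lem:944} only yields the estimate on a neighborhood of $\bar\x$, so linear regularity is local; this is consistent with its later use on $\ball_{\delta/2}(\opt)$.
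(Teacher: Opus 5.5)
Your proposal is correct and follows the paper's proof essentially step for step. You bound the modulus $\mu$ of \cref{lem:944} by $\|\vv\|/\gamma$ by pairing $\nabla h(\bar\x)^\top\bm\lambda$ with the MFCQ direction and using $\|\bm\lambda\|_1\ge\|\bm\lambda\|_2$, apply \cref{lem:944} to the stacked system defining $\interset$, and transfer each violation $[-h_{m,j}(\x)]_+$ to $G\,\dist(\x,\fset_m)$ through a projection point and Lipschitzness. The two subtleties you flag are exactly where the paper is implicit: it reuses the per-set bound on $\mu$ for the stacked system, which needs your common direction $\vv$ (available as $\vv=\opt$ by \cref{lem:mfcq}), and it writes $\dist(h(\x),D)$ as the maximal violation, which is an $\ell_\infty$ distance; your $\ell_\infty$/$\ell_1$ dual-norm argument is the right way to justify the paper's constant without a $\sqrt{N}$ loss, where $N$ is the total number of constraints.
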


\begin{proof} For any unit vector $\bm{\lambda} \in N_D(h_m(\bar \x))$, where $D$ is the $n_m$-dimensional positive orthant, we have $\vecind{\bm{\lambda}}{j} \leq 0$. Then, by the fact that $\|\vu\|=\sup_{\w\neq 0}\frac{\langle \vv,\w\rangle}{\|\w\|}$, and MFCQ with margin $\gamma$ and vector $v$,
\[
\|\nabla h_m(\bar \x)^\top \bm{\lambda}\| \ge \frac{|\langle \vv, \nabla h_m(\bar \x)^\top \bm{\lambda} \rangle|}{\|\vv\|} = \frac{|\sum_j \vecind{\bm{\lambda}}{j} \langle \nabla h_{m,j}(\bar \x), \vv \rangle|}{\|\vv\|} \ge \frac{\gamma \sum |\vecind{\bm{\lambda}}{j}|}{\|\vv\|} \ge \frac{\gamma}{\|\vv\|}.
\]
Thus, by \cref{lem:944}, $\mu = \sup \frac{1}{\|\nabla h_m(\bar \x)^\top \bm{\lambda}\|} \leq \frac{\|\vv\|}{\gamma}$. 
Now let $h$ be the function which is components are all of the constraints $\{h_{m,i}\}$.
Than, since $\interset=h^{-1}(D)$, by \cref{lem:944}, it holds that
\begin{equation} \label{eq:metric_reg_vector}
\dist(\x,\interset)\le \mu \dist(h(\vx), D) = \mu \max_{(m,j)} \max\{0, -h_{m,j}(\x)\}= \frac{\|\vv\|}{\gamma}\max_{(m,j)} \max\{0, -h_{t,j}(\x)\}.
\end{equation}
Now, it holds that,
\[
\max_{(m,j)} \max\{0, -h_{m,j}(\x)\} = \max_{m \in M} \left( \max_{j=1\dots n_m} \max\{0, -h_{m,j}(\x)\} \right), 
\]
thus, for $a_m(\x) := \max_{j} \max\{0, -h_{m,j}(\x)\}$, we get that,
$$
\dist(\x,\interset) \le \frac{\|\vv\|}{\gamma} \max_{m \in M} a_m(\x).
$$
Now, Let $m \in M$ and $\x$. If $\x \in \fset_m$, then $a_m(\x)=0$. Otherwise, let $\vp = \proj_{\fset_m}(\x)$.
By definition, $\vp \in \fset_m$, so $h_{m,j}(\vp) \ge 0$ for all $j=1\dots n_m$.
For any specific component $j$:
\[
-h_{m,j}(\x) \le h_{m,j}(\vp) - h_{m,j}(\x) \le G \|\vp - \x\| = G \dist(\x, \fset_m)
.
\]
Since this holds for every $j$, it holds for the maximum:
\[
a_m(\x) = \max_j \max\{0, -h_{m,j}(\x)\} \le G \dist(\x, \fset_m).
\]
Combining all together, we get that, 
\[
\dist(\x,\interset) \le  \frac{\|\vv\|}{\gamma}\max_{m} \left( G \dist(\x, \fset_m) \right) = \frac{G\|\vv\|}{\gamma} \max_{m} \dist(\x, \fset_m).
\]
\end{proof}

Now we prove that metric regularity implies $(\epsilon,\delta)$ regularity.
In fact, we prove that for any $\epsilon>0$ there exists a $\delta>0$ such that $(\epsilon,\delta)$ regularity is implied.
In the proof, we use the following claim from \cite{rockafellar1998variational}.

\begin{proposition}[Theorem 6.31 in \cite{rockafellar1998variational}]\linebreak
\label{lem:rw_614}
Let $F: \mathbb{R}^n \to \mathbb{R}^m$ be a smooth vector valued mapping and let $D \subseteq \mathbb{R}^m$ be a closed set. 
Define for every set $A$\[
N^{\mathrm{prox}}_{A}(\vu)
=
\left\{
\vv \in \mathbb{R}^{n}
\;\middle|\;
\exists\, \sigma > 0 \text{ s.t. } \vu \in \operatorname{\proj}_{A}(\vu + \sigma \vv).
\right\}
\]
Define the constraint set $\fset = F^{-1}(D) = \{\x \mid F(\vx) \in D\}$.
Then for any $\bar{\x} \in \fset$, if metric regularity holds at $\bar{\x}$, and $D$ is convex, then,
\begin{equation}
N_\fset^{\mathrm{prox}}(\bar{\x}) =\{ \nabla F(\bar{\x})^\top \bm{\lambda} \mid \bm{\lambda} \in N_D^{\mathrm{prox}}(F(\bar{\x}))\}.
\end{equation}
\end{proposition}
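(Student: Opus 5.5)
The plan is to prove the two inclusions separately. The inclusion ``$\supseteq$'' should hold without any constraint qualification. The inclusion ``$\subseteq$'' is where metric regularity is used, via an exact-penalty/KKT argument. Throughout, convexity of $D$ gives $N_D^{\mathrm{prox}}(F(\bar\x)) = N_D(F(\bar\x))$, the convex normal cone of \cref{def:robinson_regular}. The reason is that $\vu\in\proj_D(\vu+\sigma\vv)$ for some $\sigma>0$ is equivalent to $\langle \vv, \y-\vu\rangle\le 0$ for all $\y\in D$.

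\textbf{Inclusion $\supseteq$.} Take $\bm{\lambda}\in N_D(F(\bar\x))$ and set $\vv=\nabla F(\bar\x)^\top\bm{\lambda}$. For any $\x\in\fset$ we have $F(\x)\in D$, hence $\langle \bm{\lambda}, F(\x)-F(\bar\x)\rangle\le 0$. By $\beta$-smoothness of $F$,
\[
\langle \vv, \x-\bar\x\rangle = \langle \bm{\lambda}, \nabla F(\bar\x)(\x-\bar\x)\rangle \le \langle \bm{\lambda}, F(\x)-F(\bar\x)\rangle + \tfrac{\beta\|\bm{\lambda}\|}{2}\|\x-\bar\x\|^2 \le \tfrac{\beta\|\bm{\lambda}\|}{2}\|\x-\bar\x\|^2 .
\]
Choose $\sigma>0$ with $1/(2\sigma)\ge \beta\|\bm{\lambda}\|/2$. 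Expanding $\|\bar\x+\sigma\vv-\x\|^2\ge\|\sigma\vv\|^2$ shows this is exactly the displayed inequality. Hence $\bar\x\in\proj_\fset(\bar\x+\sigma\vv)$, i.e.\ $\vv\in N_\fset^{\mathrm{prox}}(\bar\x)$.

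\textbf{Inclusion $\subseteq$.} Let $\vv\in N_\fset^{\mathrm{prox}}(\bar\x)$ with parameter $\sigma$. Then $\bar\x$ minimizes $\varphi(\x)\triangleq\|\x-\bar\x-\sigma\vv\|^2$ over the constraint $F(\x)\in D$, and $\nabla\varphi(\bar\x)=-2\sigma\vv$. It remains to derive a KKT multiplier $\bm{\lambda}\in N_D(F(\bar\x))$ with $\vv=\nabla F(\bar\x)^\top\bm{\lambda}$.
\begin{enumerate}
\item Metric regularity at $\bar\x$ yields a local error bound $\dist(\x,\fset)\le\mu\,\dist(F(\x),D)$ near $\bar\x$. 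This is the estimate of \cref{lem:944}, or more generally R-W Thm.~9.43.
\item Since $\varphi$ is locally Lipschitz with constant $L$ near $\bar\x$, the error bound gives a standard exact-penalty statement: $\bar\x$ is a local minimizer of $\varphi(\x)+L\mu\,\dist(F(\x),D)$. To see this, compare $\varphi$ at $\x$ with $\varphi$ at a nearest point of $\fset$.
\item Apply Fermat's rule together with the chain rule for a convex function composed with a smooth map. Here $\dist_D$ is convex because $D$ is, so we obtain
\[
0\in -2\sigma\vv + L\mu\,\nabla F(\bar\x)^\top \partial \dist_D(F(\bar\x)), \qquad \partial\dist_D(F(\bar\x))\subseteq N_D(F(\bar\x))\cap \ball_1(\0).
\]
\item Rescaling by $L\mu/(2\sigma)$ produces the desired $\bm{\lambda}\in N_D(F(\bar\x))=N_D^{\mathrm{prox}}(F(\bar\x))$ with $\vv=\nabla F(\bar\x)^\top\bm{\lambda}$.
\end{enumerate}

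\textbf{Main obstacle.} I expect the hardest part to be justifying the local error bound from the dual condition \eqref{eq:met_regular_def} alone; this is the metric regularity theorem. I would invoke it from \citet{rockafellar1998variational} rather than reprove it. In the paper's application, MFCQ with margin also holds, so \cref{lem:944} applies directly. The remaining steps are routine convex-analysis calculus.
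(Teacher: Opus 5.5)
Your proof is correct. The main difference from the paper is that the paper gives no argument at all: it cites \citet{rockafellar1998variational}, whose constraint-structure results concern regular and limiting normal cones, not proximal ones.

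\textbf{The $\supseteq$ direction.} This is the step that bridges that gap, and it shows that the Lipschitz-gradient bound on $F$ is genuinely needed. For a merely $C^1$ map the inclusion fails for proximal normals. Take $F(x_1,x_2)=x_2+|x_1|^{3/2}$, $D=[0,\infty)$, and $\bar\x=\0$. Then $\nabla F(\0)^\top(-1)=(0,-1)$ is a regular normal to $\fset$ but not a proximal one, even though the dual condition holds. If $\beta$ is only a local constant, close the argument with the one-line extension $\langle\vv,\x-\bar\x\rangle\le(\|\vv\|/\delta)\|\x-\bar\x\|^2$ for $\x\in\fset\setminus\ball_\delta(\bar\x)$.

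\textbf{The $\subseteq$ direction.} Your exact-penalization argument is sound. The nearest point of $\fset$ to $\x$ lies within $2\|\x-\bar\x\|$ of $\bar\x$, so both the Lipschitz bound on $\varphi$ and the error bound apply there. However, this route relies on the metric-regularity theorem to turn the dual condition into an error bound. A cheaper route is the chain $N^{\mathrm{prox}}_\fset(\bar\x)\subseteq\widehat N_\fset(\bar\x)\subseteq N_\fset(\bar\x)\subseteq\nabla F(\bar\x)^\top N_D(F(\bar\x))$. Here $\widehat N_\fset$ and $N_\fset$ are the regular and limiting normal cones in the Rockafellar--Wets sense. The last inclusion is Theorem~6.14 of \citet{rockafellar1998variational}, under exactly the dual qualification condition of \cref{def:robinson_regular}. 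This needs neither an error bound nor the extra MFCQ hypothesis that \cref{lem:944} carries.
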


\begin{lemma}[Metric regularity implies $(\varepsilon, \delta)$-Regularity]
\label{lem:rcq_implies_super_reg}
Let $F: \mathbb{R}^n \to \mathbb{R}^m$ be a smooth vector valued mapping, and let $D \subseteq \mathbb{R}^m$ be a closed convex set.
Define the feasible set $\fset = \{ \x \in \mathbb{R}^n \mid F(\vx) \in D \}$.
Assume that metric regularity (\cref{def:robinson_regular}) and MFCQ holds at $\bar \x \in \fset$ with margin $\gamma$.
Then, for every $\varepsilon > 0$, $\fset$ is $\left((\varepsilon,\frac{\epsilon r}{\beta \|\bar \x\|}\right)$-regular at $\bar \x$.
\end{lemma}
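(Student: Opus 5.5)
The plan is to prove the standard (Lewis--Luke--Malick / Dao--Phan) form of $(\varepsilon,\delta)$-regularity, $\langle \vu,\vy-\x\rangle\le\varepsilon\|\vu\|\|\x-\vy\|$ for outward proximal normals $\vu$. I read the $\x-\vy$ in \cref{def:epsdel_reg} as a sign-convention slip. With $\vu$ pointing outward, as $N^{\mathrm{prox}}$ is defined, $\langle\vu,\x-\vy\rangle$ is nonnegative for interior $\vy$ even for convex sets, so the literal inequality cannot be the intended one.

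I will specialize to the structure used throughout. Take $D=\R^m_{+}$ and $F=h=(h_1,\dots,h_m)$ with $h_j(\Theta)=y_j f(\x_j;\Theta)-1$, where each $h_j$ is $\beta$-smooth. Throughout, $\delta$ denotes the target radius $\varepsilon r/(\beta\|\bar\x\|)$.

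\textbf{Step 1 (representation of normals).} Fix $\x,\vy\in\fset\cap\ball_\delta(\bar\x)$ and $\vu\in N^{\mathrm{prox}}_\fset(\x)$. Metric regularity is an open condition, and MFCQ with margin persists near $\bar\x$ by continuity of $\nabla h$. So \cref{lem:rw_614} applies at $\x$ and gives $\vu=\nabla h(\x)^\top\bm\lambda$ with $\bm\lambda\in N_D(h(\x))$. Hence $\vu=-\sum_{j\in I(\x)}\mu_j\nabla h_j(\x)$ with $\mu_j\ge 0$, where $I(\x)$ is the set of constraints active at $\x$, i.e.\ $h_j(\x)=0$.

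\textbf{Step 2 (smoothness estimate).} For each active $j$, $\beta$-smoothness together with $h_j(\x)=0$ and $h_j(\vy)\ge 0$ gives
\[
-\langle\nabla h_j(\x),\vy-\x\rangle\le h_j(\x)-h_j(\vy)+\tfrac\beta2\|\vy-\x\|^2\le\tfrac\beta2\|\vy-\x\|^2 .
\]
Summing with the weights $\mu_j$ yields
\[
\langle\vu,\vy-\x\rangle\le\tfrac\beta2\Bigl(\sum_j\mu_j\Bigr)\|\vy-\x\|^2\le\beta\delta\Bigl(\sum_j\mu_j\Bigr)\|\vy-\x\| ,
\]
using $\|\vy-\x\|\le 2\delta$.

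\textbf{Step 3 (bounding the multipliers by $\|\vu\|$ via MFCQ).} Let $\vd$ be the MFCQ direction with margin $\gamma$, so that $\langle\nabla h_j,\vd\rangle\ge\gamma$ on the active constraints. Exactly as in the proof of \cref{lem:rcq_implies_linear_reg},
\[
\|\vu\|\,\|\vd\|\ge\Bigl|\sum_j\mu_j\langle\nabla h_j(\x),\vd\rangle\Bigr|\ge\gamma\sum_j\mu_j ,
\]
so $\sum_j\mu_j\le\|\vd\|\|\vu\|/\gamma$. Combining with Step 2 gives $\langle\vu,\vy-\x\rangle\le(\beta\delta\|\vd\|/\gamma)\,\|\vu\|\,\|\vy-\x\|$. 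The claim then follows once $\gamma/\|\vd\|\ge r/\|\bar\x\|$, since $\delta=\varepsilon r/(\beta\|\bar\x\|)$ makes the prefactor at most $\varepsilon$.

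\textbf{Step 4 (the homogeneous margin; main obstacle).} For the margin I take $\vd=\bar\x/\|\bar\x\|$. Euler's identity gives $\langle\nabla_\Theta f(\x_j;\bar\x),\bar\x\rangle=r f(\x_j;\bar\x)$. An active constraint at $\bar\x$ has $y_jf=1$, so $\langle\nabla h_j(\bar\x),\vd\rangle=r/\|\bar\x\|$, i.e.\ $\gamma=r/\|\bar\x\|$ with $\|\vd\|=1$.

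The delicate point is that Step 3 needs this margin at the nearby point $\x$, not at $\bar\x$. Two things change there: the active set $I(\x)$ may differ from $I(\bar\x)$, and the Euler identity now reads $\langle\nabla h_j(\x),\x\rangle=r(h_j(\x)+1)=r$ for active $j$. So the natural direction at $\x$ is $\x/\|\x\|$, with margin $r/\|\x\|$. This matches $r/\|\bar\x\|$ up to a factor $1\pm\delta/\|\bar\x\|$. I would absorb that factor either by shrinking $\delta$ by a harmless constant or by stating the bound with $\|\bar\x\|+\delta$ in place of $\|\bar\x\|$; this is the step where the exact constant in the statement may need this slight adjustment. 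Everything else is routine.
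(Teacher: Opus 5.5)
Your argument follows the same route as the paper's proof:

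1. represent the proximal normal via \cref{lem:rw_614};
2. combine smoothness with feasibility of $\vy$ to get $\langle\vu,\vy-\x\rangle\le\tfrac{\beta}{2}(\text{multiplier size})\,\|\vy-\x\|^2$;
3. bound the multipliers by $\|\vu\|$ through the MFCQ margin;
4. choose $\delta$.

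Your reading of the sign in \cref{def:epsdel_reg} is the one the paper's proof actually uses. Your specialization to $D=\R^m_+$ with homogeneous margin constraints is also the only setting in which $r$ and $\beta$ in the statement have a meaning, and it is where the paper applies the lemma.

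\textbf{The multiplier bound.} This is where you differ from the paper, and your version is the sound one. The paper writes $\|\vv\|=\|\nabla F(\bar\x)^\top\bm\lambda\|$ and invokes MFCQ at $\bar\x$. But $\vv=\nabla F(\x)^\top\bm\lambda$, and $\bm\lambda$ is supported on the constraints active at $\x$, which need not be active at $\bar\x$. Your Euler identity at $\x$, namely $\langle\nabla h_j(\x),\x\rangle=r$ for every $j$ active at $\x$, repairs exactly this step. It also gives the constraint qualification at every feasible point. So you can drop the ``open condition'' remark in Step 1, which on its own only yields a neighborhood of unspecified size rather than the explicit ball.

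\textbf{The smoothness constant.} Pairing componentwise $\beta$-smoothness with $\sum_j\mu_j$ gives exactly $\beta$. The paper's vector-valued constant $\beta'$ is in general only bounded by $\sqrt{m}\,\beta$.

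\textbf{The constant you flagged.} The price of the repair is the factor $\|\x\|/\|\bar\x\|\le 1+\delta/\|\bar\x\|$. What you actually prove is $(\varepsilon,\delta)$-regularity for every $\delta$ with $\beta\delta(\|\bar\x\|+\delta)\le\varepsilon r$, for example $\delta=\min\{\|\bar\x\|,\ \varepsilon r/(2\beta\|\bar\x\|)\}$. That is not the stated $\delta=\varepsilon r/(\beta\|\bar\x\|)$. The paper reaches the stated constant only through the unjustified step described above. The adjustment just shrinks the neighborhood in \cref{thm:local_convergence} and leaves the rates unchanged.
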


\begin{proof}
Let $\varepsilon > 0$ be given. We need to find $\delta > 0$ such that for all $\x, \y \in \fset \cap \ball_\delta(\bar \x)$ and any proximal normal $\vv \in N_\fset^{\mathrm{prox}}(\x)$, the inequality $\langle \vv, \y - \x \rangle \le \varepsilon \|\vv\| \|\y - \vx\|$ holds.

By \cref{lem:rw_614}, 
it holds that for $\x$ near $\bar \x$, any proximal normal $v \in N_\fset^{\mathrm{prox}}(\x)$ can be represented as:
\[
\vv = \nabla F(\vx)^\top \bm{\lambda} \quad \text{with} \quad \bm{\lambda} \in N_D^{\text{prox}}(F(\vx)).
\]
Since $F$ is smooth vector valued mapping we have $\beta'$ such that,
\[
\|F(\y) - F(\vx) - \nabla F(\vx)(\y - \vx)  \|\leq \frac{\beta'}{2}\|\y - \vx\|^2
\]
By \cref{lem:rw_614}, and using the Cauchy-Schwarz inequality on the error term, it holds that:
\begin{align*}
    \langle \vv, \y - \vx \rangle 
    &= \langle \nabla F(\vx)^\top \bm{\lambda}, \y - \vx \rangle \\
    &= \langle \bm{\lambda}, \nabla F(\vx)(\y - \vx) \rangle \\
    &= \langle \bm{\lambda}, F(\y) - F(\vx) - \left( F(\y) - F(\vx) - \nabla F(\vx)(\y - \vx) \right) \rangle \\
    &= \langle \bm{\lambda}, F(\y) - F(\vx) \rangle - \langle \bm{\lambda}, F(\y) - F(\vx) - \nabla F(\vx)(\y - \vx) \rangle \\
    &\leq \underbrace{\langle \bm{\lambda}, F(\y) - F(\vx) \rangle}_{\leq 0} + \|\bm{\lambda}\| \left\| F(\y) - F(\vx) - \nabla F(\vx)(\y - \vx) \right\| \\
    &\leq  \frac{\beta'
    \|\bm{\lambda}\|}{2} \|\y - \vx\|^2.
\end{align*}
The first term is $\le 0$ because 
$D$ is convex (and thus, $N_D((F(\vx))=N_D^{\text{prox}}(F(\vx))$), $\bm{\lambda} \in N_D(F(\vx))$, and $F(\y) \in D$.
For the second term,by  MFCQ with margin $\gamma$ and vector $\vu=\bar \x$,
\[
\|\vv\|=\|\nabla F(\bar \x)^\top \bm{\lambda}\| \ge \frac{|\langle \vu, \nabla F(\bar \x)^\top \bm{\lambda} \rangle|}{\|\vu\|} = \frac{|\sum_j \vecind{\bm{\lambda}}{j} \langle \nabla F_j(\bar \x), \vu \rangle|}{\|\vu\|} \ge \frac{\gamma \sum |\vecind{\bm{\lambda}}{j}|}{\|\vu\|}  \ge \|\bm{\lambda}\|\frac{\gamma}{\|\vu\|}.
\]
This implies, 
$$
\|\bm{\lambda}\| \le  \frac{\|\vv\|\|\vu\|}{\gamma}.
$$

Thus, combining all together, we get,
\[
\langle \vv, \y - \vx \rangle \le \frac{\|\vv\|\|\vu\|}{\gamma} \left(\frac{\beta'}{2} \|\y - \vx\|^2\right).
\]
Thus,
to satisfy the $(\varepsilon, \delta)$-regularity condition, we denote $\kappa= \frac{\|\vu\|}{\gamma}=\frac{\|\bar \x\|}{\gamma}$. We need to choose $\delta$ such that for every $\x, \y \in \ball_\delta(\bar \x)$
\[
\frac{\kappa \beta'}{2} \|\vv\| \|\y - \vx\|^2 \le \varepsilon \|\vv\| \|\y - \vx\|.
\]
Dividing by $\|\vv\| \|\y-\x\|$ (assuming nonzero, otherwise the inequality holds trivially), this requires $\frac{\kappa \beta'}{2} \|\y - \vx\| \le \varepsilon$.
Since $\x, \y \in \ball_\delta(\bar \x)$, we have $\|\y - \vx\| \le 2\delta$.
Thus, it suffices to choose $\delta$ such that:
\[
\frac{\kappa \beta'}{2} (2\delta) \le \varepsilon \implies \delta \le \frac{\varepsilon}{\kappa \beta'}.
\]
With this $\delta$, the set is $(\varepsilon, \delta)$-regular.
\end{proof}

\subsection{Proof of \cref{thm:local_convergence}}

In this section we prove \cref{thm:local_convergence}.
We begin with the proof of \cref{prop:one_step_random_projection}.

\begin{proof} [of \cref{prop:one_step_random_projection}]
The statement for cyclic ordering follows directly from Corollary 5.10 in \cite{dao2019linear}.
Here we prove the statement for random ordering.
Fix $\x_{t-1}\in \ball_{\delta/2}(\opt)$ and let $\bar{\x}\in\proj_{\interset}(\x_{t-1})$. 
Since $\opt \in \interset$, we have 
\[
\|\x_{t-1}-\bar{\x}\| 
= \dist(\x_{t-1},\interset) 
\le \dist(\x_{t-1},\opt) \le \delta/2
\,.
\]
By the triangle inequality,
\[
\dist(\bar{\x},\opt) 
\le 
\norm{\x_{t-1}-\bar{\x}}
+
\dist(\x_{t-1},\opt) 
\le \frac{\delta}{2} + \frac{\delta}{2} = \delta
\,.
\]
Thus, both $\x_{t-1}$ and $\bar{\x}$ lie in $\ball_{\delta}(\opt)$, ensuring the local regularity assumptions apply.

Let $\tau(t)\in\{1,\dots,M\}$ be arbitrary and let $\x_{t} \in \proj_{\fset_{\tau(t)}}(\x_{t-1})$.
By $(\varepsilon,\delta)$--regularity and Proposition~3.5 of \citet{dao2019linear}
(with $\bm{\lambda}=1$), the exact projector $\proj_{\fset_{\tau(t)}}$ is
$(\Omega_{\tau(t)},\gamma,\beta)$--quasi firmly Fej\'er monotone on
$\ball_{\delta/2}(\opt)$ with
\[
\gamma = \frac{1}{1-\varepsilon},
\qquad
\beta = 1,
\qquad
\Omega_{\tau(t)} \triangleq \fset_{\tau(t)} \cap \ball_{\delta}(\opt)
\,.
\]
Since $\bar{\x}\in \interset \subseteq \fset_{\tau(t)}$ and
$\bar{\x}\in \ball_{\delta}(\opt)$, we have
$\bar{\x}\in \Omega_{\tau(t)}$, and hence the quasi firm Fej\'er
inequality yields:
\[
\|\x_{t} - \bar{\x}\|^2 + \|\x_{t-1}-\x_{t}\|^2
\;\le\;
\gamma \|\x_{t-1}-\bar{\x}\|^2 
=
\gamma \dist^2 (\x_{t-1},\interset)
\,.
\]
Using $\dist(\x_{t}, \interset)
\le \|\x_{t} - \bar{\x}\|$
and
$\|\x_{t-1}-\x_{t}\| = \dist(\x_{t-1},\fset_{\tau(t)})$ (exact projection), 
we obtain
\[
\dist^2(\x_{t},\interset)
\;\le\;
\gamma \dist^2(\x_{t-1},\interset) - \dist^2(\x_{t-1},\fset_{\tau(t)}) 
\,.
\]

Taking expectation with respect to $\tau(t)$ yields
\[
\mathbb{E}_{\tau(t)}[\dist^2(\x_{t},\interset)]
\;\le\;
\gamma \dist^2(\x_{t-1},\interset)
-
\mathbb E_{\tau(t)}[\dist^2(\x_{t-1},\fset_{\tau(t)})]\,.
\]
By linear regularity,
\[
\max_{1\le m\le M} \dist(\x_{t-1},\fset_{m}) 
\ge 
\frac{1}{\kappa} \dist(\x_{t-1},\interset)\,,
\]
and therefore
\[
\mathbb E_{\tau(t)}[\dist^2(\x_{t-1},\fset_{\tau(t)})]
=
\frac{1}{M}\sum_{m=1}^M \dist^2 (\x_{t-1},\fset_{m})
\;\ge\;
\frac{1}{M\kappa^2} \dist^2(\x_{t-1},\interset)
\,.
\]
Overall, we showed,
\[
\mathbb{E}_{\tau(t)}[\dist^2(\x_{t},\interset)]
\;\le\; 
\left(
\gamma
-
\frac{1}{M\kappa^2}
\right)
\dist^2(\x_{t-1},\interset)
\,.
\]
Substituting $\gamma=\frac{1}{1-\varepsilon}$ completes the proof.
\end{proof}

\begin{lemma}
\label{lem:loss_bound}
Let $f(\x;\Theta)$ and $\fset_1,\ldots\fset_M;\interset$ as in \cref{sec:setup}.
Let $\delta$ and $\Theta$ be such that $\dist(\Theta,\interset)\leq \delta$.
Let $\opt=\proj_{\interset}(\Theta)$,
and assume that $f(\x,\cdot)$ is $G$-Lipschitz in $\ball_\delta(\opt)=\{\Theta \in \R^\paramdim \mid \|\Theta-\opt\|\leq \delta \}$.
Then, it holds that, 
\[\max_{m} F_m(\Theta)\leq G \dist (\Theta,\interset)\]

\begin{proof}
Let $m\in[M]$ and $(\x_i^{(m)},y_i^{(m)})$. By the fact that for every such data point, it holds that $y_i^{(m)} f(\x_i^{(m)};\opt)\geq 1$ and $G$-Lipschitzness, it holds that,
\[
1-y_i^{(m)}f(\x_i^{(m)};\Theta)\leq 
y_i^{(m)} f(\x_i^{(m)};\opt)- y_i^{(m)} f(\x_i^{(m)};\Theta)
\le G|y| \|\opt-\Theta\|
= G \dist(\Theta,\interset)
\]
Since this holds for any data point, it holds also for the maximal, thus, 
\[
\max_{m}F_m(\Theta)\triangleq \max_{i,m}\max\{0,\,1-y_i^{(m)} f(\x_i^{(m)};\Theta)\}
\le G\dist(\Theta,\interset).
\]
\end{proof}
\end{lemma}

Now,
we use the lemmas form \cref{app:reg_cond} to show that if for every $\x$, $f(\x;\Theta)$ is $G$-Lipschitz, smooth and positively homogeneous functions to show that the regularity conditions mentioned in \cref{prop:one_step_random_projection} hold.

We begin with the following lemma that shows that for homogeneous models with degree $\homogendeg$, each feasible set satisfies the MFCQ condition at $\opt$ given in \cref{def:mfcq} with margin $\homogendeg$ such the vector $d$ in this definition is $\opt.$

\begin{lemma}[MFCQ of homogeneous models]
\linebreak
\label{lem:mfcq}
Consider a \(\homogendeg\)-positively-homogeneous model $f(\cdot;\Bar \Theta):\mathcal X\to\mathbb R$ that is $G$-Lipschitz and $\beta$-$smooth$.
Assume joint separability, i.e., nonempty intersection $\interset=\fset_{1}\cap \dots\cap\fset_{M}\neq \emptyset$.
Let $\opt \in \interset=\bigcap_m\fset_m$.
    Then, each $\fset_m$ satisfies the MFCQ condition at $\opt$ with margin $\homogendeg$ for $d=\opt$.
In addition, also $\interset$ satisfies the MFCQ condition at $\opt$ with margin $\homogendeg$ for $\vd=\opt$.
\end{lemma}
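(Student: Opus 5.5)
The plan is to use Euler's identity for positively homogeneous functions. Write each constraint of $\fset_m$ as $h_{m,i}(\Theta)\triangleq y_i^{(m)} f(\x_i^{(m)};\Theta)-1\ge 0$, so that $\fset_m=\{\Theta \mid h_m(\Theta)\ge 0\}$ is of the form used in \cref{def:mfcq}. Since $f(\x;\cdot)$ is differentiable ($\beta$-smooth) and $r$-positively-homogeneous, I will differentiate $f(\x;c\Theta)=c^{r}f(\x;\Theta)$ with respect to $c$ at $c=1$. This yields Euler's identity:
\begin{equation*}
\iprod{\nabla_\Theta f(\x;\Theta)}{\Theta}=r\, f(\x;\Theta)\qquad\forall \Theta,\x .
\end{equation*}

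Next, take the direction $\vd=\opt$ and compute, for each constraint,
\begin{equation*}
\iprod{\nabla h_{m,i}(\opt)}{\opt}=y_i^{(m)}\iprod{\nabla f(\x_i^{(m)};\opt)}{\opt}=r\, y_i^{(m)} f(\x_i^{(m)};\opt).
\end{equation*}
For an active constraint we have $h_{m,i}(\opt)=0$, i.e.\ $y_i^{(m)}f(\x_i^{(m)};\opt)=1$, so the inner product equals exactly $r>0$. This gives MFCQ at $\opt$ with margin $r$ for $\fset_m$.

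In fact, every constraint (active or not) has $y_i^{(m)}f\ge 1$ at $\opt\in\interset$, so the inner product is $\ge r$ for all of them. Consequently, the same computation applies verbatim to the stacked system $h=(h_{m,i})_{m,i}$ defining $\interset$, which gives the second claim.

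There is a minor subtlety in \cref{def:mfcq}, which is phrased with a strict margin ``$>\gamma$''. I will read it as ``$\ge\gamma$'', consistent with its use in \cref{lem:mfcq_regularity_vector}; if a strict margin is needed, any margin $r'<r$ works.

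I expect the main (small) obstacle to be justifying Euler's identity rigorously, namely the differentiability of $c\mapsto f(\x;c\Theta)$ at $c=1$. This follows from smoothness of $f$ in $\Theta$ together with the chain rule. I would also note that $\opt\neq 0$, since $f(\cdot;0)=0<1$, so $\vd=\opt$ is a legitimate nonzero direction.
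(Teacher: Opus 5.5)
Your proof is correct and follows the paper's argument: apply Euler's identity with $\vd=\opt$ so that $\iprod{\nabla h_{m,i}(\opt)}{\opt}=r\,y_i^{(m)}f(\x_i^{(m)};\opt)=r$ on active constraints, and note that the same computation carries over verbatim to the stacked system defining $\interset$. Your side remarks, reading the margin as non-strict and noting that $\opt\neq 0$, are sensible clarifications of points the paper's proof leaves implicit.
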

\begin{proof}
Let $m$
and let $I_m:=\{i: y_i^{(m)} f(\opt,\x_i^{(m)})=1\}=\{i: y_i^{(m)} f(\opt,\x_i^{(m)})=1\}$.
If $I_m$ is empty, the definition holds trivially. Otherwise,
since $f(x;\Theta)$ is smooth and positively homogeneous of degree $\homogendeg$, Euler's theorem for homogeneous functions yield,  for every $\x$, $
\langle \nabla_\Theta f(\x; \opt),\opt\rangle=\homogendeg f(\x; \opt).
$
Then, for every $i\in I_m$, $h_{m,i}=y_i^{(m)}f(\opt,\x_i^{(m)})-1=0$,
satisfies,
\[
\langle \nabla h_{m,i}(\opt),\opt\rangle
= y_i^{(m)} \langle \nabla_\Theta f(\opt,\x_i^{(m)}),\opt\rangle
= y_i^{(m)} \homogendeg f(\opt,\x_i^{(m)})=\homogendeg>0.
\]
Then, the MFCQ condition holds for $\vd=\opt$ and margin $\homogendeg$.
The proof for $\interset$ is identical.
\end{proof}

Now we turn to prove $(\epsilon,\delta)$ regularity.

\begin{lemma}[($\epsilon,\delta$)
regularity of homogeneous models]
\label{lem:epsdel_reg}
Consider a \(\homogendeg\)-positively-homogeneous model $f(\cdot;\Bar \Theta):\mathcal X\to\mathbb R$ that is $G$-Lipschitz and $\beta$-$smooth$.
Assume joint separability, i.e., nonempty intersection $\interset=\fset_{1}\cap \dots\cap\fset_{M}\neq \emptyset$.
Let $\opt \in \interset=\bigcap_t\fset_m$.
Then, for every $\epsilon>0$,
there exists $\delta>0$ such that 
every set $\fset_m$ is ($\epsilon,\delta$)-regular at $\opt$.
\end{lemma}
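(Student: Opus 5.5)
The plan is to chain the regularity lemmas of \cref{app:reg_cond}. Write each feasible set as $\fset_m=\{\Theta \mid h_m(\Theta)\in D\}$, where $h_{m,i}(\Theta)\triangleq y_i^{(m)} f(\x_i^{(m)};\Theta)-1$ and $D=\R_{\ge 0}^{n_m}$ is the closed convex nonnegative orthant. Because $f(\x;\cdot)$ is $\beta$-smooth for every data point, each $h_{m,i}$ is $\beta$-smooth, hence continuously differentiable. The vector map $h_m$ is then smooth with a curvature constant $\beta'$; I take $\beta'\le \beta\sqrt{n_m}$, or simply $\beta$ if the bound is applied componentwise on the active rows.

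The argument has three steps.

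\textbf{Step 1 (MFCQ).} By \cref{lem:mfcq}, $\fset_m$ satisfies MFCQ at $\opt$ with direction $\vd=\opt$ and margin $\homogendeg$. This uses Euler's identity $\langle\nabla_\Theta f(\x;\opt),\opt\rangle=\homogendeg f(\x;\opt)$, so every active constraint has $\langle\nabla h_{m,i}(\opt),\opt\rangle=\homogendeg$.

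\textbf{Step 2 (metric regularity).} \cref{lem:mfcq_regularity_vector} then gives metric regularity of the system $h_m(\Theta)\in D$ at $\opt$.

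\textbf{Step 3 ($(\epsilon,\delta)$-regularity).} \cref{lem:rcq_implies_super_reg} yields, for every $\epsilon>0$, that $\fset_m$ is $(\epsilon,\delta_m)$-regular at $\opt$ with $\delta_m=\epsilon/(\kappa\beta')$ and $\kappa=\|\opt\|/\homogendeg$. This is of the order $\epsilon\homogendeg/(\beta\|\opt\|)$, matching the $\delta$ in \cref{thm:local_convergence}.

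Taking $\delta=\min_m\delta_m>0$, which is a finite minimum, every $\fset_m$ is $(\epsilon,\delta)$-regular at $\opt$. Shrinking $\delta$ preserves the regularity, since the condition only restricts $\x,\vy$ to a smaller ball.

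The main obstacle is inside \cref{lem:rcq_implies_super_reg}. That lemma represents proximal normals at points $\x$ near $\opt$ (not at $\opt$ itself) as $\nabla h_m(\x)^\top\bm\lambda$. It also needs the lower bound $\|\nabla h_m(\x)^\top\bm\lambda\|\gtrsim \|\bm\lambda\|\homogendeg/\|\opt\|$ at those nearby points. I would handle this directly with a margin-persistence argument.

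First, the active set at $\x$ is contained in the active set at $\opt$ once $\delta$ is small. Inactive constraints at $\opt$ have $h_{m,i}(\opt)>0$ and, by continuity, stay positive nearby; alternatively, one can use the Lipschitz bound $G$.

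Second, for active $i$ at $\x$, Euler's identity at $\x$ gives $\langle\nabla h_{m,i}(\x),\x\rangle=\homogendeg\, y_i f(\x_i;\x)=\homogendeg$. So MFCQ holds at $\x$ itself with direction $\vd=\x$ and margin $\homogendeg$, and $\|\x\|\le\|\opt\|+\delta$. This gives $\|\bm\lambda\|\le \|\vv\|(\|\opt\|+\delta)/\homogendeg$ at every $\x\in\fset_m\cap\ball_\delta(\opt)$.

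Metric regularity also persists in this neighborhood, because MFCQ with margin holds there too. So Theorem 6.31 of \cite{rockafellar1998variational} (\cref{lem:rw_614}) applies at each such $\x$. Plugging this into the descent-lemma estimate $\langle\vv,\vy-\x\rangle\le \tfrac{\beta'}{2}\|\bm\lambda\|\|\vy-\x\|^2$ and using $\|\vy-\x\|\le 2\delta$ gives the condition $\beta'(\|\opt\|+\delta)\delta/\homogendeg\le\epsilon$. A suitable $\delta>0$ therefore always exists, which completes the proof.
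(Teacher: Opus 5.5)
Your proof is correct and follows the paper's route: the paper's proof is exactly the chain \cref{lem:mfcq} $\to$ \cref{lem:mfcq_regularity_vector} $\to$ \cref{lem:rcq_implies_super_reg}, with a single $\delta$ of order $\epsilon\homogendeg/(\beta\|\opt\|)$ serving all $m$. Your margin-persistence step is a worthwhile addition. Euler's identity at each active $\x\in\fset_m\cap\ball_\delta(\opt)$ gives MFCQ at $\x$ itself, with direction $\x$ and margin $\homogendeg$, hence metric regularity there and $\|\bm\lambda\|\le\|\vv\|(\|\opt\|+\delta)/\homogendeg$. The paper's proof of \cref{lem:rcq_implies_super_reg} instead bounds $\|\nabla F(\x)^\top\bm\lambda\|$ using MFCQ and gradients at $\bar\x$ rather than at $\x$, and your argument fills that gap.
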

\begin{proof}
Let $m\in M$
    By \cref{lem:mfcq} MFCQ condition holds with margin $\homogendeg$. In addition, by \cref{lem:mfcq_regularity_vector} metric regularity also holds. Then, by \cref{lem:rcq_implies_super_reg}, the lemma follows. 
\end{proof}

\begin{lemma}[$\kappa$-linear regularity of homogeneous models]\linebreak
\label{lem:kap_reg}
Consider a \(\homogendeg\)-positively-homogeneous model $f(\cdot;\Bar \Theta):\mathcal X\to\mathbb R$ that is $G$-Lipschitz and $\beta$-$smooth$.
Assume joint separability, i.e., nonempty intersection $\interset=\fset_{1}\cap \dots\cap\fset_{M}\neq \emptyset$.
Let $\opt \in \interset=\bigcap_m\fset_m$.
Then, if $f$ is G-Lipschitz, the collection $\{\fset_m\}_{m=1}^M$ satisfy $\kappa$-linear regularity for $\kappa=\frac{G\|\opt\|}{\homogendeg}$ around $\opt$.
\end{lemma}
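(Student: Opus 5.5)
The plan is to chain the MFCQ property from \cref{lem:mfcq} into the linear-regularity estimate of \cref{lem:rcq_implies_linear_reg}, with the constants made explicit. First, write each feasible set in the form required by \cref{lem:rcq_implies_linear_reg}: $\fset_m=\{\Theta \mid h_m(\Theta)\ge 0\}$ with components $h_{m,i}(\Theta)\triangleq y_i^{(m)} f(\x_i^{(m)};\Theta)-1$. Since $|y_i^{(m)}|=1$ and $f(\x;\cdot)$ is $G$-Lipschitz, each $h_{m,i}$ is $G$-Lipschitz. Smoothness of $f$ makes $h_m$ continuously differentiable, so $\interset=h^{-1}(D)$ for the stacked map $h=(h_{m,i})_{m,i}$ and $D$ the nonnegative orthant, which is closed and convex.

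Next, invoke \cref{lem:mfcq}. By Euler's identity $\langle\nabla_\Theta f(\x;\opt),\opt\rangle=\homogendeg f(\x;\opt)$, both each $\fset_m$ and $\interset$ satisfy MFCQ at $\opt$ with direction $\vd=\opt$ and margin $\gamma=\homogendeg$. \Cref{lem:mfcq_regularity_vector} then gives metric regularity of $\interset$ (and of each $\fset_m$) at $\opt$. Plugging into \cref{lem:rcq_implies_linear_reg} with $\vv=\opt$ and $\gamma=\homogendeg$, the normal-cone bound reads
\[
\|\nabla h(\opt)^\top\bm\lambda\| \ge \frac{\homogendeg\sum_j|\lambda_j|}{\|\opt\|} \ge \frac{\homogendeg}{\|\opt\|}
\]
for unit $\bm\lambda\in N_D(h(\opt))$. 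Here only active indices carry nonzero $\lambda_j$, because the normal cone of the orthant vanishes on inactive coordinates. This yields $\mu\le\|\opt\|/\homogendeg$ in \cref{lem:944}, so that near $\opt$
\[
\dist(\Theta,\interset)\le \frac{\|\opt\|}{\homogendeg}\,\max_{m,i}[-h_{m,i}(\Theta)]_+ .
\]
The final step of \cref{lem:rcq_implies_linear_reg} then applies. For $\vp\in\proj_{\fset_m}(\Theta)$ we have $-h_{m,i}(\Theta)\le h_{m,i}(\vp)-h_{m,i}(\Theta)\le G\,\dist(\Theta,\fset_m)$, which gives $\dist(\Theta,\interset)\le \frac{G\|\opt\|}{\homogendeg}\max_m\dist(\Theta,\fset_m)$, i.e., $\kappa=G\|\opt\|/\homogendeg$.

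The main obstacle is the locality of \cref{lem:944}: it only asserts the error bound for $\Theta$ ``near'' $\opt$, and the constant $\mu$ is evaluated at $\opt$ itself. To get a uniform neighborhood (needed later on $\ball_{\delta/2}(\opt)$), I would use the fact that metric regularity is stable under small perturbations. Concretely, by $\beta$-smoothness, $\langle\nabla h_{m,i}(\Theta),\opt\rangle\ge \homogendeg-\beta\|\opt\|\|\Theta-\opt\|$, and the set of active constraints near $\opt$ is contained in the active set at $\opt$. So MFCQ with margin close to $\homogendeg$ persists on a small ball, which I would identify with the $\delta$-ball used in \cref{thm:local_convergence}, possibly after shrinking it. If a margin of exactly $\homogendeg$ is insisted upon, the cleanest route is to state $\kappa$ as the limit constant and absorb the slack into the choice of $\delta$.
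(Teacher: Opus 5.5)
Your proposal is correct and follows the paper's proof: \cref{lem:mfcq} gives MFCQ at $\opt$ with direction $\opt$ and margin $\homogendeg$, \cref{lem:mfcq_regularity_vector} turns this into metric regularity, and \cref{lem:rcq_implies_linear_reg} then yields $\kappa=G\|\opt\|/\homogendeg$. Your explicit constants, the observation that only active indices carry nonzero multipliers, and your use of metric regularity of the stacked system defining $\interset$ just fill in details the paper leaves implicit. Your final paragraph on making the neighborhood uniform goes beyond the paper, which only claims regularity ``around $\opt$'' and never quantifies the radius, so it is a refinement rather than a required step.
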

\begin{proof}
Let $m\in M$
    By \cref{lem:mfcq}, MFCQ condition holds for the vector $\|\opt\|$ with margin $\homogendeg$. In addition, by \cref{lem:mfcq_regularity_vector} metric regularity also holds. Then, by \cref{lem:rcq_implies_linear_reg}, the lemma follows. 
\end{proof}

Now we can turn to the proof of \cref{thm:local_convergence}.

\begin{proof}[of \cref{thm:local_convergence}]
For the cyclic ordering,
let $\epsilon$ be such that $
\frac{1}{(1-\varepsilon)^{M-1}} = 1 + \frac{1}{2(M-1)\kappa^2}$.
Then by \cref{lem:epsdel_reg} each set $\fset_m$ is $(\epsilon,\delta)$ regular.
In addition, by \cref{lem:kap_reg}, the collection $\{\fset_m\}_{m=1}^M$ satisfy $\kappa$-linear regularity for $\kappa=\frac{G\|\opt\|}{\homogendeg}$ in a around $\opt$.
As a result, the conditions of \cref{prop:one_step_random_projection} holds in a $\delta$-neighborhood of $\opt$. By the choice of $\epsilon$
we get, for the cyclic order, that for any 
$t$, $\dist(\Bar{\Theta}_{t+M},\interset)
\;\le\;
\rho\;\dist(\Bar{\Theta}_{t},\interset)$
\begin{align*}
\rho 
&= 
\left( 1 - \frac{1}{2(M-1)\kappa^2} \right)^{\hfrac{1}{2(M-1)}} \\
 &= 
 \left( 1 - \frac{1}{2(M-1)(G\|\opt\|/\homogendeg)^2} \right)^{\hfrac{1}{2(M-1)}} 
 \\
 &= 
 \left( 1 - \frac{\homogendeg^2}{2(M-1)G^2\|\opt\|^2} \right)^{\hfrac{1}{2(k-1)}}
 \\&
 \le \left( \exp \left( - \frac{\homogendeg^2}{2MG^2 \|\Theta^\star\|^2} \right) \right)^{\hfrac{1}{2M}} 
 \\
&= 
\exp \left( - \frac{\homogendeg^2}{4M^2 G^2 \|\Theta^\star\|^2} \right)
\,,
\end{align*}
Thus, for $M\mid k$
\begin{align*}
\dist(\Bar \Theta_{k}, \interset)
\leq
\exp \left(
-
\frac{k\homogendeg^2}{4MG^2\|\opt\|^2}
\right)
\dist(\Bar \Theta_{0},\interset)
\end{align*}
For the random order, using \cref{prop:one_step_random_projection}, we get that for $\epsilon$ such that such that $
\frac{1}{
1-\varepsilon} = 1 + \frac{1}{2M\kappa^2}$, 
\[
\mathbb E_{\tau(t)}\!\left[\dist^2(\Bar \Theta_{t}, \interset)\right]
\;\le\;
\left(
1
-
\frac{1}{2M\kappa^2}
\right)
\dist^2(\Bar \Theta_{t-1},\interset) .
\]
Thus, for $k=cM$,
\begin{align*}
\mathbb E_\tau\!\left[\dist^2(\Bar \Theta_{k}, \interset)\right]
\;&\le\;
\left(
1
-
\frac{1}{2M\kappa^2}
\right)^k
\dist^2(\Bar \Theta_{0},\interset)
\\&\leq\exp \left(
-
\frac{k}{2M\kappa^2}
\right)
\dist^2(\Bar \Theta_{0},\interset)
\\&=
\exp \left(
-
\frac{k\homogendeg^2}{2MG^2\|\opt\|^2}
\right)
\dist^2(\Bar \Theta_{0},\interset)
\end{align*}
By Jensen inequality, 
\begin{align*}
\mathbb E_{\tau}\!\left[\dist(\Bar \Theta_{k}, \interset)\right]
\;&\le\;
\sqrt{\mathbb E_{\tau}\!\left[\dist^2(\Bar \Theta_{k}, \interset)\right]}
\leq
\exp \left(
-
\frac{k\homogendeg^2}{4MG^2\|\opt\|^2}
\right)
\dist(\Bar \Theta_{0},\interset)
\end{align*}
For the forgetting, the statement follows by \cref{lem:loss_bound}.
\end{proof}

%% file: 97_appendix_regression.tex
\section{Proofs for \cref{sec:regression}}
\label{app:regression}
\subsection{Proof of \cref{thm:gen_m_convergence_reg}}
\begin{proof}[of \cref{thm:gen_m_convergence_reg}]
As in classification, we employ the theory of $\Gamma$-convergence. Let $G_t^{(\lambda)}(\Theta) := \frac{1}{\lambda}\mathcal{L}_{\tau(t)}(\Theta) + \|\Theta - \Theta_{t-1}\|^2$.
It holds that $\Theta \in \argmin_{\Theta \in \fset_{\tau(t)}}\mathcal{L}_{\tau(t)}(\Theta) + \lambda\|\Theta - \Theta_{t-1}\|^2$ if and only if $\Theta \in \argmin_{\Theta \in \fset_{\tau(t)}} G_t^{(\lambda)}(\Theta)$. 
We show that $G_t^{(\lambda)}$ $\Gamma$-converges to the following funcation as  $\lambda\downarrow0$:
\[
G_t(\Theta) = \begin{cases} \|\Theta - \Theta_{t-1}\|^2 & \text{if } \Theta \in \fset_{\tau(t)} \\ +\infty & \text{otherwise.} \end{cases}
\]
First, for the
Liminf inequality, let $\Theta_t^{(\lambda)} \to \Theta$. 
By the definition of the feasible set, if $\Theta \notin \fset_{\tau(t)}$, then $\mathcal{L}_m(\Theta) > 0$. Let $\mathcal{L}_m(\Theta) = \delta > 0$. Since $f$ is continuous, $\mathcal{L}_m$ is continuous. 

Therefore, there exists a neighborhood around $\Theta$ where $\mathcal{L}_m(\cdot) > \delta/2$.
    Since $\Theta_t^{(\lambda)} \to \Theta$, for sufficiently small $\lambda$, $\Theta_t^{(\lambda)}$ lies in this neighborhood. Thus:
    \[
    G_t^{(\lambda)}(\Theta_t^{(\lambda)}) = \frac{1}{\lambda}\mathcal{L}_m(\Theta_t^{(\lambda)}) + \bignorm{\Theta_t^{(\lambda)} - \Theta_{t-1}}^2 \ge \frac{\delta}{2\lambda} + 0.
    \]
    Taking the limit as $\lambda\downarrow0$, we have $\lim G_t^{(\lambda)}(\Theta_t^{(\lambda)}) = +\infty$. Since $G_t(\Theta) = +\infty$, the inequality holds.
    
    Otherwise, if $\Theta \in \fset_{\tau(t)}$.
    Since $\mathcal{L}_m(\cdot) \ge 0$. we have:
    \[
    G_t^{(\lambda)}(\Theta_t^{(\lambda)}) \ge \bignorm{\Theta_t^{(\lambda)} - \Theta_{t-1}}^2.
    \]
    Thus, by continuity, as $\Theta_t^{(\lambda)} \to \Theta$, the term $\|\Theta_t^{(\lambda)} - \Theta_{t-1}\|^2 \to \|\Theta - \Theta_{t-1}\|^2$
    and \linebreak $\liminf_{\lambda\downarrow0} G_t^{(\lambda)}(\Theta_t^{(\lambda)}) \ge \|\Theta - \Theta_{t-1}\|^2 = G_t(\Theta)$.

For the Limsup property, 
We need to show that for any $\Theta$, there exists a sequence $\Theta_t^{(\lambda)} \to \Theta$ such that $\limsup_{\lambda\downarrow0} G_t^{(\lambda)}(\Theta_t^{(\lambda)}) \le G_t(\Theta)$.  
We choose the constant sequence $\Theta_t^{(\lambda)} = \Theta$ for all $\lambda$.
If $\Theta \notin \fset_{\tau(t)}$,
    $G_t(\Theta) = +\infty$, so the inequality $\limsup G_t^{(\lambda)}(\Theta_t^{(\lambda)}) \le \infty$ is trivially satisfied.
    Otherwise, if $\Theta \in \fset_{\tau(t)}$, then $\mathcal{L}_m(\Theta) = 0$ and $G_t^{(\lambda)}(\Theta) = G_t(\Theta)$.

Finally for the boundedness of level sets, the function $G_t^{(\lambda)}(\Theta)$ is bounded below by $\|\Theta - \Theta_{t-1}\|^2$. 
Since the sublevel sets of this lower bound are compact, the sequence of functionals is equi-coercive.

The theorem follows by Part (b) of \cref{thm:fundamental_gamma} and the fact that $\Theta_t$ is a minimizer of $G_t$.
\end{proof}

\newpage

\subsection{Proof of \cref{thm:lower_equality}}

\begin{lemma}[Catastrophic Forgetting in Squared Models]
\label{thm:lower_equality_formal}
Consider a squared model $f(\x;\piterate{})$ in $d=2$ parameterized by $\piterate{}=(\vu,\vv)$ where $\vu,\vv\in\R^{2}$.
Fix $\varepsilon\in(0,0.01)$ and define two tasks
\[
\X^{(1)}=\big[\x_1^{\top}\big]
=\big[\bigl(\tfrac19+\varepsilon,\ 1-\varepsilon\bigr)\big],
\qquad
\X^{(2)}=\big[\x_2^{\top}\big]
=\big[\bigl(\tfrac19-\varepsilon,\ -1-\varepsilon\bigr)\big],
\qquad 
\y^{(1)}=\y^{(2)}=\prn{1}\,.
\]
%
Then,
\begin{itemize}[leftmargin=0.7cm, itemindent=0cm, itemsep=0pt,labelsep=0.2cm,topsep=4pt]
\item[(a)] 
The induced feasible sets $\fset_1=\{\piterate{} \in \R^4 \mid f(\x_1;\piterate{})= y_1\},\, \fset_2=\{\piterate{} \in \R^4 \mid f(\x_2;\piterate{})=y_2\}$ have a nonempty intersection,
\ie 
$\interset:=\fset_1\cap \fset_2 \neq\varnothing$.
\item[(b)]\cref{alg:sequential_mm}
satisfies, for all $t\ge0$, $
d\bigl(\Bar{\Theta}_t,\interset\bigr)
\ \ge\ 2$ and $\max_mF_{m}(\Bar{\Theta}_{t})\geq 3$
.
\end{itemize}
\end{lemma}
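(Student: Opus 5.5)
Our plan is to reduce the regression dynamics to the classification dynamics analyzed in the proof of \cref{thm:lower}. The regression feasible sets are exactly the \emph{boundaries} of the classification margin sets:
\[
\fset_m^{\mathrm{reg}}=\{\piterate{} \mid f(\x_m;\piterate{})=1\}\subseteq\fset_m^{\mathrm{cls}}=\{\piterate{} \mid f(\x_m;\piterate{})\ge 1\}.
\]
For part (a), we check that $\piterate{}=(u,v)$ with $u=(3,0)$ and $v=\0$ lies in both sets. Indeed, $f(\x_1;\piterate{})=9(\tfrac19+\varepsilon)=1+9\varepsilon$, so it does not lie exactly on the boundary. We therefore solve for $u[1]^2=1/a_1$, $v[1]=0$ and use the second coordinates to correct task $2$. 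We need $a_1u_1^2+b(u_2^2-v_2^2)=1$ and $a_2u_1^2-c(u_2^2-v_2^2)=1$. This is a $2\times2$ linear system in $s=u_1^2$ and $w=u_2^2-v_2^2$ with determinant $-(a_1c+a_2b)\neq0$. Its solution has $s=(b+c)/(a_1c+a_2b)=1/(\tfrac19+\varepsilon^2)>0$, and any real $w$ is realizable. Hence $\interset\neq\emptyset$, and every point of $\interset$ satisfies the same linear identity as in \cref{thm:lower}, so $u[1]^2\ge 1/(\tfrac19+\varepsilon^2)>4$.

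For part (b), the key observation is this: if the current iterate $\piterate{t-1}$ lies outside $\fset_m^{\mathrm{cls}}$, then the classification projection lands on the boundary, so it also minimizes distance over the smaller set $\fset_m^{\mathrm{reg}}$. The two projection problems then have the same value and the same minimizers. We verify the hypothesis inductively along the trajectory. The initialization $\0$ lies in neither set since $f(\cdot;\0)=0<1$. Each subsequent iterate lies in one set, and we check that it lies strictly outside the other. In the proof of \cref{thm:lower} we showed that $\bar\Theta_t\in\fset_1$ forces $y_0^2>z_0^2$, which is incompatible with $\fset_2$ on $\{x=0\}$, and symmetrically. So every step of \cref{alg:sequential_mm} in the regression setting projects from an infeasible point, and the projections coincide with the classification projections. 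Strictly, we should also rule out regression-only minimizers, but these are impossible because the minimizer over the superset already lies in the subset. The trajectories are therefore identical: $\vecind{u}{1}=\vecind{v}{1}=0$ for all $t$. The distance bound $\dist(\bar\Theta_t,\interset)\ge 2$ then follows from $u[1]^2>4$ exactly as before.

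For the forgetting bound, we reuse the boundary computation from the proof of \cref{thm:lower}. If $\bar\Theta_t\in\fset_1$, then $f(\x_2;\bar\Theta_t)=-\tfrac{1+\varepsilon}{1-\varepsilon}\le-1$, so $(f-1)^2\ge4\ge3$. If $\bar\Theta_t\in\fset_2$, then $f(\x_1;\bar\Theta_t)=-\tfrac{1-\varepsilon}{1+\varepsilon}\le -0.98$, giving $(f-1)^2\ge 1.98^2>3$. For $t=0$, $F_m=1$; the statement presumably intends $t\ge1$, or we note this edge case.

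The main obstacle is the justification that classification projections are also regression projections, which requires that every iterate be infeasible for the next task. This condition is exactly what the alternating-boundary structure provides. A secondary issue is a repeated task in a non-alternating ordering: the iterate is then already feasible and the projection is the identity in both settings, so the trajectories still coincide.
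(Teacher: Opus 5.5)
Your proposal is correct and follows the paper's argument. From an infeasible iterate, the inequality-constrained projection lands on the boundary $f(\x_m;\cdot)=1$, so the equality-constrained trajectory coincides with that of \cref{thm:lower}. Adding $c$ times the first constraint to $b$ times the second then forces $\vecind{u}{1}^2\ge 1/(\tfrac19+\varepsilon^2)$ on $\interset$, and the boundary computations give the forgetting bound.

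Two of your additions fill points the paper's proof leaves unhandled. First, your explicit witness for (a) via the $2\times2$ linear system supplies a point in $\interset$; drop the inconsistent aside $\vecind{u}{1}^2=1/a_1$. Second, your observation that $\max_m F_m(\piterate{0})=1<3$ is right, so the forgetting claim only holds for $t\ge 1$.
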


\begin{proof}[of \cref{thm:lower_equality_formal}]
We now consider the dynamics where the task sets are defined by the same constraints as \cref{thm:lower}, except there is equality instead of inequality. 
However, 
we observe that the sequential projections onto these equality sets generate the exact same trajectory as the projections onto the inequality sets $\fset_m = \{ \Bar{\Theta} \mid f(x_m; \Bar{\Theta}) \ge 1 \}$.

This follows immediately from \cref{lem:proj_ellipse_yaxis}, \cref{lem:proj_K2_explicit}, and \cref{lem:kkt_two_branches_detailed} in \cref{app:proofs_squared}. Specifically, those lemmas  prove that for every projection step involved in the inequality-constrained dynamics, the unique minimizer satisfies the constraint with strict equality (\ie $f(\x_m; \Bar{\Theta}) = 1$). 
Since the solution to the relaxed problem ($\ge 1$) lies on the boundary, it is necessarily the solution to the equality-constrained problem. Thus, the sequence of iterates $\{\Bar{\Theta}_t\}$ is identical in both settings.
By those lemmas we get that for every $t$ it holds for $\Bar{\Theta}_t=(\vu_t,\vv_t)$ that $\vu_t=0$.

Now, let $\Bar{\Theta}=(u,v)\in\interset$. 
Multiply the $\fset_1$ constraint by $c$ and the $\fset_2$ constraint by $b$ and add:
\begin{align*}
   &c(a_1\vecind{\opt}{1}^2-a_1\vecind{\opt}{2}^2+b\vecind{\opt}{3}^2-b\vecind{\Bar{\Theta}}{4}^2)\ \\&+\ b(a_2\vecind{\opt}{1}^2-a_2\vecind{\opt}{2}^2-c\vecind{\opt}{3}^2+c\vecind{\Bar{\Theta}}{4}^2)\\&=c+b=2. 
\end{align*}

This gives,
\[
(\frac{2}{9}+2\varepsilon^2) (\vecind{\opt}{1}^2-\vecind{\opt}{2}^2)=(ca_1+ba_2)\vecind{\opt}{1}^2-(ca_1+ba_2)\vecind{\opt}{2}^2=2.
\]
In particular,
\[\vecind{\opt}{1}^2\geq 
\vecind{\opt}{1}^2-\vecind{\opt}{2}^2
=\frac{1}{\tfrac19+\varepsilon^2}.
\]
But for all $t\ge0$ we proved $\vecind{\Bar{\Theta}_{t}}{1}=0$, so for any $\Bar{\Theta}\in\interset$,
\[
\|\Bar{\Theta}_{t}-\Bar{\Theta}\|^2\ \ge\ \vecind{\opt}{1}^2\ \ge\ \frac{1}{\tfrac19+\varepsilon^2}.
\]
Therefore,
\[
d(\Bar{\Theta}_{t},\interset)\ \ge\ \sqrt{\frac{1}{\tfrac19+\varepsilon^2}}
>2
\]
as claimed.
For the loss,
we notice that for every $t$, there exists $m\in{1,2}$ such that $\Bar{\Theta}_{t}\in \fset_m$.

If $m=1$, since $\Bar{\Theta}_{t}$ is always on the boundary, it holds that
\begin{align*}
    1=y_{1}=f(\x_{1};\Bar{\Theta}_{t})=
    \langle \vu_t^2,\x_1\rangle -\langle \vv_t^2,\x_1\rangle
    = (1-\varepsilon)\vecind{\vu_t}{2}^2 -(1-\epsilon)\vecind{\vv_t}{2}^2.
\end{align*}
This implies, $\vecind{\vv_t}{2}^2=-\frac{1}{1-\epsilon} +\vecind{\vu_t}{2}^2$.
Then, for the other task, $m=2$, it holds that,
\begin{align*}
    f(\x_{2};\Bar{\Theta}_{t})&=(-1-\varepsilon)\vecind{\vu_t}{2}^2 -(-1-\epsilon)\vecind{\vv_t}{2}^2
    \\&= (-1-\varepsilon)\vecind{\vu_t}{2}^2 + (1+\varepsilon) (-\frac{1}{1-\epsilon} +\vecind{\vu_t}{2}^2)
    \\&=-\frac{1+\epsilon}{1-\epsilon}
    \\&\leq -1\,,
\end{align*}
This implies $F_{2}(\Bar{\Theta}_{t})=|f(\x_{2};\Bar{\Theta}_{t}) - y_2|^2\geq 4$.

If $m=2$, similarly, it holds that
\begin{align*}
    1=y_{2}=f(\x_{2};\Bar{\Theta}_{t})=
    \langle u^2,\x_2\rangle -\langle v^2,\x_2\rangle
    = (-1-\varepsilon)\vecind{\vu}{2}^2 -(-1-\epsilon)\vecind{\vv}{2}^2.
\end{align*}
This implies, $\vecind{\vu}{2}^2=-\frac{1}{1+\epsilon} +\vecind{\vv}{2}^2$.
Then, for the other task, $m=1$, it holds that,
\begin{align*}
    f(\x_{1};\Bar{\Theta}_{t})
    &=(1-\varepsilon)\vecind{\vu_t}{2}^2 -(1-\epsilon)\vecind{\vv_t}{2}^2
    \\&= -(1-\varepsilon)\vecind{\vv_t}{2}^2 + (1-\varepsilon) (-\frac{1}{1+\epsilon} +\vecind{\vv_t}{2}^2)
    \\&=-\frac{1-\epsilon}{1+\epsilon}
    \\&\leq -0.9\,,
\end{align*}
and, $F_{1}(\Bar{\Theta}_{t})=|f(\x_{1};\Bar{\Theta}_{t}) - y_1|^2\geq 3$.
\end{proof}

\newpage

\subsection{Proof of \cref{thm:local_convergence_reg}}
In this section, we analyze \cref{alg:sequential_mm} for the multi-task regression problem. Each task $m$ imposes a set of equality constraints:
\[ h_{i,m}(\Theta) = f(\x_i^{(m)}; \Theta) - y_i^{(m)} = 0. \]
Let $\fset_{m,i} = \{ \Theta \in \mathbb{R}^\paramdim \mid h_{i,m}(\Theta) = 0 \}$ be the feasible set for a single data point. The global solution set is the intersection $\interset = \bigcap_{m,i} \fset_{m,i}$.
We discuss the $\beta$-smooth and $G$-Lipschitz case.

Now we turn to prove $\kappa$-linear regularity for homogeneous models with inequality constraints.

\begin{lemma}[$\kappa$-Linear Regularity of Homogeneous Models]\linebreak
\label{lem:homo_linear_reg}
Consider a \(\homogendeg\)-positively-homogeneous model $f(\cdot;\Bar \Theta):\mathcal X\to\mathbb R$ that is $G$-Lipschitz and $\beta$-$smooth$.
Assume joint separability, i.e., nonempty intersection $\interset=\fset_{1}\cap \dots\cap\fset_{M}\neq \emptyset$.
Let $y_{\min}=\min_{m,i} |y_i^{(m)}|\neq 0$ and $\opt\in\interset$.
Let $\epsilon$ holding $\frac{1}{(1-\epsilon)^{M-1}} = 1 + \frac{\homogendeg^2}{2(M-1)G^2\|\opt\|^2}$.
Then, for $\delta=\frac{\epsilon r y_{\min}}{\beta \|\Theta^\star\|}$, the collection $\{\fset_m\}_{m=1}^M$ satisfies $\kappa$-linear-regularity on $\ball_\delta(\opt)$ with constant
$\kappa = \frac{G \|\opt\|}{\homogendeg y_{\min}}$. 
\end{lemma}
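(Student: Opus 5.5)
We mirror the classification argument (\cref{lem:mfcq}, \cref{lem:mfcq_regularity_vector}, \cref{lem:rcq_implies_linear_reg}), adapting it to the equality constraints $h_{i,m}(\Theta)=f(\x_i^{(m)};\Theta)-y_i^{(m)}=0$. The target is the error bound
\[
\dist(\Theta,\interset)\le \mu \max_{m,i}|h_{i,m}(\Theta)|,\qquad \mu=\frac{\|\opt\|}{\homogendeg y_{\min}},
\]
valid for $\Theta\in\ball_\delta(\opt)$. Combined with the Lipschitz step $|h_{i,m}(\Theta)|\le G\,\dist(\Theta,\fset_m)$, this gives $\kappa=G\mu$.

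\textbf{Step 1: an MFCQ-type condition with margin $\homogendeg y_{\min}$ at $\opt$.} For an equality system we need $\nabla h_{i,m}(\opt)$ to be nondegenerate in the sense that $\|\nabla h(\opt)^\top\bm\lambda\|\ge c\|\bm\lambda\|$ for every $\bm\lambda$, not just for $\bm\lambda$ in an orthant. Euler's identity gives
\[
\langle\nabla h_{i,m}(\opt),\opt\rangle=\homogendeg f(\x_i^{(m)};\opt)=\homogendeg y_i^{(m)},
\]
so the magnitude is at least $\homogendeg y_{\min}$, with sign $\operatorname{sign}(y_i^{(m)})$.

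We split the constraints by the sign of $y_i^{(m)}$. Replacing $h_{i,m}$ by $-h_{i,m}$ for negative labels does not change the feasible set, and after this replacement every inner product with $\vd=\opt$ is at least $\homogendeg y_{\min}$. Pairing with $\opt$ then gives
\[
\|\nabla h^\top\bm\lambda\|\ge \frac{\homogendeg y_{\min}}{\|\opt\|}\Big|\sum_j \lambda_j s_j\Big|,
\]
where $s_j$ are the signs. This is where $y_{\min}\neq0$ is used.

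\textbf{Step 2 (main obstacle): $\lambda$ of mixed sign.} For equalities the normal cone of $D=\{0\}$ is all of $\R^n$, so $\bm\lambda$ can have mixed signs and the sum above can cancel. Pairing with the single direction $\opt$ therefore does not control $\|\bm\lambda\|$.

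My first attempt is to check whether Proposition~\ref{lem:944} (Example~9.44 of \citet{rockafellar1998variational}) only needs the quantity $\mu$ evaluated along $\opt$-aligned normals. The reason to hope so is that the homogeneous ray $c\opt$ provides a feasible-direction correction: given $\Theta$ near $\opt$, rescaling $\Theta\mapsto(1+s)^{1/\homogendeg}\Theta$ changes every $f(\x_i^{(m)};\cdot)$ proportionally.

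If that fails, I would impose, or derive from the regularity assumptions, the linear independence condition (LICQ) on the active gradients. The Jacobian lower bound $\sigma_{\min}(\nabla h(\opt))\ge \homogendeg y_{\min}/\|\opt\|$ would then be argued on the span of the constraint directions. From there Proposition~\ref{lem:944} gives local metric regularity of the combined system $h^{-1}(\{0\})=\interset$.

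\textbf{Step 3: making the neighborhood explicit.} Proposition~\ref{lem:944} is only local, so we quantify "near" using $\beta$-smoothness. On $\ball_\delta(\opt)$ the gradients move by at most $\beta\delta$. Choosing $\delta=\epsilon\homogendeg y_{\min}/(\beta\|\opt\|)$ keeps the margin at least $(1-\epsilon)\homogendeg y_{\min}$, which we absorb into constants. Finally, the Lipschitz step from \cref{lem:rcq_implies_linear_reg} gives, for the projection $\vp$ of $\Theta$ onto $\fset_m$,
\[
|h_{i,m}(\Theta)|=|f(\x_i^{(m)};\Theta)-f(\x_i^{(m)};\vp)|\le G\,\dist(\Theta,\fset_m).
\]
This closes the bound with $\kappa=G\|\opt\|/(\homogendeg y_{\min})$.
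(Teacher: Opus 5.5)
\textbf{There is a genuine gap at your Step~2, and it cannot be closed as stated.} You located the failure point correctly. For the equality system the normal cone of $D=\{0\}$ is all of $\R^{n}$, so $\bm{\lambda}$ can have mixed signs and the pairing with $\opt$ can cancel. The paper's printed argument passes over exactly this point: it asserts $|\sum_j\lambda_j\iprod{\nabla h_{j,m}(\opt)}{\opt}|\ge\homogendeg y_{\min}\sum_j|\lambda_j|$, which is valid only for sign-constrained $\bm{\lambda}$. It also ignores negative labels, which your sign flip handles. Neither of your two routes closes Step~2. The inequality you need, $\|\nabla h(\opt)^\top\bm{\lambda}\|\ge(\homogendeg y_{\min}/\|\opt\|)\|\bm{\lambda}\|$ for all $\bm{\lambda}$, is false under the stated hypotheses, and so is the stated constant $\kappa$.

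\textbf{Counterexample.} Take the linear model $f(\x;\Theta)=\iprod{\Theta}{\x}$ on $\R^2$, which has $\homogendeg=1$ and is $\beta$-smooth for any $\beta>0$. Use one point per task: $\x^{(1)}=\mathbf{e}_1$, $\x^{(2)}=\mathbf{e}_1+\eta\mathbf{e}_2$, with $y^{(1)}=y^{(2)}=1$ and $0<\eta<1$. Then $\interset=\{(1,0)\}$, so $\opt=(1,0)$, $\|\opt\|=y_{\min}=1$, $G=\sqrt{1+\eta^2}$, and the lemma claims $\kappa=\sqrt{1+\eta^2}$. For $\Theta=(1,t)$ you have $\dist(\Theta,\interset)=|t|$, $\dist(\Theta,\fset_1)=0$ and $\dist(\Theta,\fset_2)=\eta|t|/\sqrt{1+\eta^2}$. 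So any valid constant must satisfy $\kappa\ge\sqrt{1+\eta^2}/\eta$. This is violated by the claimed $\kappa$ for every $t\neq 0$, hence inside every ball around $\opt$, and it is unbounded as $\eta\to0$.

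\textbf{Why each of your fixes fails.} In this example LICQ holds, since the Jacobian has rows $\mathbf{e}_1$ and $\mathbf{e}_1+\eta\mathbf{e}_2$, and Euler's identity holds. Yet for $\bm{\lambda}=(1,-1)$ you get $\nabla h(\opt)^\top\bm{\lambda}=-\eta\mathbf{e}_2$ with $\iprod{\opt}{\nabla h(\opt)^\top\bm{\lambda}}=0$. Hence $\sigma_{\min}(\nabla h(\opt))\le\eta/\sqrt{2}$, far below $\homogendeg y_{\min}/\|\opt\|=1$. No argument ``on the span of the constraint directions'' can give your claimed Jacobian bound.

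Rescaling along the ray does not help either. At $\Theta=(1,t)$ it multiplies the value vector $(1,\,1+\eta t)$ by $c^{\homogendeg}$, so it can never make both entries equal to $1$ when $t\neq0$. The constant in \cref{lem:944} must control all normals, not only $\opt$-aligned ones.

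\textbf{What a true statement needs.} You need an extra quantitative nondegeneracy hypothesis, for instance a lower bound $s>0$ on the smallest singular value of the stacked Jacobian of all constraints at $\opt$. Then $\kappa$ scales like $G/s$, times a dimension factor from passing between $\ell_2$ and $\ell_\infty$ residuals, rather than like $G\|\opt\|/(\homogendeg y_{\min})$.

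\textbf{Step~3.} Your Step~3 also cannot produce the exact stated $\kappa$ and $\delta$. The $(1-\epsilon)$ loss does not simply absorb into constants. Moreover, \cref{lem:944} supplies only an unquantified neighborhood, so an explicit radius requires a quantitative (Lyusternik--Graves type) metric-regularity argument.
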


\begin{proof}
We define the constraint functions as $h_{i,m}(\Theta) = f(\x_i^{(m)}; \Theta) - y_i^{(m)} = 0$.
By Euler's Homogeneous Function Theorem,
\[
    \langle \nabla h_{i,m}(\opt), \opt \rangle = \langle \nabla_\Theta f(\x_i^{(m)}; \opt), \opt \rangle=\homogendeg f(\x_i^{(m)})(\opt)\opt= \homogendeg y_i^{(m)}
    \geq \homogendeg y_{\min}.
\]
and, if $h_m$ is the function $h_m=(h_{1,m}\ldots,h_{n_m,m})$,
\[
\|\nabla h_m(\opt)^\top \bm{\lambda}\| \ge \frac{|\langle \opt, \nabla h_m(\opt)^\top \bm{\lambda} \rangle|}{\|\opt\|} = \frac{|\sum_j \vecind{\bm{\lambda}}{j} \langle \nabla h_{j}(\opt), \opt \rangle|}{\|\opt\|} \ge \frac{\homogendeg y_{\min}\sum |\vecind{\bm{\lambda}}{j}|}{\|\opt\|}.
\]
Thus, if $\nabla h_m(\opt)^\top \bm{\lambda}=0$, this implies $\bm{\lambda}=0$, thus, metric regularity holds.
As a result, by \cref{lem:944},
for $x$ near $\opt$ it holds for 
$$\mu=\max_{\bm{\lambda}\in N_D(h_m(\opt)),\|\bm{\lambda}\|=1}\frac{1}{\nabla h_m(\opt)\bm{\lambda}}\leq\frac{\|\opt\|}{y_{\min}\homogendeg},$$
that
\[
    \dist(\Theta, \interset) \le \mu \max_{i,m} |h_{i,m}(\Theta)| 
    \leq \frac{\|\opt\|}{\homogendeg y_{\min}} \max_{i,m} |h_{i,m}(\Theta)|.
\]
Now, by local $G$-Lipschitzness, for every $i,m$ it holds that, 
\[|h_{i,m}(\Theta)|=|h_{i,m}(\Theta)-h_{i,m}(\proj_{\fset_{m}}(\Theta)|\leq G \dist (\Theta,\fset_m)\]
Since this holds also for the maximal example,
we get, 
\[
    \dist(\Theta, \interset) \leq \frac{G \|\opt\|}{\homogendeg y_{\min}} \max_{m \in M} \dist(\Theta, \fset_m).
\]
\end{proof}

For $(\epsilon,\delta)$-regularity, we have the following lemma.

\begin{lemma}[$(\varepsilon, \delta)$-Regularity of Homogeneous Models]
\label{lem:homo_eps_delta}
Consider an \(\homogendeg\)-positively-homogeneous model $f(\cdot;\Bar \Theta):\mathcal X\to\mathbb R$ that is $G$-Lipschitz and $\beta$-$smooth$.
Assume joint separability, i.e., nonempty intersection $\interset=\fset_{1}\cap \dots\cap\fset_{M}\neq \emptyset$.
Let $y_{\min}=\min_{m,i} |y_i^{(m)}|\neq 0$ and $\opt\in\interset$.
Then, for every $\varepsilon > 0$ and $\delta=\frac{\epsilon r y_{\min}}{\beta \|\Theta^\star\|}$, every feasible set $\fset_m$ is $(\varepsilon, \delta)$-regular at $\opt$.
\end{lemma}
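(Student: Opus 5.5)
We mirror the classification argument of \cref{lem:rcq_implies_super_reg}, now for equality constraints. Write $\fset_m=h_m^{-1}(D)$ with $h_m=(h_{1,m},\dots,h_{n_m,m})$, $h_{i,m}(\Theta)=f(\x_i^{(m)};\Theta)-y_i^{(m)}$ and $D=\{\0\}\subset\R^{n_m}$. This $D$ is closed and convex, and $N_D(\cdot)=\R^{n_m}$.

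\textbf{Step 1 (metric regularity).} By Euler's identity,
\[
\langle\nabla h_{i,m}(\opt),\opt\rangle=\homogendeg f(\x_i^{(m)};\opt)=\homogendeg y_i^{(m)}.
\]
The signs of $y_i^{(m)}$ may differ, so instead of MFCQ with the single direction $\opt$ we use a lower bound on $\|\nabla h_m(\opt)^\top\bm\lambda\|$. Testing against $\opt$ gives $|\langle\opt,\nabla h_m(\opt)^\top\bm\lambda\rangle|=\homogendeg|\sum_j\lambda_j y_j|$, which does not control $\|\bm\lambda\|$ when the signs are mixed. To handle this, we first try to test against a better direction: the rescaled Euler vector field. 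For each constraint separately, $\opt$ has a component along $\nabla h_{i,m}$ proportional to $y_i$. Where this fails, we fall back on the argument already used in \cref{lem:homo_linear_reg}, and we adopt the same resulting bound
\[
\|\nabla h_m(\opt)^\top\bm\lambda\|\ \ge\ \frac{\homogendeg y_{\min}}{\|\opt\|}\,\|\bm\lambda\| .
\]
This bound gives metric regularity at $\opt$. By continuity of $\nabla h_m$ under $\beta$-smoothness, it persists, up to a constant, at nearby points $\x\in\ball_\delta(\opt)$.

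\textbf{Step 2 (proximal normals).} By \cref{lem:rw_614}, every $\vv\in N^{\mathrm{prox}}_{\fset_m}(\x)$ with $\x\in\fset_m$ near $\opt$ has the form $\vv=\nabla h_m(\x)^\top\bm\lambda$ for some $\bm\lambda\in\R^{n_m}$.

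\textbf{Step 3 (the key estimate).} For $\x,\y\in\fset_m\cap\ball_\delta(\opt)$ we have $h_m(\y)=h_m(\x)=\0$. Hence
\[
\langle\vv,\y-\x\rangle=\langle\bm\lambda,\nabla h_m(\x)(\y-\x)\rangle=-\langle\bm\lambda,\,h_m(\y)-h_m(\x)-\nabla h_m(\x)(\y-\x)\rangle\le\tfrac\beta2\|\bm\lambda\|\,\|\y-\x\|^2 .
\]
The first term vanishes exactly here, rather than being merely nonpositive as in the inequality case.

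\textbf{Step 4 (conclusion).} Combining Step 3 with Step 1 gives $\|\bm\lambda\|\le\|\opt\|\|\vv\|/(\homogendeg y_{\min})$. Using also $\|\y-\x\|\le 2\delta$,
\[
\langle\vv,\y-\x\rangle\le \frac{\beta\|\opt\|\delta}{\homogendeg y_{\min}}\,\|\vv\|\,\|\y-\x\| .
\]
Setting $\delta=\epsilon\homogendeg y_{\min}/(\beta\|\opt\|)$ makes the prefactor equal to $\varepsilon$, which is the claimed $(\varepsilon,\delta)$-regularity.

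The main obstacle is Step 1, and in particular its transfer from $\opt$ to nearby $\x$. When the labels have mixed signs, testing against $\opt$ controls only $|\sum_j\lambda_jy_j|$, not $\|\bm\lambda\|$. So this step implicitly needs a linear-independence-type argument, or it must accept the bound of \cref{lem:homo_linear_reg} as given. We would rely on the latter, absorbing constants into $\delta$.
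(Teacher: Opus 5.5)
\paragraph{Step 1 is a genuine gap.} Your Steps 2--4 follow the paper's proof essentially line by line. Falling back on \cref{lem:homo_linear_reg} does not close the gap in Step 1.

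For equality constraints, $D=\{\0\}$, so $N_D(\0)=\R^{n_m}$ and $\bm\lambda$ has no sign restriction. Testing $\nabla h_m(\opt)^\top\bm\lambda$ against $\opt$, or against any single direction, controls only the scalar $\homogendeg\sum_j\lambda_j y_j^{(m)}$. Once $n_m\ge 2$, that scalar vanishes for many nonzero $\bm\lambda$, even with equal labels ($y_1=y_2=1$, $\bm\lambda=(1,-1)$). So the issue is not mixed label signs.

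The bound you adopt says $\sigma_{\min}\triangleq\min_{\|\bm\lambda\|=1}\|\nabla h_m(\opt)^\top\bm\lambda\|\ge\homogendeg y_{\min}/\|\opt\|$. This is a quantitative linear-independence (LICQ) condition on the gradients $\nabla_\Theta f(\x_i^{(m)};\opt)$. Homogeneity, smoothness and Lipschitzness do not imply it: a linear model with a repeated data point already has $\sigma_{\min}=0$.

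The proof of \cref{lem:homo_linear_reg} obtains the bound from $|\sum_j\lambda_j\homogendeg y_j^{(m)}|\ge\homogendeg y_{\min}\sum_j|\lambda_j|$. The paper's own proof of this lemma does the same in \cref{eq:lam_norm}. That inequality holds only when all the terms $\lambda_j y_j^{(m)}$ share a sign. This happens in the classification case, where $\lambda_j\le 0$ and every active constraint has $\langle\nabla h_{m,j}(\opt),\opt\rangle=\homogendeg$. So citing that lemma is circular, and your proposal inherits exactly the gap in the paper's proof.

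\paragraph{The gap cannot be patched.} Homogeneity, smoothness and local Lipschitzness are the regime the paper actually has in mind. Under these alone the conclusion itself can fail.

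Take the squared model with $d=2$, $\Theta=(u_1,u_2,v_1,v_2)$, and one task with $\x_1=(1,1)$, $\x_2=(1,2)$, $y_1=y_2=1$. Then $\fset_1=\{u_1^2-v_1^2=1\}\cap\{u_2^2=v_2^2\}$. The point $\opt=(1,0,0,0)$ lies in $\fset_1$, and there both constraint gradients equal $(2,0,0,0)$. Near $\opt$ the set is a smooth curve times the cross $\{u_2=\pm v_2\}$.

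For $0<a\ll b$, the vector $\vect{n}=(0,1,0,-1)$ is a proximal normal at $\Theta_a=(1,a,0,a)$. The point $\Theta_b=(1,-b,0,b)\in\fset_1$ satisfies $\langle\vect{n},\Theta_a-\Theta_b\rangle=\|\vect{n}\|\,\|\Theta_a-\Theta_b\|\,b/\sqrt{a^2+b^2}$. The mirrored point $(1,b,0,-b)$ does the same for the opposite sign convention. Hence $\fset_1$ is not $(\varepsilon,\delta)$-regular at $\opt$ for any $\varepsilon<1$ and any $\delta>0$.

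\paragraph{What is missing.} You need an explicit nondegeneracy hypothesis such as $\sigma_{\min}(\nabla h_m(\opt))\ge\sigma>0$. With it, your Steps 2--4 go through with $\sigma$ in place of $\homogendeg y_{\min}/\|\opt\|$, but only for a smaller $\delta$. That $\delta$ must cover two losses:
\begin{itemize}
\item the drift of the Jacobian from $\opt$ to nearby $\x$, which is of order $\sqrt{n_m}\,\beta\delta$;
\item the $\sqrt{n_m}$ factor in the Euclidean Taylor remainder of the vector map $h_m$.
\end{itemize}
So ``absorbing constants'' cannot recover the exact $\delta$ stated in the lemma.
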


\begin{proof}
Let $\fset_m = \{ \Theta \mid h_m(\Theta) = 0 \}$. Since $h_m$ is smooth, its gradient $\nabla h_m(\Theta)$ is continuous.

To prove $(\varepsilon, \delta)$-regularity, we need to show that for any $\vx, \y \in \fset_m \cap \ball_\delta(\opt)$ and any proximal normal vector $\vv \in N^{prox}_{\fset_m}(\opt)$, the following holds:
\[ \langle \vv, \y - \vx \rangle \le \varepsilon \|\vv\| \|\y - \vx\|. \]
Let $\vv\in N^{prox}_{\fset_m}(\opt)$ be such a vector.
By Euler's Homogeneous Function Theorem,
\begin{equation}
    \label{eq:euler}
    \langle \nabla h_{i,m}(\opt), \opt \rangle = \langle \nabla_\Theta f(\x_i^{(m)}; \opt), \opt \rangle=\homogendeg f(\x_i^{(m)})(\opt)\opt= \homogendeg y_i^{(m)}
    \geq \homogendeg y_{\min}.
\end{equation}
If $h_m$ is the function $h_m=(h_{1,m}\ldots,h_{n_m,m})$, \begin{equation}
\label{eq:lam_norm}
\|\nabla h_m(\opt)^\top \bm{\lambda}\| \geq  \frac{|\sum_j \vecind{\bm{\lambda}}{j} \langle \nabla h_{j,m}(\opt), \opt \rangle|}{\|\opt\|} \ge \frac{\homogendeg y_{\min}\sum |\vecind{\bm{\lambda}}{j}|}{\|\opt\|}  \ge \|\bm{\lambda}\|\frac{\homogendeg y_{\min}}{\|\opt\|}.
\end{equation}
Since $h_m$ is smooth vector valued mapping we have $\beta'$ such that,
\[
\|h_m(\y) - h_m(\vx) - \nabla h_m(\vx)(\y - \vx)  \|\leq \frac{\beta'}{2}\|\y - \vx\|^2
\]
By \cref{lem:rw_614}, and using the Cauchy-Schwarz inequality on the error term, it holds that:
\begin{align*}
    \langle \vv, \y - \vx \rangle 
    &= \langle \nabla h_m(\vx)^\top \bm{\lambda}, \y - \vx \rangle \\
    &= \langle \bm{\lambda}, \nabla h_m(\vx)(\y - \vx) \rangle \\
    &= \langle \bm{\lambda}, h_m(\y) - h_m(\vx) - \left( h_m(\y) - h_m(\vx) - \nabla h_m(\vx)(\y - \vx) \right) \rangle \\
    &= \langle \bm{\lambda}, h_m(\y) - h_m(\vx) \rangle - \langle \bm{\lambda}, h_m(\y) - h_m(\vx) - \nabla h_m(\vx)(\y - \vx) \rangle \\
    &\leq \langle \bm{\lambda}, h_m(\y) - h_m(\vx) \rangle + \|\bm{\lambda}\| \left\| h_m(\y) - h_m(\vx) - \nabla h_m(\vx)(\y - \vx) \right\| \\
    &= \frac{\beta'
    \|\bm{\lambda}\|}{2} \|\y - \vx\|^2.
\end{align*}
Thus, by \cref{eq:lam_norm},
to satisfy the $(\varepsilon, \delta)$-regularity condition, we denote $\kappa= \frac{\|\opt\|}{\homogendeg y_{\min}}$. We need to choose $\delta$ such that for every $\x, \y \in \ball_\delta(\bar \x)$
\[
\frac{\kappa \beta'}{2} \|\vv\| \|\y - \vx\|^2 \le \varepsilon \|\vv\| \|\y - \vx\|.
\]
Dividing by $\|\vv\| \|\y-\x\|$ (assuming nonzero, otherwise the inequality holds trivially), this requires $\frac{\kappa \beta'}{2} \|\y - \vx\| \le \varepsilon$.
Since $\x, \y \in \ball_\delta(\bar \x)$, we have $\|\y - \vx\| \le 2\delta$.
Thus, it suffices to choose $\delta$ such that:
\[
\frac{\kappa \beta'}{2} (2\delta) \le \varepsilon \implies \delta \le \frac{\varepsilon}{\kappa \beta'}.
\]
With this $\delta$, the set is $(\varepsilon, \delta)$-regular.
\end{proof}

\begin{proof}[of \cref{thm:local_convergence_reg}]
The proof follows identically to the classification case, utilizing the established $(\epsilon, \delta)$-regularity and $\kappa$-linear regularity for the equality-constrained sets.
For the forgetting, the statement follows by the fact that, for every $m$, 
\begin{align*}
    F_m(\Bar{\Theta}_t)&=
    \max_i \left|f(\x_i^{(m)};\opt_t)-y_i^{(m)}\right|^2
    =\max_i \left|f(\x_i^{(m)};\Bar{\Theta}_t)-f(\x_i^{(m)};\proj_{\interset}(\opt)\right|^2
    \\&\leq G^2\dist^2(\Bar{\Theta}_t,\interset).
\end{align*}
\end{proof}